\pdfoutput=1
\documentclass[12pt,letterpaper]{article}
\usepackage[margin=1in]{geometry}
\usepackage{enumitem}
\usepackage{amssymb,amsmath,amsthm}
\usepackage{bm,dsfont}
\usepackage{graphicx}
\usepackage{caption}
\usepackage{subcaption}
\usepackage{multirow}
\usepackage{algorithm}
\usepackage{algpseudocode}
\usepackage{url}
\usepackage{natbib}
\usepackage{hyperref}

\newcommand{\argmin}{\operatorname*{arg \ min}}
\newcommand{\EE}{{\mathbb E}}
\newcommand{\RR}{\mathbb{R}}

\newtheorem{theorem}{Theorem}
\newtheorem{proposition}{Proposition}
\newtheorem{corollary}{Corollary}
\newtheorem{remark}{Remark}
\newtheorem{lemma}{Lemma}

\newtheorem{assumption}{Assumption}

\newtheorem{assumptionalt}{Assumption}[assumption]
\newenvironment{assumptionp}[1]{%
  \renewcommand\theassumptionalt{#1}%
  \assumptionalt
}{\endassumptionalt}

\def\spacingset#1{\renewcommand{\baselinestretch}%
{#1}\small\normalsize}

\newcommand{\suppref}[1]{\ref{#1}}

\newcommand{\mainref}[1]{\ref{#1}}
\newcommand{\maineqref}[1]{\eqref{#1}}

\begin{document}

\title{\bf Black-Box Knowledge Transfer across Distinct Feature Sets}
\author{Oh-Ran Kwon\\
  Department of Statistics, The Ohio State University
  \and
  Daeyoung Ham\\
  Department of Statistics and Data Science, University of Texas at San Antonio}
\date{}
\maketitle

\begin{abstract}
Pre-trained black-box predictive functions encode knowledge distilled from massive datasets and extensive computation.
However, when the available input features differ from those the black box expects, direct use is infeasible.
We introduce a method for transferring predictive knowledge from the black box to a new, heterogeneous input space.
Our approach decomposes the target regression function into a transferable component, which the black box can inform, and a non-transferable component, which captures information unique to the new space.
We propose a two-step neural network procedure, estimating the
transferable component from abundant unlabeled feature pairs that bridge
the two input spaces and the non-transferable component from limited
labels. 
We derive prediction risk bounds that improve on those of a non-transfer
alternative when the non-transferable component is small or smooth, and
the procedure adapts to either case.
Under additional conditions, the worst-case risk of our estimator is of strictly smaller polynomial order than the minimax risk of estimation from the labeled data alone.
We extend the framework to multiple black boxes, each on its own input space, and show that aggregation can reduce prediction error relative to the best single black box.
Simulated and real data demonstrate the practical value of the method.
\end{abstract}

\noindent{\it Keywords:} heterogeneous transfer learning; black-box prediction; unlabeled paired data; nonparametric regression; deep neural networks.

\spacingset{1.3}

\section{Introduction}\label{sec:intro}

Modern pre-trained prediction functions achieve remarkable performance. 
The data and model structure used to train them, however, are often proprietary. Such functions are therefore used as ``black boxes''. One can obtain their predictions at given inputs, but nothing more. Nevertheless, a black box encodes valuable predictive knowledge distilled from massive datasets and extensive computation.

To apply a black box, inputs must lie in its input space. In practice, however, the same underlying quantity is often measured by different instruments that differ in resolution or representation. Waymo's driving system, for instance, maps sensor measurements to driving actions \citep{bansal2018chauffeurnet}. Its sensor suite is regularly upgraded \citep{jeyachandran2024waymo}, and measurements from the new sensors no longer match the inputs on which the system was trained. Similarly, a clinical risk score built on one electronic health record system is often unusable at a hospital whose records encode different variables in different formats \citep{Huang2020}. In both cases, the available target features $X \in \mathcal X$ differ from the source features $Z \in \mathcal Z\ (\neq \mathcal X)$ that the black box requires, and so direct use is infeasible.

A direct approach is to 
learn a new prediction function from scratch, using labeled pairs $(X,Y)$ collected on the new features. However, it discards the knowledge already encoded in the black box.
A more practical approach takes advantage of what is readily available in current
data collection environments. Unlabeled paired features $(X,Z)$ are often machine-recorded and require no human annotation, so they are cheap and abundant. In autonomous driving, old-and-new sensor pairs can be generated by simulating driving scenarios \citep{fang2024lidar}. In healthcare, electronic health records capture many variables across large patient populations \citep{Ehrenstein2019}. Such pairs can bridge the two input domains and open a way to repurpose the predictive knowledge in the black box.

This raises a fundamental statistical question: how can predictive knowledge in a black box be transferred to a different input domain? Figure~\ref{fig:problem-setup-clean} summarizes the problem.

\begin{figure}[h!]

\setlength{\abovedisplayskip}{4pt}
\setlength{\belowdisplayskip}{4pt}
\centering
\fbox{%
\begin{minipage}{1\linewidth}
\textbf{Goal:} Estimate the target regression function $g:\mathcal{X}\to\mathbb{R}$, where $g(X)=\mathbb{E}[Y\mid X]$.

\vspace{0.4em}
\textbf{Given:}
\begin{itemize}[itemsep=1pt,topsep=1pt]
    \item A black box $\widehat{f}:\mathcal{Z}\to\mathbb{R}$, with $\mathcal{Z} \neq \mathcal{X}$.
    \item Abundant unlabeled pairs $\mathcal D_A = \{(Z_j, X_{j})\}_{j=1}^{n_A}$.
    \item Limited labeled pairs $\mathcal D_T = \{(X_k, Y_k)\}_{k=1}^{n_T}$.
\end{itemize}
\end{minipage}
}
\caption{Problem setup for transfer from a black box.}
\label{fig:problem-setup-clean}
\end{figure}

To the best of our knowledge, existing methods are not well suited to the two defining challenges of our setting: mismatched source and target feature spaces, and a black box whose training data and internals are inaccessible. Further, rigorous statistical guarantees are largely absent. We detail the related literature and these gaps in Section~\ref{sec:related}.

One may regard imputation as the most natural attempt. It regresses $Z$ on $X$ over $\mathcal D_A$ and plugs the imputed features into the black box (possibly with a further correction step). This approach, however, requires the black box to be stable under perturbations of its inputs. Modern black boxes are highly nonlinear, so even small imputation errors can produce large deviations in the output. (We elaborate on this point and compare imputation with our method in Section~\ref{sec:imputation_c}.)

This paper takes a different route. We transfer the black box's prediction itself into the target feature space, approximating $\widehat f(Z)$
by a function of $X$. The starting point is an elementary decomposition of the target regression function:
\begin{equation}\label{eq:intro-decomp-clean}
\underbrace{\mathbb E[Y \mid X]}_{=:\, g(X)}
= \underbrace{\mathbb E[\widehat f(Z) \mid X]}_{=:\, h(X)}
+ \underbrace{\mathbb E[Y - \widehat f(Z) \mid X]}_{=:\, \delta(X)}.
\end{equation}
The transferable component $h$ is the part of the regression $g$ that can be explained by the black box.
The non-transferable component $\delta$ is the new predictive information in $X$ that the black box does not capture.

This decomposition motivates a two-step estimation procedure  (Section~\ref{sec:estimation}). 
Since $h$ is a regression of $\widehat f(Z)$ on $X$, it can be estimated from the abundant $\mathcal D_A$ alone. Only $\delta$ requires the limited $\mathcal D_T$. Both steps use deep ReLU (rectified linear unit) networks, which approximate a broad class of functions \citep{Schmidt2020RELU,kohler2021rate}.

Our theory provides prediction risk bounds for the two-step estimator and discusses when it improves on non-transfer learning (Section~\ref{sec:theory_c}). We find that transfer is beneficial in two regimes: (i) when the black box is accurate, so that $\|\delta\|_2$ is small and little is left to estimate; and (ii) when the black box explains the complex part of $g$, so that $\delta$ is smooth and can be learned from few labels. Our estimator adaptively attains the favorable regime. Under certain conditions, the benefit of transfer is fundamental in the sense that the worst-case risk of our estimator is of strictly smaller polynomial order than the minimax risk of any estimator that uses the labeled data alone (Theorem~\ref{thm:minimax_c}).

Finally, we extend the framework to multiple black boxes, each defined on its own feature space (Section~\ref{sec:multiple}). We apply the two-step procedure to each black box and combine the resulting predictors into an ensemble. 
We show that the ensemble converges at the rate of the best single black box, without requiring prior knowledge of which it is. Moreover, the ensemble can strictly improve on it when the black boxes are comparably accurate but distinct.

\subsection{Related literature}\label{sec:related}

Our problem can be viewed as transfer learning, which has been studied extensively in recent years. Transfer learning broadly aims to improve performance on a target domain by leveraging information from a source domain. Analogously, we leverage a black box defined on the source domain to construct a predictive function on the target space.

Most attention has been devoted to homogeneous transfer learning, where the source and target spaces coincide. Their models are assumed to differ (i.e., posterior drift) only in small or structured ways. This setting has been studied for high-dimensional linear regression \citep{li2022transfer}, generalized linear models \citep{tian2023transfer,li2024estimation}, multi-task linear regression \citep{zhao2026smart}, nonparametric regression \citep{cai2024nonparametric}, nonparametric classification \citep{cai2021transfer,reeve2021adaptive,auddy2025minimax}, quantile regression \citep{bai2024transfer}, graphical models \citep{li2023transfer,zhao2026trans}, and contextual bandits \citep{cai2024transfer}.
Several of these works additionally allow covariate shift, where the input space is the same but the distributions differ \citep{he2024transfusion}.

Heterogeneous transfer learning, where the feature spaces differ, has received limited attention in statistics. It has been studied more extensively in machine learning, primarily through algorithmic developments. See \citet{day2017survey} and \citet{bao2023recent} for comprehensive reviews.
Existing approaches are broadly feature-based or model-based. 
Feature-based methods make the source and target features comparable, for example by mapping both into a common space \citep{duan2012learning,wang2022cross}, transforming one space into the other \citep{kulis2011you}, or bridging them through intermediate domains \citep{tan2015transitive}.
Model-based methods transfer at the parameter level, either weakly sharing parameters between source and target models \citep{shu2015weakly} or transforming those of a trained source model for the target \citep{ye2020heterogeneous}. 
Both typically require more than a black box, whether the source training data or the source model's internals, which proprietary constraints withhold. Our method requires neither.

Rigorous statistical guarantees for heterogeneous transfer learning are also largely absent. An exception is \citet{chang2024heterogeneous}, where the target features form a subset of the source features and both regressions are linear. In contrast, we impose no structural relationship between the two feature spaces and allow both regressions to be highly nonlinear.

Beyond transfer learning, our work connects to prediction-powered inference \citep{angelopoulos2023prediction} and
follow-up work \citep{angelopoulos2023ppi,motwani2023revisiting,zrnic2024cross}, which use black-box predictions on unlabeled data to sharpen inference for finite-dimensional 
parameters. Recent extensions estimate conditional mean functions in parametric settings \citep{shan2025sada} or in reproducing kernel Hilbert spaces \citep{sui2026prediction}. 
Our framework instead targets a broader nonparametric function class, and, more fundamentally, our black box operates on a different feature space, so it cannot be evaluated on the target data. Nevertheless, both approaches conceptually treat black-box outputs as proxies for the response and refine the target quantity using limited labeled data.

Our work also relates to semi-supervised learning \citep{van2020survey}, which, like ours, uses abundant unlabeled data to estimate a target more efficiently than from the labels alone. Its statistical benefits have been studied for linear regression \citep{chakrabortty2018efficient,azriel2022semi}, M-estimation \citep{song2024general}, and high-dimensional settings \citep{zhang2022high,deng2024optimal}. The difference is in the unlabeled data and its use. In semi-supervised learning, it is target covariates within a single feature space, used to exploit its distribution. In ours, it is pairs of source and target features, used to bridge two feature spaces.

\subsection{Organization}
The rest of the paper is organized as follows. Section~\ref{sec:setup} formalizes the problem and introduces measures of transferability. Section~\ref{sec:estimation} presents the two-step estimator. Section~\ref{sec:theory_c} establishes prediction risk bounds and discusses when transfer improves on non-transfer and imputation-based alternatives. Section~\ref{sec:multiple} extends both the estimator and the theory to multiple black boxes. Sections~\ref{sec:simulation} and~\ref{sec:real_data} apply our method to simulated and real data, and Section~\ref{sec:discussion} concludes. Proofs and additional results are deferred to the Supplementary Material.

\section{Transfer learning from a black box}\label{sec:setup}

Let $Y \in \RR$ be the response, and let $\widehat f: \mathcal{Z} \to \RR$ be a fixed, pre-trained (black-box) prediction function whose input is the source feature vector $Z \in \mathcal{Z} $.
Writing $Y$ in terms of the black box gives
\begin{equation}\label{eq:source}
Y = \widehat f(Z) + \epsilon_Z,
\end{equation}
where $\epsilon_Z$ is the part of the response left unexplained by the black box. The noise $\epsilon_Z$ need not have zero mean and may depend on $Z$ arbitrarily.

Let $X \in \mathcal{X}$ be the target feature vector, where $\mathcal{X} \neq \mathcal{Z}$.
In the target domain, the regression model can be written as 
\begin{equation}\label{eq:target}
    Y = g(X) + \epsilon_X, \qquad g(x) := \EE[Y \mid X=x].
\end{equation}
Our goal is to estimate the target regression function $g$ with the help of $\widehat f$.

\subsection{Decomposition: transferable and non-transferable}

To see which part of $g(X)$ the black box can help estimate, replace $Y$ in the definition of $g$ in \eqref{eq:target} with its black-box representation \eqref{eq:source}. This yields the decomposition previewed in \eqref{eq:intro-decomp-clean}:
\begin{align}\label{eq:decom}
    Y
    &= \underbrace{\EE[\widehat f(Z) \mid X] + \EE[\epsilon_Z\mid X]}_{= g(X)} + \epsilon_X.
\end{align}

This decomposes $g(X)$ into what is reconstructible from $\widehat f(Z)$ and what is not. The first term, $h(X) := \EE[\widehat f(Z) \mid X]$, is the {transferable component}: it collects all the information in $\widehat f(Z)$ that carries over to the target domain. The second term, $\delta(X) := \EE[\epsilon_Z \mid X]$, is the {non-transferable component}: the new information in $g(X)$ revealed by the shift from $Z$ to $X$, which the black box does not contain and hence cannot transfer.

\subsection{Quantifying transferability}\label{sec:transferability}

The efficacy of transfer rests on the non-transferable component $\delta$, since the transferable component $h$ can be estimated accurately from the abundant unlabeled pairs $\mathcal D_A$ (see Section~\ref{sec:estimation} for the algorithm and Section~\ref{sec:theory_c} for the theory).
We quantify this efficacy through the magnitude and the smoothness of $\delta$.

First, transfer is advantageous when $\delta$ has small magnitude, so that little is left to estimate. We measure this magnitude by the squared norm
\[
    \|\delta\|_2^2 := \EE[\delta^2(X)].
\]
The magnitude is intrinsically limited by the accuracy of the black box. Jensen's inequality gives $\|\delta\|_2^2 \le \EE[\epsilon_Z^2]$, so when the black box predicts $Y$ well, $\|\delta\|_2$ is small and $h$ approximates $g$ closely.
    
Second, transfer is advantageous when $\delta$ is smooth, so that it can be estimated from fewer observations than $g$ itself. We measure this smoothness through H\"older classes. For $\beta>0$, a rectangle $D\subset\RR^r$, and $K>0$, define
\begin{equation}\label{eq:holder_class}
\begin{split}
\mathcal C_r^\beta(D,K)
=\Bigg\{f:D\to\mathbb R:&\;
 \sum_{\alpha:|\alpha|<\beta}\|\partial^\alpha f\|_\infty+
 \sum_{\alpha:|\alpha|=\lceil\beta\rceil-1}
 \sup_{x,y\in D,\ x\ne y}
 \frac{|\partial^\alpha f(x)-\partial^\alpha f(y)|}
 {\|x-y\|_\infty^{\beta-\lceil\beta\rceil+1}}
 \le K\Bigg\}.
\end{split}
\end{equation}
A larger $\beta$ means a smoother function. While the H\"older exponent $\beta$ provides our basic measure of smoothness, our theory does not require $\delta$ itself to be H\"older. Rather, it suffices that $\delta$ be a composition of H\"older functions, a more general class defined formally in Section~\ref{sec:theory_c}.

In summary, transfer stands to improve the estimation of $g$ when $\|\delta\|_2$ is small (the black box is accurate) or $\delta$ is smooth (the black box explains the complex part of $g$).
The next section introduces an estimator that benefits from either aspect, without prior knowledge of which is more favorable.

\section{Estimation}\label{sec:estimation}

The decomposition \eqref{eq:decom} motivates how each component should be estimated, and from which data.
The transferable component $h$ is a regression of the black-box output on $X$, so it can be estimated from the unlabeled pairs $\mathcal D_A=\{(Z_j,X_j)\}_{j=1}^{n_A}$. 
The non-transferable component $\delta$ reflects the new prediction information beyond the black box, so it requires the labeled pairs $\mathcal D_T=\{(X_k,Y_k)\}_{k=1}^{n_T}$. 
Accordingly, we construct a two-step estimator, summarized in Algorithm~\ref{alg:two-step}. Details of each step are explained in Section~\ref{sec:two_step}.

\begin{algorithm}[t]

\setlength{\abovedisplayskip}{4pt}
\setlength{\belowdisplayskip}{4pt}
\caption{Two-step black-box transfer estimator}
\label{alg:two-step}
\begin{algorithmic}[1]
\State \textbf{Input:} Unlabeled paired data $\mathcal D_A$, labeled data $\mathcal D_T$, and black-box predictor $\widehat f$.

\State \textbf{Step 1 (transferable component):} Using $\mathcal D_A$, compute
\begin{equation}\label{eq:step1}
\widehat h
=
\mathop{\arg\min}_{h\in\mathcal F(L_h,\mathbf p_h)}
\frac{1}{n_A}\sum_{j=1}^{n_A}
\bigl\{h(X_j)-\widehat f(Z_j)\bigr\}^2.
\end{equation}

\State \textbf{Step 2 (non-transferable component):} Split $\mathcal D_T=\mathcal D_{\mathrm{tr}}\cup\mathcal D_{\mathrm{val}}$. 
Using $\mathcal D_{\mathrm{tr}}$, compute
\[
\widetilde\delta
=
\mathop{\arg\min}_{\delta\in\mathcal F(L_\delta,\mathbf p_\delta)}
\frac{1}{|\mathcal D_{\mathrm{tr}}|}
\sum_{(X_k,Y_k)\in\mathcal D_{\mathrm{tr}}}
\bigl\{Y_k-\widehat h(X_k)-\delta(X_k)\bigr\}^2,
\]
and use $\mathcal D_{\mathrm{val}}$ to choose
\[
\widehat\lambda
\in
\mathop{\arg\min}_{\lambda\in\{0,1\}}
\frac{1}{|\mathcal D_{\mathrm{val}}|}
\sum_{(X_k,Y_k)\in\mathcal D_{\mathrm{val}}}
\bigl\{Y_k-\widehat h(X_k)-\lambda\widetilde\delta(X_k)\bigr\}^2.
\]

\State \textbf{Output:} The final estimator is
$\widehat g_{\widehat\lambda}(x)=\widehat h(x)+\widehat\lambda\,\widetilde\delta(x)$.

\end{algorithmic}
\end{algorithm}

\subsection{Network function classes}\label{sec:network_class}

We estimate both $h$ and $\delta$ with deep ReLU networks, which can flexibly approximate a broad range of functions, from smooth to highly irregular \citep{Schmidt2020RELU,kohler2021rate}.

For $\mathbf{v}\in\mathbb{R}^r$, define the componentwise shifted ReLU $\sigma_{\mathbf{v}}(y)=(\sigma(y_1-v_1),\ldots,\sigma(y_r-v_r))^\top$ with $\sigma(t)=\max(t,0)$.
A ReLU network of depth $L$ and width vector $\mathbf{p}=(p_0,\ldots,p_{L+1})\in\mathbb{N}^{L+2}$ is a function of the form
\begin{equation}\label{eq:network_def}
f(x)=B_L\,\sigma_{\mathbf{v}_L}\!\bigl(B_{L-1}\,\sigma_{\mathbf{v}_{L-1}}\!\bigl(\cdots\sigma_{\mathbf{v}_1}(B_0x)\cdots\bigr)\bigr),
\end{equation}
where $B_i\in\mathbb{R}^{p_{i+1}\times p_i}$ are weight matrices and $\mathbf{v}_j\in\mathbb{R}^{p_j}$ are shift vectors.
We write $\mathcal F(L,\mathbf p)$ for the class of all networks of the form~\eqref{eq:network_def}, and use separate classes $\mathcal F(L_h,\mathbf p_h)$ and $\mathcal F(L_\delta,\mathbf p_\delta)$ to estimate $h$ and $\delta$, respectively.

\subsection{Two-step procedure with adaptive selection}\label{sec:two_step}

We describe each step of Algorithm~\ref{alg:two-step}.
In the first step (line 2), we treat the black-box predictions $\widehat f(Z_j)$ as pseudo-responses and regress them on $X_j$ over $\mathcal F(L_h,\mathbf p_h)$, using $\mathcal D_A$.
This yields the estimator $\widehat h$ of the transferable component.

In the second step (line 3), we estimate the non-transferable component from the labeled data.
We split $\mathcal D_T$ into a training set $\mathcal D_{\mathrm{tr}}$ and a validation set $\mathcal D_{\mathrm{val}}$, and fit the residuals $Y_k-\widehat h(X_k)$ on the training set with a network $\widetilde\delta\in\mathcal F(L_\delta,\mathbf p_\delta)$.

The role of the validation set is as follows.
As discussed in Section~\ref{sec:transferability}, transfer is advantageous when $\delta$ has small magnitude or when $\delta$ is smooth, but which case holds is unknown in practice.
If $\|\delta\|_2$ is small, the residuals are mostly noise. Fitting them from the limited $n_T$ observations typically adds variance without reducing bias, and $\widetilde\delta$ is better discarded.
If $\delta$ is smooth, $\widetilde\delta$ typically removes more bias than the variance it adds, even with small $n_T$, and is worth keeping.
This trade-off can be viewed as a choice of whether to use the estimate $\widetilde\delta$. 
The validation set makes this choice among the candidates $\widehat g_\lambda=\widehat h+\lambda\,\widetilde\delta$, $\lambda\in\{0,1\}$, and the selected $\widehat\lambda$ defines the final estimator $\widehat g_{\widehat\lambda}$ (line 4).

In practice, the architectures $(L_h,\mathbf p_h)$ and $(L_\delta,\mathbf p_\delta)$ are tuning parameters, and they can be chosen by validation. Candidates for $(L_h,\mathbf p_h)$ are evaluated on a held-out portion of $\mathcal D_A$, and candidates for $(L_\delta,\mathbf p_\delta)$ are evaluated jointly with $\lambda$ on $\mathcal D_{\mathrm{val}}$, since the two selections minimize the same least squares.

\section{Theory}\label{sec:theory_c}

We study when our estimator is beneficial in terms of prediction risk. For an estimate \(\widehat g\) of \(g\), the prediction risk is
$\mathbb E R(\widehat g,g)= \mathbb E \int \{\widehat g(x)-g(x)\}^2\,dP_X(x)$, where \(P_X\) is the law of \(X\) and the expectation is taken over the samples used to construct \(\widehat g\). 
Section~\ref{sec:assumptions_c} states the assumptions. Section~\ref{sec:bounds_c} bounds the prediction risk of our estimator. Section~\ref{sec:comparison_c} discusses when transfer helps, relative to non-transfer and imputation-based learning.

\subsection{Assumptions}\label{sec:assumptions_c}

Throughout, the black box \(\widehat f\) is a fixed function (possibly trained on data independent of $\mathcal D_A$ and $\mathcal D_T$). We take \(\mathcal X=[0,1]^{p_X}\) with \(p_X\) fixed; a bounded rectangular domain can be brought to this form by rescaling each coordinate.

We introduce three assumptions. Assumption~\ref{assm:data_c} is about the stochastic conditions on the data $\mathcal D_A$ and $\mathcal D_T$. 

\begin{assumptionp}{A}\label{assm:data_c}
\(\mathcal D_A\) consists of i.i.d.\ copies of \((X,Z)\) and \(\mathcal D_T\) of i.i.d.\ copies of \((X,Y)\). The two samples are independent, with the same law \(P_X\) for \(X\). The noises
\(\xi=\widehat f(Z)-\mathbb E[\widehat f(Z) \mid X] \) and \(\epsilon_X = Y - \mathbb E[Y \mid X] \) from \eqref{eq:target} are conditionally
sub-Gaussian, i.e.,
there exist constants \(\sigma_A,\sigma_T\in (0,\infty) \) such that
$
\mathbb E\{\exp(t\xi)\mid X\}\le \exp(\sigma_A^2t^2/2)
$
and
$
\mathbb E\{\exp(t\epsilon_X)\mid X\}\le \exp(\sigma_T^2t^2/2)
$
for $t\in\mathbb R$ 
almost surely.
\end{assumptionp}
It implies that the conditional variances of the noises are uniformly bounded, while the variances and laws themselves may still vary with \(X\). This is weaker than the independent Gaussian errors in sparse ReLU regression theory \citep{Schmidt2020RELU}, and similar conditions appear in recent deep neural network theory \citep{FanGuZhou2024,meng2026inference}. Notably, the assumption does not place smoothness restriction on \(\widehat f\) itself, which may be highly non-linear and irregular.

Assumption~\ref{assm:structure_c} is about the functional classes in which \(h\) and \(\delta\) lie.
For \(q\in\mathbb N_0\), \(d=(d_0,\ldots,d_{q+1})\in\mathbb N^{q+2}\), \(t=(t_0,\ldots,t_q)\) with \(t_i\in\{1,\ldots,d_i\}\), \(\beta=(\beta_0,\ldots,\beta_q)\in(0,\infty)^{q+1}\), and \(K\ge1\), 
define a compositional H\"older class as
\[
\begin{aligned}
\mathcal G(q,d,t,\beta,K)
={}&\Bigl\{f=g_q\circ\cdots\circ g_0:\;
g_i=(g_{ij})_{j=1}^{d_{i+1}}:
[a_i,b_i]^{d_i}\to[a_{i+1},b_{i+1}]^{d_{i+1}},\\
&\quad a_0=0,\ b_0=1,\ |a_i|\vee|b_i|\le K,\
g_{ij}(x)=\widetilde g_{ij}(x_{S_{ij}}),\\
&\quad S_{ij}\subset\{1,\ldots,d_i\},\ |S_{ij}|=t_i,\
\widetilde g_{ij}\in
\mathcal C_{t_i}^{\beta_i}([a_i,b_i]^{t_i},K)
\Bigr\},
\end{aligned}
\]
where \(\mathcal C_r^\beta(D,K)\) is the H\"older class defined in \eqref{eq:holder_class}, and each coordinate function \(g_{ij}\) of \(g_i\) depends on at most \(t_i\) of its \(d_i\) arguments.

\begin{assumptionp}{B}\label{assm:structure_c}
The components \(h\) and \(\delta\) in \eqref{eq:decom} satisfy 
$
h\in\mathcal G(q_h,d^h,t^h,\beta^h,K_h)
$
and
$
\delta\in\mathcal G(q_\delta,d^\delta,t^\delta,\beta^\delta,K_\delta).
$
\end{assumptionp}
In both classes, \(d_0^h=d_0^\delta=p_X\) and
\(d_{q_h+1}^h=d_{q_\delta+1}^\delta=1\), since \(h\) and \(\delta\)
map \(\mathcal X\) to \(\mathbb R\).
The class \(\mathcal G\) was introduced by \citet{Schmidt2020RELU} and has since been used for quantile regression \citep{padilla2022quantile}, density estimation \citep{BosSchmidtHieber2024}, interval-censored survival regression \citep{DuWuTongZhao2024}, and generalized regression \citep{yara2026nonparametric,meng2026inference}. Imposing the structure separately on \(h\) and \(\delta\) leaves their intrinsic dimensions and effective smoothness free to differ.

These quantities determine how fast each component can be estimated. For \(h\) in Assumption~\ref{assm:structure_c}, define
\begin{equation}
\phi_n^h=\max_{0\le i\le q_h}n^{-2\beta_i^{h,*}/(2\beta_i^{h,*}+t_i^h)},
\qquad\text{where}\quad
\beta_i^{h,*}=\beta_i^h\prod_{\ell=i+1}^{q_h}(\beta_\ell^h\wedge1).
\label{eq:rate_c}
\end{equation}
Up to logarithmic factors, $\phi_n^h$ coincides with the upper bound of \citet{Schmidt2020RELU} for estimating $h$ from $n$ observations.
Each term in the maximum is the classical nonparametric rate for estimating a function of \(t_i^h\) variables with smoothness \(\beta_i^{h,*}\) \citep{wasserman2006all}. 
The rate depends on the intrinsic dimension \(t_i^h\) of each layer, which may be much smaller than its input dimension \(d_i^h\). The effective smoothness \(\beta_i^{h,*}\) indicates how outer layers with H\"older exponents below one attenuate the smoothness available at layer \(i\). The maximum in \eqref{eq:rate_c} sets \(\phi_n^h\) to the rate of the slowest layer. 
The rate of \(\delta\) at sample size \(n\) is defined analogously as
\begin{equation}
\phi_n^\delta=\max_{0\le i\le q_\delta}n^{-2\beta_i^{\delta,*}/(2\beta_i^{\delta,*}+t_i^\delta)},
\qquad\text{where}\quad
\beta_i^{\delta,*}=\beta_i^\delta\prod_{\ell=i+1}^{q_\delta}(\beta_\ell^\delta\wedge1).
\label{eq:rate_delta_c}
\end{equation}

Assumption~\ref{assm:networks_c} concerns the network classes over which the two estimation steps optimize.
Fix \(M\in(0,\infty)\). For \(s>0\) and \(F>0\), define
\begin{equation}
\begin{aligned}
\mathcal F(L,p,s,F)
={}&\Bigl\{f \text{ of the form \eqref{eq:network_def}}:\;
\max_{\substack{0\le j\le L\\1\le k\le p_{j+1},\ 1\le \ell\le p_j}}
|(B_j)_{k\ell}|
\vee
\max_{\substack{1\le j\le L\\1\le k\le p_j}}|(v_j)_k|\le M,\\
&\quad
\sum_{j=0}^{L}\|B_j\|_0+
\sum_{j=1}^{L}\|v_j\|_0\le s,\
\|f\|_\infty\le F
\Bigr\}.
\end{aligned}
\label{eq:sparse_relu_c}
\end{equation}
The following assumption specifies network architectures that balance approximation accuracy and statistical complexity.
\begin{assumptionp}{C}\label{assm:networks_c}
Algorithm~\ref{alg:two-step} is run with the restricted classes
\(\mathcal F_h=\mathcal F(L_h,\mathbf p_h,s_h,F_h)\subset\mathcal F(L_h,\mathbf p_h)\) and
\(\mathcal F_\delta=\mathcal F(L_\delta,\mathbf p_\delta,s_\delta,F_\delta)\subset\mathcal F(L_\delta,\mathbf p_\delta)\).
The class \(\mathcal F_h\) satisfies \(F_h\ge\max\{K_h,1\}\) and
\[
L_h\asymp\log n_A
\quad\text{with}\quad
L_h\ge 2+\sum_{i=0}^{q_h}
\log_2(4t_i^h\vee4\beta_i^h)\log_2 n_A,
\]
\[
n_A\phi_{n_A}^h\lesssim\min_{1\le \ell\le L_h}p_{h,\ell},
\qquad
s_h\asymp n_A\phi_{n_A}^h\log n_A.
\]
The class \(\mathcal F_\delta\) satisfies the same conditions with
\((q_\delta,t^\delta,\beta^\delta,K_\delta,n_T)\) in place of
\((q_h,t^h,\beta^h,K_h,n_A)\).
\end{assumptionp}

In both classes, \(p_{h,0}=p_{\delta,0}=p_X\) and \(p_{h,L_h+1}=p_{\delta,L_\delta+1}=1\), since the networks should map \(\mathcal X\) to \(\mathbb R\).
The class and the accompanying depth, width, and sparsity conditions follow \citet{Schmidt2020RELU}, except $M\in(0,\infty)$ rather than \(M=1\). Related conditions have been used in \citet{padilla2022quantile,BosSchmidtHieber2024,DuWuTongZhao2024,meng2026inference}.
Allowing different upper bounds $M$ for \(h\) and \(\delta\) would change the risk upper bound only by a multiplicative constant (see Lemmas~\suppref{lem:small_modulus_representation} and \suppref{lem:relu_rate_summary} of the Supplementary Material), 
so we use a common $M$ and suppress it from the function class notation. 
The depth and width lower bounds ensure that the required compositional
approximations can be represented, while the sparsity and the logarithmic depth control
the entropy of the network class.
All implicit constants may depend on fixed quantities, such as the compositional parameters and the sub-Gaussian parameters, but not on \(n_A\) or \(n_T\).

\subsection{Prediction risk bounds}\label{sec:bounds_c}

We bound the prediction risk of our estimator, first for the oracle estimator \(\widehat g_{\lambda^{\rm ora}}\), as if the optimal \(\lambda^{\rm ora}\in\argmin_{\lambda\in\{0,1\}} R(\widehat g_\lambda,g)\) were known, and then with \(\widehat\lambda\) selected by validation.

For the oracle, suppose \(\widetilde\delta\) is fit on all of \(\mathcal D_T\), without the split of Algorithm~\ref{alg:two-step}.
\begin{lemma}\label{lem:oracle_c}
Under Assumptions~\ref{assm:data_c}--\ref{assm:networks_c}, for all sufficiently large \(n_A\) and \(n_T\),
\[
\mathbb E R(\widehat g_{\lambda^{\rm ora}},g)
\le C_h\,\phi_{n_A}^h\log^3n_A
+\min\{2\|\delta\|_2^2,\;
C_\delta\,\phi_{n_T}^\delta\log^3n_T\},
\]
where \(C_h,C_\delta\in(0,\infty)\) depend only on the parameters in Assumptions~\ref{assm:data_c}--\ref{assm:networks_c}.

\end{lemma}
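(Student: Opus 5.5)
Since $R(\widehat g_{\lambda^{\rm ora}},g)=\min_{\lambda\in\{0,1\}}R(\widehat g_\lambda,g)$ pointwise in the samples, and $\mathbb E\min\{a,b\}\le\min\{\mathbb E a,\mathbb E b\}$, it suffices to bound the two candidates $\widehat g_0=\widehat h$ and $\widehat g_1=\widehat h+\widetilde\delta$ separately and in expectation. The plan has three steps: a risk bound for $\widehat h$ from $\mathcal D_A$, a bound for $\widehat g_0$, and a bound for $\widehat g_1$ that conditions on $\mathcal D_A$.

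\textbf{Step 1 (bound for $\widehat h$).} The aim is
\[
\mathbb E\|\widehat h-h\|_2^2\le C_h\,\phi^h_{n_A}\log^3 n_A .
\]
Step 1 of Algorithm~\ref{alg:two-step} is a least-squares regression of $\widehat f(Z_j)=h(X_j)+\xi_j$ on $X_j$. By Assumption~\ref{assm:data_c}, $\xi_j$ is conditionally mean zero and sub-Gaussian given $X_j$; it need not be Gaussian or independent of $X$. I would invoke the oracle inequality of \citet{Schmidt2020RELU} (Theorem 2 and Lemma 4 there), adapted to this heteroscedastic sub-Gaussian noise. That adaptation replaces the Gaussian concentration step by a sub-Gaussian maximal inequality over a covering of $\mathcal F_h$, conditional on the design.

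The entropy bound $\log N(\tau,\mathcal F_h,\|\cdot\|_\infty)\lesssim s_h L_h\log(\cdot)$ has to be rederived with weight bound $M$ in place of $1$. This only changes constants inside the logarithm, since $L_h\asymp\log n_A$. The approximation error comes from Schmidt-Hieber's compositional approximation theorem, and the architecture conditions in Assumption~\ref{assm:networks_c} make it $\lesssim\phi^h_{n_A}$. The stochastic term is of order $s_hL_h\log n_A/n_A\asymp\phi^h_{n_A}\log^3 n_A$, which gives the claimed bound. Presumably this is exactly what Lemmas~\suppref{lem:small_modulus_representation} and~\suppref{lem:relu_rate_summary} provide.

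\textbf{Step 2 ($\lambda=0$).} Since $g=h+\delta$,
\[
R(\widehat g_0,g)=\|\widehat h-h-\delta\|_2^2\le 2\|\widehat h-h\|_2^2+2\|\delta\|_2^2 .
\]
The factor $2$ on the Step 1 term is absorbed into $C_h$, which produces the $2\|\delta\|_2^2$ branch of the minimum.

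\textbf{Step 3 ($\lambda=1$).} I condition on $\mathcal D_A$, so that $\widehat h$ is fixed and, by Assumption~\ref{assm:data_c}, independent of $\mathcal D_T$. The residual responses are
\[
\widetilde Y_k=Y_k-\widehat h(X_k)=\delta(X_k)+\{h-\widehat h\}(X_k)+\epsilon_{X,k},
\]
so Step 2 of the algorithm is least squares with regression function $\delta^\dagger:=\delta+(h-\widehat h)$ and sub-Gaussian noise. The same oracle inequality gives, conditionally on $\mathcal D_A$,
\[
\mathbb E\bigl[\|\widetilde\delta-\delta^\dagger\|_2^2\mid\mathcal D_A\bigr]
\lesssim \inf_{f\in\mathcal F_\delta}\|f-\delta^\dagger\|_2^2+\frac{s_\delta L_\delta\log n_T}{n_T}+\frac{1}{n_T}
\le 2\inf_{f\in\mathcal F_\delta}\|f-\delta\|_2^2+2\|h-\widehat h\|_2^2+C\phi^\delta_{n_T}\log^3 n_T .
\]
The target error $R(\widehat g_1,g)=\|\widetilde\delta-\delta^\dagger\|_2^2$ is exactly this quantity. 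Taking expectations over $\mathcal D_A$, the middle term is controlled by Step 1 and absorbed into $C_h\phi^h_{n_A}\log^3n_A$. The approximation term is $\lesssim\phi^\delta_{n_T}$ by the compositional approximation theorem, giving the $C_\delta\phi^\delta_{n_T}\log^3n_T$ branch.

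\textbf{Main obstacle.} The delicate point is that the oracle inequality needs $\|\delta^\dagger\|_\infty\le F_\delta$, or at least a uniformly bounded regression function. We know $\|\widehat h\|_\infty\le F_h$ and $\|h\|_\infty\le K_h$, so $\|\delta^\dagger\|_\infty\le K_\delta+K_h+F_h$. This is bounded but may exceed $F_\delta$.

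I would first try to use a version of the oracle inequality that allows misspecification, bounding the excess risk relative to the best element of $\mathcal F_\delta$ with constants depending on $\|\delta^\dagger\|_\infty$. Schmidt-Hieber's Lemma 4 in fact only requires a uniform bound on the truth, not that it lie inside the class. If needed, I would enlarge $F_\delta$ by a constant, which Assumption~\ref{assm:networks_c} permits because it only imposes $F_\delta\ge\max\{K_\delta,1\}$. Combining Steps 2 and 3 with the opening reduction then yields the lemma.
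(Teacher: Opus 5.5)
Your proof is correct and matches the paper's route. The paper also bounds $\mathbb E\|\widehat h-h\|_2^2$ through a sub-Gaussian least-squares oracle inequality combined with the sparse-ReLU approximation and entropy bounds. It then bounds $\widehat g_0$ by $2\|\widehat h-h\|_2^2+2\|\delta\|_2^2$, and bounds $\widehat g_1$ by applying the oracle inequality conditionally on $\mathcal D_A$ with regression function $\delta+h-\widehat h$, splitting the approximation error exactly as you do. Your treatment of the ``main obstacle'' is also the paper's: its generic oracle inequality (Proposition~\ref{prop:generic_subgaussian_oracle}) allows separate sup-norm bounds for the class and for the truth, here $F_0=K_\delta+K_h+F_h$. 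So your first option suffices, and the fallback of enlarging $F_\delta$, which would narrow the lemma's generality, is unnecessary.
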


The two terms parallel the two steps of Algorithm~\ref{alg:two-step}: the
first is the cost of estimating \(h\) from the \(n_A\) auxiliary
observations; the second is the smaller of the bias \(2\|\delta\|_2^2\)
from discarding \(\widetilde\delta\) and the cost
\(C_\delta\,\phi_{n_T}^\delta\log^3n_T\) of estimating \(\delta\) from the
\(n_T\) target observations.

The minimum realizes the two favorable regimes discussed in 
Section~\ref{sec:transferability}. In either regime the black box is
informative in some sense: it predicts \(Y\) well, so that \(\|\delta\|_2\)
is small, or it absorbs the complexity of \(g\), leaving a simple
\(\delta\) with a fast \(\phi_{n_T}^\delta\).

The bound thus implies that transfer is favorable when the auxiliary
sample is large and the black box is informative. Writing
\(\phi_n^h=n^{-\rho_h}\) and \(\phi_n^\delta=n^{-\rho_\delta}\) for the
rates in \eqref{eq:rate_c} and \eqref{eq:rate_delta_c}, if
\(n_A\asymp n_T^\kappa\) for some \(\kappa\ge1\) with
\(\kappa\rho_h>\rho_\delta\), the first term is
\(o(\phi_{n_T}^\delta\log^3n_T)\). The abundant auxiliary sample makes the
cost of learning \(h\) negligible relative to that of \(\delta\), even when
\(h\) is the harder component, \(\rho_h<\rho_\delta\). The second term reflects the informativeness of the black box.  Non-asymptotically, at
the given sample sizes, either form suffices, a small \(\|\delta\|_2\)
working as well as a fast \(\phi_{n_T}^\delta\). Asymptotically, as
\(n_T\to\infty\) with \(\delta\) fixed and nonzero, the estimation cost
falls below \(2\|\delta\|_2^2\) and the gain comes from \(\delta\) being
smooth rather than small.

We return to our estimator \(\widehat g_{\widehat\lambda}\) produced by Algorithm~\ref{alg:two-step}.
Set $n_{\rm tr}=|\mathcal D_{\rm tr}|$ and
$n_{\rm val}=|\mathcal D_{\rm val}|$.
\begin{theorem}\label{thm:rates_c}
Under Assumptions~\ref{assm:data_c}--\ref{assm:networks_c} with a fixed proportional split \(n_{\rm tr}\asymp n_{\rm val}\asymp n_T\), for all sufficiently large \(n_A\) and \(n_T\),
\begin{equation}
\mathbb E R(\widehat g_{\widehat\lambda},g)
\le
C_h\,\phi_{n_A}^h\log^3n_A
+
\min\{
2\|\delta\|_2^2,\;
C_\delta\,\phi_{n_T}^\delta\log^3n_T
\}
+C_{\rm val}\,n_T^{-1/2},
\label{eq:selected_c}
\end{equation}
where \(C_h,C_\delta\in(0,\infty)\) depend only on the parameters in Assumptions~\ref{assm:data_c}--\ref{assm:networks_c} and the split ratio \(n_{\rm val}/n_T\), and \(C_{\rm val}\in(0,\infty)\) depends only on \(F_h,F_\delta,K_h,K_\delta,\sigma_T\) and \(n_{\rm val}/n_T\).
\end{theorem}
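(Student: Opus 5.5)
We condition on $\mathcal D_A$ and $\mathcal D_{\rm tr}$, so that $\widehat h$ and $\widetilde\delta$ are fixed functions, and treat the choice of $\widehat\lambda$ as model selection between the two candidates $\widehat g_0=\widehat h$ and $\widehat g_1=\widehat h+\widetilde\delta$ on the independent sample $\mathcal D_{\rm val}$. The target is an oracle inequality of the form
\[
\mathbb E\bigl[R(\widehat g_{\widehat\lambda},g)\mid \mathcal D_A,\mathcal D_{\rm tr}\bigr]\le \min_{\lambda\in\{0,1\}}R(\widehat g_\lambda,g)+C_{\rm val}\,n_{\rm val}^{-1/2}.
\]
Taking expectations and using $\mathbb E\min\le\min\mathbb E$ then reduces the problem to bounding $\mathbb E R(\widehat g_0,g)$ and $\mathbb E R(\widehat g_1,g)$ separately. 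These are exactly the two quantities handled in the proof of Lemma~\ref{lem:oracle_c}, with $n_T$ replaced by $n_{\rm tr}\asymp n_T$, which only changes $C_\delta$ by a factor depending on the split ratio.

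\textbf{The two candidate risks.} First, $R(\widehat g_0,g)=\|\widehat h-h-\delta\|_2^2\le 2\|\widehat h-h\|_2^2+2\|\delta\|_2^2$, and the step-1 bound from Lemma~\ref{lem:oracle_c} gives $\mathbb E\|\widehat h-h\|_2^2\lesssim\phi^h_{n_A}\log^3 n_A$. Second, $R(\widehat g_1,g)\le 2\|\widehat h-h\|_2^2+2\|\widetilde\delta-\delta\|_2^2$, where $\widetilde\delta$ is a least-squares fit to the responses $Y-\widehat h(X)=\delta(X)+(h-\widehat h)(X)+\epsilon_X$. Conditionally on $\mathcal D_A$ this is a regression with sub-Gaussian noise and a fixed bounded misspecification $h-\widehat h$. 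The same argument as in the lemma therefore gives $\mathbb E\|\widetilde\delta-\delta\|_2^2\lesssim \phi^\delta_{n_T}\log^3 n_T+\mathbb E\|\widehat h-h\|_2^2$. This matches the constants $C_h$, $C_\delta$ and the factor $2\|\delta\|_2^2$ in \eqref{eq:selected_c}, possibly after enlarging $C_h$.

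\textbf{The selection step (main obstacle).} For fixed $\lambda$, write $\ell_\lambda(X,Y)=(Y-\widehat g_\lambda(X))^2-(Y-g(X))^2$, whose mean is $R(\widehat g_\lambda,g)$. Expanding gives
\[
\ell_\lambda=(\widehat g_\lambda-g)^2(X)-2\epsilon_X(\widehat g_\lambda-g)(X).
\]
Here $|\widehat g_\lambda-g|\le F_h+F_\delta+K_h+K_\delta=:B$, so $\ell_\lambda$ is sub-Gaussian-type with scale $\lesssim B^2+B\sigma_T$, and hence its variance is bounded by a constant depending only on $F_h,F_\delta,K_h,K_\delta,\sigma_T$. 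Since $\widehat\lambda$ minimizes the empirical mean $P_{\rm val}\ell_\lambda$,
\[
R(\widehat g_{\widehat\lambda},g)\le \min_\lambda R(\widehat g_\lambda,g)+2\max_{\lambda\in\{0,1\}}|(P_{\rm val}-P)\ell_\lambda|.
\]
For the last term, bound the expected maximum of two centered averages by Cauchy--Schwarz: $\mathbb E\max_\lambda|\cdot|\le\bigl(\sum_\lambda\mathrm{Var}(\ell_\lambda)/n_{\rm val}\bigr)^{1/2}\lesssim n_{\rm val}^{-1/2}$. This yields $C_{\rm val}n_T^{-1/2}$ with the stated dependence on $n_{\rm val}/n_T$.

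The delicate part is making sure the truncation $\|f\|_\infty\le F$ in Assumption~\ref{assm:networks_c} is what bounds $\widehat g_\lambda-g$. Only that uniform bound, and not any rate, enters $C_{\rm val}$. The same uniform bound is needed to control the unbounded noise through conditional sub-Gaussianity (using $\mathbb E[\epsilon_X^2\mid X]\le\sigma_T^2$), rather than through any boundedness of $Y$. A sharper Bernstein argument could give a $1/n_T$ rate, but the slow $n^{-1/2}$ rate suffices for the statement.

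Combining the selection bound, the two candidate bounds, and $\min\{a+b,a+c\}=a+\min\{b,c\}$ gives \eqref{eq:selected_c}.
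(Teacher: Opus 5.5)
Your proposal is correct and follows essentially the same route as the paper: it conditions on $\sigma(\mathcal D_A,\mathcal D_{\rm tr})$, uses a second-moment/Jensen bound on the validation empirical process to obtain the $n_{\rm val}^{-1/2}$ cost with $C_{\rm val}$ depending only on the envelopes and $\sigma_T$, and reuses the two candidate bounds from the proof of Lemma~\ref{lem:oracle_c} with $n_{\rm tr}\asymp n_T$. The only differences are cosmetic. You bound $2\max_\lambda|(P_{\rm val}-P)\ell_\lambda|$ for the excess losses, whereas the paper bounds the single term $|(P_{\rm val}-P)(\ell_1-\ell_0)|$. You also pass through $\|\widetilde\delta-\delta\|_2^2$ instead of applying the oracle inequality directly to $R(\widehat h+\widetilde\delta,g)$. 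Both choices only change constants.
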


The first two terms have the same rate form as the oracle guarantee of
Lemma~\ref{lem:oracle_c}, now with \(\widetilde\delta\) fit on the training
split, and the third is the validation cost. Up to this cost, the selected
estimator attains the oracle guarantee without prior knowledge of which
regime holds. The validation cost is no larger than the cost of estimating
\(\delta\) when \(n_T^{-1/2}\lesssim\phi_{n_T}^\delta\log^3n_T\). Otherwise, the risk beyond the first term is capped at the order \(n_T^{-1/2}\).

\begin{remark}[Covariate shift]\label{rem:shift_c}
Assumption~\ref{assm:data_c} requires \(\mathcal D_A\) and \(\mathcal D_T\) to share the same \(X\)-marginal law. This can be relaxed. If the conditional law of \(Z\) given \(X\) is the same in the two populations and the density ratio of the \(X\)-marginal of \(\mathcal D_T\) to that of \(\mathcal D_A\) is bounded, the theoretical results remain the same, except for the constant in the first term of Lemma~\ref{lem:oracle_c} and Theorem~\ref{thm:rates_c}. Details are given in Section~\suppref{sec:covariate_shift_extension} of the Supplementary Material.
\end{remark}

\begin{remark}[A Neyman-orthogonal second step]\label{rem:orth_c}
The second step in Algorithm~\ref{alg:two-step} fits \(Y-\widehat h(X)\), which contains the first-step error, so \(\widetilde\delta\) can be biased for \(\delta\).
When \(\delta\) itself is of interest, the second step can be corrected in the spirit of Neyman orthogonality \citep{ChernozhukovEtAl2018DML}, by adding to its least squares a term built from \(\widehat f(Z)-\widehat h(X)\) on \(\mathcal D_A\), which can offset the bias. However, we show that this does not help improve the rate for estimating \(g\). Details are given in Section~\suppref{sec:orthogonal_refinement} of the Supplementary Material.
\end{remark}

\subsection{When transfer helps}\label{sec:comparison_c}

We discuss when our estimator improves on the two baselines from the Introduction, non-transfer and imputation-based learning. For the former the comparison is in terms of rates. For the latter it is qualitative, with formal statements in Section~\suppref{sec:imputation} of the Supplementary Material.

\subsubsection{Non-transfer learning}\label{sec:naive_c}

Non-transfer learning estimates \(g\) directly from \(\mathcal D_T\) alone. For a fair comparison, it optimizes over a sum of two network classes as in Assumption~\ref{assm:networks_c}, reflecting the decomposition \(g=h+\delta\),
\[
\widehat g_{\rm nt}
\in
\argmin_{f\in\mathcal F_{g,T}}
\frac1{n_T}\sum_{i=1}^{n_T}\{Y_i-f(X_i)\}^2,
\]
where \(\mathcal F_{g,T}=\{u+v:\ u\in\mathcal F_{h,T},\ v\in\mathcal F_\delta\}\) and \(\mathcal F_{h,T}\) is the analogue of \(\mathcal F_h\) calibrated to \(n_T\) rather than \(n_A\). The theorem below makes this precise.

\begin{theorem}\label{thm:naive_c}
Suppose Assumptions~\ref{assm:data_c} and~\ref{assm:structure_c} hold, \(\mathcal F_\delta\) satisfies Assumption~\ref{assm:networks_c}, and \(\mathcal F_{h,T}\) satisfies Assumption~\ref{assm:networks_c} for \(h\) with \(n_T\) in place of \(n_A\). Then, for all sufficiently large \(n_T\),
\begin{equation}
\mathbb E R(\widehat g_{\rm nt},g)
\le C_g\,\overline\phi_{n_T}\log^3n_T,
\qquad
\overline\phi_{n_T}=\max\{\phi_{n_T}^h,\phi_{n_T}^\delta\},
\label{eq:naive_rate_c}
\end{equation}
where \(C_g\in(0,\infty)\) depends only on the parameters in Assumptions~\ref{assm:structure_c}--\ref{assm:networks_c} and \(\sigma_T\) in Assumption~\ref{assm:data_c}.
\end{theorem}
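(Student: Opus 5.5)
The plan is to treat \(\widehat g_{\rm nt}\) as a single least-squares estimator over the sum class \(\mathcal F_{g,T}\) and apply the oracle inequality for sparse ReLU least squares of \citet{Schmidt2020RELU}, adapted to sub-Gaussian heteroscedastic noise. This is the same generic risk bound (approximation error plus entropy over \(n_T\)) that underlies Lemma~\ref{lem:oracle_c}, applied once instead of twice. For an empirical risk minimizer \(\widehat f\) over a class \(\mathcal F\) with \(\|f\|_\infty\le F\), it reads, for \(n_T\) large,
\[
\mathbb E R(\widehat g_{\rm nt},g)\lesssim \inf_{f\in\mathcal F_{g,T}}\|f-g\|_\infty^2+\frac{\log N(n_T^{-1},\mathcal F_{g,T},\|\cdot\|_\infty)+1}{n_T}+\frac{1}{n_T},
\]
with constants depending on \(F\), \(\sigma_T\), and \(\|g\|_\infty\le K_h+K_\delta\). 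Here \(\mathcal F_{g,T}\) is uniformly bounded by \(F_h+F_\delta\), and the noise \(\epsilon_X\) is conditionally sub-Gaussian by Assumption~\ref{assm:data_c}. The sub-Gaussian rather than Gaussian noise only enters the step where the stochastic term \(\frac1{n_T}\sum \epsilon_i f(X_i)\) is controlled uniformly over a cover. That step goes through with a sub-Gaussian maximal inequality, as in the cited works. I will take this inequality as the same tool used for Lemma~\ref{lem:oracle_c}.

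\emph{Approximation.} Write \(g=h+\delta\). By Assumption~\ref{assm:structure_c} and the approximation theorem for compositional H\"older classes, with the architecture of Assumption~\ref{assm:networks_c} calibrated to \(n_T\), there are \(u^*\in\mathcal F_{h,T}\) and \(v^*\in\mathcal F_\delta\) with
\[
\|u^*-h\|_\infty^2\lesssim\phi^h_{n_T},\qquad \|v^*-\delta\|_\infty^2\lesssim\phi^\delta_{n_T}.
\]
The conditions \(F_h\ge K_h\) and \(F_\delta\ge K_\delta\) guarantee that the sup-norm constraint can be met after truncation. Then \(u^*+v^*\in\mathcal F_{g,T}\), and
\[
\|u^*+v^*-g\|_\infty^2\le 2\phi^h_{n_T}+2\phi^\delta_{n_T}\lesssim\overline\phi_{n_T}.
\]

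\emph{Entropy.} A sup-norm \(\eta/2\)-cover of \(\mathcal F_{h,T}\) and one of \(\mathcal F_\delta\) combine by sums into an \(\eta\)-cover of \(\mathcal F_{g,T}\). Hence
\[
\log N(\eta,\mathcal F_{g,T})\le\log N(\eta/2,\mathcal F_{h,T})+\log N(\eta/2,\mathcal F_\delta).
\]
The standard sparse network entropy bound gives \(\log N(\eta,\mathcal F(L,p,s,F))\lesssim (s+1)\log\bigl(\eta^{-1}(L+1)V^2\bigr)\), where \(V=\prod(p_\ell+1)\). With bounded weights \(M\), there is an extra factor \(M^{L+1}\), which only changes constants since \(L\asymp\log n_T\). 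With \(s\asymp n_T\phi_{n_T}\log n_T\), \(L\asymp\log n_T\), and widths polynomial in \(n_T\), the log term is \(O(L\log n_T)=O(\log^2 n_T)\). Each piece is therefore \(\lesssim n_T\phi_{n_T}^{(\cdot)}\log^3 n_T\). Dividing by \(n_T\) gives \((\phi^h_{n_T}+\phi^\delta_{n_T})\log^3n_T\lesssim\overline\phi_{n_T}\log^3 n_T\).

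Combining the two terms in the oracle inequality yields \eqref{eq:naive_rate_c}. The main obstacle is the oracle inequality itself under heteroscedastic sub-Gaussian noise and the sum class, which is not literally a sparse network class of the form \eqref{eq:sparse_relu_c}. I will handle this by noting that the oracle inequality only uses uniform boundedness and the covering number, both of which were verified above for the sum class. An alternative is to observe that \(u+v\) is itself a sparse ReLU network of depth \(\max(L_h,L_\delta)\) plus padding, with width summed and sparsity \(s_h+s_\delta+O(L)\) from identity layers for depth matching, and then invoke the network result directly. I would use the covering argument as the primary route.
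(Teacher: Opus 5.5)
Your proposal follows the paper's proof essentially step for step. The paper likewise applies its sub-Gaussian least-squares oracle inequality (Proposition~\ref{prop:generic_subgaussian_oracle}) to the sum class \(\mathcal F_{g,T}\) with \(\eta=n_T^{-1}\). That inequality needs only the envelope \(F_h+F_\delta\), the bound \(\|g\|_\infty\le K_h+K_\delta\), and sup-norm covering numbers. The paper then controls the approximation error by adding the two component approximants, and the entropy through the Minkowski-sum cover (Lemma~\ref{lem:minkowski_sum_cover}).

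Two details need tightening.

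First, Assumption~\ref{assm:networks_c} only lower-bounds the widths, so you cannot assume they are polynomial in \(n_T\). The entropy bound has to be made width-free by pruning inactive nodes, so that the effective hidden widths are at most \(s+1\). This is how Lemma~\ref{lem:relu_entropy_M} obtains \(\log N_\infty(\eta,\mathcal F(L,\mathbf p,s,F))\lesssim(s+1)\{\log\eta^{-1}+L\log(M+1)+L\log(s+2)\}\), and hence your \(O(L\log n_T)\) factor.

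Second, \(M\) matters for approximation as well as entropy. When \(M<1\), the unit-modulus approximants of \citet{Schmidt2020RELU} do not lie in the class directly. They must be converted to modulus \(M\) via Lemma~\ref{lem:small_modulus_representation}, which costs two extra layers; this is what the \(+2\) in the depth condition accommodates. The paper packages this step in Lemma~\ref{lem:compositional_approximation_M}.
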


Recall \(\phi_n^h=n^{-\rho_h}\) and \(\phi_n^\delta=n^{-\rho_\delta}\), and write \(\overline\phi_n=n^{-\overline\rho}\) with \(\overline\rho=\min\{\rho_h,\rho_\delta\}\) from \eqref{eq:naive_rate_c}. A comparison of Theorems~\ref{thm:rates_c} and~\ref{thm:naive_c} in terms of rates shows that the transfer bound \eqref{eq:selected_c} is of smaller order than the non-transfer bound when \(n_A\gg n_T^{\overline\rho/\rho_h}\), \(\rho_\delta>\rho_h\), and \(\overline\rho\le1/2\). The first condition asks for an abundant auxiliary sample, so that the cost of learning \(h\) remains below the non-transfer rate. The second asks the black box to absorb the complexity of \(g\), so that \(\delta\) is strictly easier to estimate than \(h\). The third excludes the case in which the non-transfer rate is already faster than the validation cost \(n_T^{-1/2}\), which occurs when both components are smooth relative to their intrinsic dimensions. Outside these regimes, the upper bounds alone do not order the two methods.

The following minimax lower bound formalizes the comparison.

\begin{theorem}\label{thm:minimax_c}
Let \(\mathcal P\) collect all laws \(P\) of
\((\mathcal D_A,\mathcal D_T)\) for which
Assumptions~\ref{assm:data_c} and~\ref{assm:structure_c} hold with a
fixed \((\widehat f,P_X,\sigma_A,\sigma_T)\).
Assume that:
\begin{enumerate}[label=(\roman*)]
\item \(\widehat f\) is non-constant, so that
\(\widehat f(z_-)<\widehat f(z_+)\) for some \(z_-,z_+\in\mathcal Z\);
\item \(P_X\) has a Lebesgue density on \([0,1]^{p_X}\) bounded above
and away from zero;
\item in Assumption~\ref{assm:structure_c}, 
\(t_i^h\le\min_{0\le\ell\le i}d_\ell^h\) for \(i=0,\ldots,q_h\) and
\(t_i^\delta\le\min_{0\le\ell\le i}d_\ell^\delta\) for
\(i=0,\ldots,q_\delta\), with \(K_h\) and \(K_\delta\) sufficiently
large, depending only on \((q_h,d^h,t^h,\beta^h,\widehat f(z_-))\) and
\((q_\delta,d^\delta,t^\delta,\beta^\delta)\), respectively.
\end{enumerate}
Then, for all sufficiently large \(n_T\),
\[
\inf_{\widehat g}\ \sup_{P\in\mathcal P}\ \mathbb E_P R(\widehat g,g_P)
\ \ge\ C_{\rm lb}\,\overline\phi_{n_T},
\]
where the infimum is over all Borel-measurable
\(L^2(P_X)\)-valued estimators
\(\widehat g=\widehat g(\mathcal D_T;\widehat f,P_X)\), \(\overline\phi_{n_T}\) is defined in
\eqref{eq:naive_rate_c}, and \(C_{\rm lb}\in(0,\infty)\) does not depend on
\(n_T\).
\end{theorem}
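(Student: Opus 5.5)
The idea is to reduce the problem to two classical minimax lower bounds for compositional H\"older regression, in the sense of \citet{Schmidt2020RELU}. One bound uses hypotheses in which all the difficulty sits in \(h\); the other uses hypotheses in which it all sits in \(\delta\). The estimator \(\widehat g\) depends only on \(\mathcal D_T\), and \(\mathcal D_T\) consists of i.i.d.\ copies of \((X,Y)\). So the law of the data the estimator sees depends on \(P\in\mathcal P\) only through \(P_X\) and the conditional law of \(Y\) given \(X\). The auxiliary part of \(P\) (the law of \(Z\) given \(X\)) matters only for membership in \(\mathcal P\). Since \(\sup_{\mathcal P}\ge\) the supremum over any subfamily, and \(\max\{a,b\}\le a+b\), it suffices to exhibit two subfamilies \(\mathcal P_h,\mathcal P_\delta\subset\mathcal P\) with minimax risks \(\gtrsim\phi_{n_T}^h\) and \(\gtrsim\phi_{n_T}^\delta\), respectively.

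\textbf{Making a hypothesis an element of \(\mathcal P\).} Given a target \(h(\cdot)\) with values in \([\widehat f(z_-),\widehat f(z_+)]\), let \(Z\mid X\) be supported on \(\{z_-,z_+\}\) with
\[
P(Z=z_+\mid X)=\pi(X):=\frac{h(X)-\widehat f(z_-)}{\widehat f(z_+)-\widehat f(z_-)} .
\]
Then \(\mathbb E[\widehat f(Z)\mid X]=h(X)\), and \(\xi\) is a centered two-point variable, hence sub-Gaussian. The delicate point is that \(\sigma_A\) is fixed and possibly small. A centered Bernoulli with success probability \(p\) on an interval of length \(b-a\) has optimal sub-Gaussian variance proxy \((b-a)^2(1-2p)/\{2\log((1-p)/p)\}\), which tends to \(0\) as \(p\to0\). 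I will therefore take \(h=\widehat f(z_-)+\eta+\text{(small bumps)}\) with \(\eta>0\) small enough that \(\pi\) stays in \([p_0/2,2p_0]\) for a small \(p_0\) chosen so that the variance proxy is at most \(\sigma_A^2\). This explains why \(K_h\) must be large depending on \(\widehat f(z_-)\): the constant offset must belong to the compositional H\"older class, which is arranged by adding a constant in the last layer \(g_{q_h}\). For the response I take \(Y=h(X)+\delta(X)+\epsilon\) with \(\epsilon\sim N(0,\sigma_T^2)\) independent of \((X,Z)\). This meets Assumption~\ref{assm:data_c}, and \(\mathbb E[Y-\widehat f(Z)\mid X]=\delta(X)\), as required by \eqref{eq:decom}.

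\textbf{The two subfamilies.} For \(\mathcal P_h\), set \(\delta\equiv0\), which is trivially in the \(\delta\)-class. Let \(h\) range over \(\widehat f(z_-)+\eta+\{\text{Schmidt-Hieber's lower-bound hypotheses for }\mathcal G(q_h,d^h,t^h,\beta^h,\cdot)\}\). Those hypotheses are built from a single ``hardest'' layer \(i^*\) attaining the maximum in \eqref{eq:rate_c}: bump functions in \(t_{i^*}\) variables of height \(\asymp n_T^{-\beta^*_{i^*}/(2\beta^*_{i^*}+t_{i^*})}\), propagated through the other layers by near-identity or power maps. The assumption \(t_i^h\le\min_{\ell\le i}d_\ell^h\) is exactly what that construction requires. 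The bump heights tend to \(0\), so for large \(n_T\) the range condition on \(\pi\) holds. For \(\mathcal P_\delta\), fix \(h\equiv\widehat f(z_-)+\eta\), which gives \(\pi\) constant, and let \(\delta\) range over the analogous hypotheses for \(\mathcal G(q_\delta,\dots)\) with \(K_\delta\) large. In both cases \(g=h+\delta\) runs over a packing \(\{g_\theta\}\) with \(\|g_\theta-g_{\theta'}\|_{L^2(P_X)}^2\gtrsim\phi_{n_T}\). Because the noise is Gaussian, \(\mathrm{KL}(P_\theta^{\otimes n_T},P_{\theta'}^{\otimes n_T})=n_T\|g_\theta-g_{\theta'}\|_{L^2(P_X)}^2/(2\sigma_T^2)\). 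Assumption (ii) makes \(L^2(P_X)\) norms equivalent to Lebesgue \(L^2\) norms up to constants, so Schmidt-Hieber's Fano/Varshamov--Gilbert argument, stated for uniform design, carries over with modified constants. This yields \(\gtrsim\phi^h_{n_T}\) on \(\mathcal P_h\) and \(\gtrsim\phi^\delta_{n_T}\) on \(\mathcal P_\delta\).

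\textbf{Main obstacle.} I expect the hardest step to be verifying membership in \(\mathcal P\) uniformly over the packing with the \emph{fixed} \(\sigma_A\). This needs a careful bound on the sub-Gaussian constant of the small-\(p\) two-point variable that is uniform over \(p\in[p_0/2,2p_0]\). It also needs a check that adding the offset \(\widehat f(z_-)+\eta\) keeps the hypotheses in \(\mathcal G\) while the range constraints \(|a_i|\vee|b_i|\le K_h\) still hold. The remaining steps are bookkeeping on the Schmidt-Hieber construction and on constants, which depend on \(\sigma_T\), the density bounds, and \(K_h,K_\delta\), but not on \(n_T\).
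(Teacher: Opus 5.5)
Your proposal is correct and follows essentially the same route as the paper. You use two subfamilies: one with \(\delta\equiv0\) and \(h\) realized by a two-point kernel on \(\{z_-,z_+\}\) whose small success probability keeps the Bernoulli sub-Gaussian proxy below \(\sigma_A^2\), and one with a constant \(h\) and \(\delta\) varying. Each is then fed into a Schmidt-Hieber-type bump packing at the hardest layer, with Gaussian KL, Varshamov--Gilbert and Fano under the density bounds. The differences are cosmetic: the paper anchors at \(\widehat f(z_-)\) with \(\pi\in[0,p_0]\) (and \(Z\equiv z_-\) in the \(\delta\)-family) rather than offsetting by \(\eta\), and it proves the Kearns--Saul-type Bernoulli bound and the localized lower bound explicitly instead of citing them.
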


Condition~(i) excludes a constant black box, and condition~(ii) is a standard regularity condition on the covariate density. Condition~(iii) requires the function classes to be rich enough for the lower bound to hold, and an analogous condition appears in \citet{Schmidt2020RELU}. 
By considering 
\(\delta\equiv0\) and constant \(h\) separately, the proof shows that every 
estimator based only on \(\mathcal D_T\) incurs the slower of
\(\phi_{n_T}^h\) and \(\phi_{n_T}^\delta\), even with \(\widehat f\) and
\(P_X\) known. 
Thus Theorem~\ref{thm:minimax_c} shows that the rate of Theorem~\ref{thm:naive_c} is minimax-optimal up to logarithmic factors. 
In the favorable case discussed above
\( (n_A\gg n_T^{\overline\rho/\rho_h},
\rho_\delta>\rho_h\), and \(\overline\rho\leq1/2 )\), the transfer upper
bound is of smaller order than the non-transfer upper bound. When these conditions hold strictly at the polynomial scale (for example, 
\(n_A\asymp n_T^\kappa\), \(\kappa\rho_h>\overline\rho\),
\(\rho_\delta>\overline\rho\), and \(\overline\rho<1/2\)), the worst-case risk of our estimator is then of strictly smaller polynomial order
than that of any estimator based only on \(\mathcal D_T\).

\subsubsection{Imputation-based learning}\label{sec:imputation_c}

A natural baseline would impute \(Z\) from \(X\) and evaluate the black box at the imputed features, possibly correcting the imputed prediction with Step~2 of Algorithm~\ref{alg:two-step}. In this modification, the first step regresses \(Z_j\) on \(X_j\) over \(\mathcal D_A\), producing an estimate \(\widehat m(\cdot)\) of \(m(x)=\mathbb E(Z\mid X=x)\) and the prediction \(\widehat f\{\widehat m(\cdot)\}\). The second step runs unchanged with \(\widehat f\{\widehat m(\cdot)\}\) in place of \(\widehat h\). Let \(\widehat g_{\rm Imp}\) denote the final estimator. Notice that the second step targets
\[
\gamma_I = \delta + \bigl[ h -\widehat f\{m(\cdot)\}\bigr]
\]
instead of \(\delta\), where we refer to \(h-\widehat f\{m(\cdot)\}=\mathbb E\{\widehat f(Z)\mid X=\cdot\}-\widehat f\{\mathbb E(Z\mid X=\cdot)\}\) as a Jensen gap.

A theorem for \(\widehat g_{\rm Imp}\) requires conditions beyond Assumptions~\ref{assm:data_c}--\ref{assm:networks_c}, since its risk depends also on the regularity of the black box and the conditional distribution of \(Z\) given \(X\). 
Still, Section~\suppref{sec:imputation} of the Supplementary Material shows that, up to a validation cost, the risk of \(\widehat g_{\rm Imp}\) depends on two quantities, the propagated imputation error \(\mathbb E\|\widehat f\{\widehat m(\cdot)\}-\widehat f\{m(\cdot)\}\|_2^2\) and the accuracy with which \(\gamma_I\) can be estimated from the target sample. The first depends on the regularity of the black box. When \(\widehat f\) is H\"older, \(|\widehat f(z)-\widehat f(z')|\le L\|z-z'\|^{\alpha}\) with \(\alpha\in(0,1]\), it is bounded by a constant multiple of \(\{\mathbb E\|\widehat m(X)-m(X)\|^2\}^{\alpha}\), a bound that deteriorates as \(\alpha\) decreases. The second depends on the effective smoothness of \(\gamma_I\), which is typically lower than that of \(\delta\). Imputation must estimate \(\delta\) plus the Jensen gap, while our second step estimates \(\delta\) alone, and the Jensen gap is zero if \(\widehat f\) is affine or \(Z=m(X)\) almost surely. 

Our method is therefore expected to be beneficial when $\widehat f$ is
sufficiently nonlinear and $Z$ is not almost surely determined by $X$, the typical case for modern black boxes and distinct feature sets. In contrast, imputation-based learning can be competitive when $Z$ is nearly determined by $X$ and $\widehat f$ is regular enough that imputation errors do not propagate. Section~\suppref{sec:imputation} of the Supplementary Material gives the risk bound and its rate, under an \(L_2\)-rate condition on \(\widehat m\), second-order conditions on \(\widehat f\), and a compositional condition on \(\widehat f\{m(\cdot)\}-h\).

\section{Extension to multiple black boxes}\label{sec:multiple}

In many applications, multiple black-box functions are available, each built on a different feature space. Figure~\ref{fig:problem-setup_multiple} summarizes the setup. For clarity of exposition, we consider the problem with two black-box predictors; the framework extends naturally to more than two. 

\begin{figure}[h!]

\setlength{\abovedisplayskip}{4pt}
\setlength{\belowdisplayskip}{4pt}
\centering
\fbox{%
\begin{minipage}{1\linewidth}
\textbf{Goal:} Estimate the target regression function $g:\mathcal{X}\to\mathbb{R}$, where $g(X)=\mathbb{E}[Y\mid X]$.

\vspace{0.4em}
\textbf{Given:}
\begin{itemize}[itemsep=1pt,topsep=1pt]
    \item Two black boxes $\widehat{f}_Z:\mathcal{Z}\to\mathbb{R}$ and $\widehat{f}_W:\mathcal{W}\to\mathbb{R}$, with $\mathcal{Z}, \mathcal{W} \neq \mathcal{X}$.
    \item Abundant unlabeled pairs $\mathcal D_{A,S} = \{(S_j, X_{j})\}_{j=1}^{n_{A,S}}$ for each $S \in \{Z,W\}$.
    \item Limited labeled pairs $\mathcal D_T = \{(X_k, Y_k)\}_{k=1}^{n_T}$.
\end{itemize}
\end{minipage}
}
\caption{Problem setup for transfer from two black-box functions.}
\label{fig:problem-setup_multiple}
\end{figure}

Let \(\widehat f_Z:\mathcal Z\to\mathbb R\) and \(\widehat f_W:\mathcal W\to\mathbb R\) be two black-box functions, whose inputs are \(Z\in\mathcal Z\) and \(W\in\mathcal W\), respectively.
Writing $Y$ in terms of each black box gives
\[
Y = \widehat f_S(S) + \epsilon_S,
\qquad S\in\{Z,W\}.
\]
The goal remains to estimate the target regression function \(g(x)=\mathbb E[Y\mid X=x]\), defined on the target feature space $\mathcal X$.%

As in \eqref{eq:decom}, each black box induces its own decomposition of \(g\):
\[
g(X)= \underbrace{\mathbb E\{\widehat f_S(S)\mid X\}}_{=:\,h_S(X)} + \underbrace{\mathbb E(\epsilon_S\mid X)}_{=:\,\delta_S(X)},
\qquad S\in\{Z,W\}.
\]
Because both decompositions represent the same function $g$, so does any convex combination of them: for every \(q\in[0,1]\),
\[
g(X)=q \bigl\{h_Z(X)+\delta_Z(X)\bigr\} + (1-q) \bigl\{h_W(X)+\delta_W(X)\bigr\}.
\]
In contrast, the choice of \(q\) matters at the estimation level.
Because the two black boxes differ, their induced decompositions are estimated with different errors.
Averaging such estimates with a suitable \(q\) then reduces the overall error when the two estimates are comparably accurate but distinct---the classical rationale for ensemble learning \citep{breiman1996stacked,yang2001adaptive,vanderlaan2007super}.
In practice, the suitable weight is unknown, and the next subsection introduces an ensemble estimator that selects \(q\) adaptively.

\subsection{Estimation}

We extend Algorithm~\ref{alg:two-step}.
The procedure again consists of two steps. 
First, we estimate the transferable components $\widehat h_Z$ and $\widehat h_W$ from the respective unlabeled pairs $\mathcal D_{A,Z}$ and $\mathcal D_{A,W}$.
Second, we estimate the non-transferable components by $\widetilde\delta_Z$ and $\widetilde\delta_W$ on a training split of $\mathcal D_T$, and choose the mixing weight $\widehat q$ and the inclusion decisions $(\widehat\lambda_Z,\widehat\lambda_W)$ jointly on a validation split.
Algorithm~\ref{alg:ensemble} summarizes the full procedure.

The joint selection in Step~2 is computationally simple. The indicators $(\lambda_Z,\lambda_W)$ take four combinations, and for each the objective is quadratic in $q$, so the optimal weight solves a one-dimensional least squares on $[0,1]$. With $L$ black boxes, the indicators range over $2^L$ combinations and the search becomes expensive. This blow-up can be avoided by running Algorithm~\ref{alg:two-step} for each black box separately to fix its inclusion indicator and then selecting the ensemble weights over the resulting single-black-box estimators on $\mathcal D_{\mathrm{val}}$. The cost is then linear in $L$.

\begin{algorithm}[t]

\setlength{\abovedisplayskip}{4pt}
\setlength{\belowdisplayskip}{4pt}
\caption{Ensemble black-box transfer estimator}
\label{alg:ensemble}
\begin{algorithmic}[1]
\State \textbf{Input:} Unlabeled data $\mathcal D_{A,Z}$ and $\mathcal D_{A,W}$, labeled data $\mathcal D_T$, and black-box predictors $\widehat f_Z$ and $\widehat f_W$.

\State \textbf{Step 1 (transferable components):}
For each $S \in \{Z, W\}$, using $\mathcal D_{A,S}$, compute
\begin{equation*}
    \widehat h_S = \mathop{\arg\min}_{h \in \mathcal F(L_{h_S}, \mathbf{p}_{h_S})} \frac{1}{n_{A,S}} \sum_{j=1}^{n_{A,S}} \bigl\{h(X_j) - \widehat f_S(S_j)\bigr\}^2.
\end{equation*}

\State \textbf{Step 2 (non-transferable components and ensemble weight):}
Split $\mathcal D_T = \mathcal D_{\mathrm{tr}} \cup \mathcal D_{\mathrm{val}}$.
For each $S \in \{Z, W\}$, using $\mathcal D_{\mathrm{tr}}$, compute
\begin{equation*}
    \widetilde\delta_S = \mathop{\arg\min}_{\delta \in \mathcal F (L_{\delta_S}, \mathbf{p}_{\delta_S})}  \frac{1}{|\mathcal D_{\mathrm{tr}}|} \sum_{(X_k,Y_k) \in \mathcal D_{\mathrm{tr}}} \bigl\{Y_k - \widehat h_S(X_k) - \delta(X_k)\bigr\}^2,
\end{equation*}
and use $\mathcal D_{\mathrm{val}}$ to choose
\begin{equation*}
\begin{split}
    (\widehat\lambda_Z, \widehat\lambda_W, \widehat q) \in \mathop{\arg\min}_{\substack{\lambda_Z, \lambda_W \in \{0,1\} \\ q \in [0,1]}} \frac{1}{|\mathcal D_{\mathrm{val}}|} \sum_{(X_k,Y_k) \in \mathcal D_{\mathrm{val}}} \Bigl( Y_k &- q \bigl\{ \widehat h_Z(X_k) + \lambda_Z \widetilde \delta_Z(X_k) \bigr\} \\
    &- (1-q) \bigl\{ \widehat h_W(X_k) + \lambda_W \widetilde \delta_W(X_k) \bigr\} \Bigr)^2.
\end{split}
\end{equation*}

\State \textbf{Output:} The final estimator is
\begin{equation*}
    \widehat g_{\rm ens}(x) = \widehat q \, \bigl\{\widehat h_Z(x) + \widehat\lambda_Z \widetilde\delta_Z(x)\bigr\} + (1-\widehat q) \bigl\{\widehat h_W(x) + \widehat\lambda_W \widetilde\delta_W(x)\bigr\}.
\end{equation*}
\end{algorithmic}
\end{algorithm}

\subsection{Theory}\label{sec:ensemble_est_c}

We now study when combining the two black boxes improves on using the better one alone.
For \(S\in\{Z,W\}\) and \(\lambda\in\{0,1\}\), define
\(\widehat g_{S,\lambda}=\widehat h_S+\lambda\widetilde\delta_S\), where
\(\widehat h_S\) and \(\widetilde\delta_S\) are from
Algorithm~\ref{alg:ensemble}, and let \(\widehat g_{S,\lambda_S^{\rm ora}}\) with
\(\lambda_S^{\rm ora}\in\argmin_{\lambda\in\{0,1\}}R(\widehat g_{S,\lambda},g)\)
denote the corresponding oracle estimator. Let \(\widehat g_{\rm ens}\) be the
output of Algorithm~\ref{alg:ensemble}.
Write \(R_S=R(\widehat g_{S,\lambda_S^{\rm ora}},g)\) for the risk of each
oracle estimator, and
\(D=\|\widehat g_{Z,\lambda_Z^{\rm ora}}-\widehat g_{W,\lambda_W^{\rm ora}}\|_2^2\)
which measures how much the two estimators differ. Define \(I=(D-|R_Z-R_W|)_+^2/(4D)\) (\(I=0\) if \(D=0\)), which is the risk
reduction from combining the two estimators rather than using the better one
alone. (This is because, by a direct calculation, the best convex
combination of the two oracle estimators attains risk \(\min\{R_Z,R_W\}-I\).)

\begin{theorem}\label{thm:ensemble_c}
Suppose that, for each \(S\in\{Z,W\}\),
Assumptions~\ref{assm:data_c}--\ref{assm:networks_c} hold with
\((Z,\widehat f,\mathcal D_A,h,\delta,n_A)\) replaced by
\((S,\widehat f_S,\mathcal D_{A,S},h_S,\delta_S,n_{A,S})\), with the parameters
of these assumptions subscripted by \(S\) accordingly (e.g., \(K_{h_S}\),
\(F_{h_S}\), \(\sigma_{A,S}\)) and with a common \(\sigma_T\). 
Suppose further that \(\mathcal D_T\) is independent of
\((\mathcal D_{A,Z},\mathcal D_{A,W})\), and that the split is proportional,
\(n_{\rm tr}\asymp n_{\rm val}\asymp n_T\).
Then:
\begin{enumerate}[label={(\roman*)},leftmargin=*]
\item if
\(
n_{\rm val}^{-1/2}
=o\bigl(\mathbb E I+\min_{S\in\{Z,W\}}\mathbb E R_S-\mathbb E\min\{R_Z,R_W\}\bigr)
\)
along a joint sequence \(n_{A,Z},n_{A,W},n_T\to\infty\), then, for all
sufficiently large \(n_{A,Z}\), \(n_{A,W}\), and \(n_T\),
\[
\mathbb E R(\widehat g_{\rm ens},g)<\min_{S\in\{Z,W\}}\mathbb E R_S;
\]
\item in any case, for all sufficiently large \(n_{A,Z}\), \(n_{A,W}\), and \(n_T\),
\[
\mathbb E R(\widehat g_{\rm ens},g)
\le\min_{S\in\{Z,W\}}\mathbb E R_S
+\widetilde C_{\rm val}\,n_{\rm val}^{-1/2},
\]
where \(\widetilde C_{\rm val}\in(0,\infty)\) depends only on \(\sigma_T\) and
\(K_{h_S}\), \(K_{\delta_S}\), \(F_{h_S}\), \(F_{\delta_S}\), \(S\in\{Z,W\}\).
\end{enumerate}
\end{theorem}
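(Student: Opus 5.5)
We prove both parts by conditioning on the first-step fits and the training split, so that the eight candidate functions $\widehat h_S$, $\widetilde\delta_S$ ($S\in\{Z,W\}$) are fixed and uniformly bounded by $F_{h_S}+F_{\delta_S}$. The selection in Step~2 is then a least-squares choice over the class
\[
\mathcal M=\bigl\{m_{q,\lambda}=q\,\widehat g_{Z,\lambda_Z}+(1-q)\,\widehat g_{W,\lambda_W}:\ q\in[0,1],\ \lambda\in\{0,1\}^2\bigr\},
\]
fitted on $\mathcal D_{\rm val}$, which is independent of everything else. The central step is an oracle inequality of the form
\[
\mathbb E\bigl[R(\widehat g_{\rm ens},g)\mid \text{fits}\bigr]\le \inf_{m\in\mathcal M}R(m,g)+C\,n_{\rm val}^{-1/2}.
\]

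To prove the oracle inequality, write the empirical criterion as $\frac1{n_{\rm val}}\sum_k\{Y_k-m(X_k)\}^2$ and expand it around $g$. Its excess over the same criterion at $g$ equals
\[
\|m-g\|_{n}^2-\frac{2}{n_{\rm val}}\sum_k\epsilon_{X,k}\{m(X_k)-g(X_k)\}.
\]
Since $\widehat g_{\rm ens}$ minimizes this over $\mathcal M$, the task reduces to controlling
\[
\sup_{m\in\mathcal M}\Bigl|\|m-g\|_n^2-\|m-g\|_2^2\Bigr|
\qquad\text{and}\qquad
\sup_{m\in\mathcal M}\Bigl|n_{\rm val}^{-1}\sum_k\epsilon_{X,k}(m-g)(X_k)\Bigr|.
\]
Each $m-g$ is affine in $q$ for fixed $\lambda$, so both suprema are maxima over four one-parameter families that are linear or quadratic in $q$. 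They reduce to finitely many centered empirical averages of bounded or sub-Gaussian variables, namely $(\widehat g_{S,\lambda}-g)(\widehat g_{S',\lambda'}-g)$ and $\epsilon_X(\widehat g_{S,\lambda}-g)$. Each such average has expectation of order $n_{\rm val}^{-1/2}$, with a constant depending on $\sigma_T$, $F_{h_S}$, $F_{\delta_S}$ and on $\|g\|_\infty\le K_{h_S}+K_{\delta_S}$. This gives the oracle inequality with $\widetilde C_{\rm val}$ depending only on the stated quantities. We deliberately use this slow-rate argument rather than a localized one, because it matches the $n^{-1/2}$ cost and is what the theorem states.

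For part (ii), note that $\mathcal M$ contains $\widehat g_{S,\lambda_S^{\rm ora}}$ for each $S$: take $q=1$ or $q=0$ with the oracle $\lambda$. Hence $\inf_{\mathcal M}R\le\min\{R_Z,R_W\}$. Taking expectations and using $\mathbb E\min\{R_Z,R_W\}\le\min_S\mathbb E R_S$ yields (ii).

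For part (i), we use the stated calculation that the best convex combination of the two oracle estimators has risk $\min\{R_Z,R_W\}-I$. Verify this by minimizing the quadratic
\[
qR_Z+(1-q)R_W-q(1-q)D,
\]
which follows from $\|q a+(1-q)b\|^2=q\|a\|^2+(1-q)\|b\|^2-q(1-q)\|a-b\|^2$ with $a=\widehat g_Z-g$ and $b=\widehat g_W-g$. This combination lies in $\mathcal M$, so
\[
\mathbb E R(\widehat g_{\rm ens},g)\le \mathbb E\min\{R_Z,R_W\}-\mathbb E I+\widetilde C_{\rm val}n_{\rm val}^{-1/2}.
\]
Rewrite the right-hand side as
\[
\min_S\mathbb E R_S-\Bigl(\mathbb E I+\min_S\mathbb E R_S-\mathbb E\min\{R_Z,R_W\}\Bigr)+\widetilde C_{\rm val}n_{\rm val}^{-1/2}.
\]
By hypothesis the bracketed gap dominates $n_{\rm val}^{-1/2}$, so for large samples the bound is strictly less than $\min_S\mathbb E R_S$.

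The main obstacle is the oracle inequality. The difficulty is to obtain a uniform $n^{-1/2}$ bound over the continuous weight $q$ with constants free of the network-class complexity. This is why the argument conditions on the training fits and exploits the affine dependence on $q$ to reduce the supremum to finitely many averages, instead of invoking a general entropy bound. Rates from Theorem~\ref{thm:rates_c} are not needed here, because everything is stated relative to $R_S$.
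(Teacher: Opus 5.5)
Your proposal is correct and follows essentially the paper's proof. You condition on everything except $\mathcal D_{\rm val}$ and use the quadratic dependence on $q$ to reduce the uniform deviation to finitely many centered averages, each of expected size $n_{\rm val}^{-1/2}$ times a second-moment constant; you then take the oracle endpoints for (ii) and the optimal convex weight (risk $\min\{R_Z,R_W\}-I$) for (i). The only cosmetic difference is that the paper expands the raw loss $\{Y-m(X)\}^2=A_0+qA_1+q^2A_2$ and cancels $P(\epsilon_X^2)$, whereas you expand the excess loss around $g$ into an empirical-norm term plus a centered noise-multiplier term.
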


Theorem~\ref{thm:ensemble_c}(i) gives conditions under which the ensemble
estimator strictly improves on the better of the two oracle estimators. The
gain in its condition has two nonnegative parts. The first, \(\mathbb E I\),
is the aggregation gain. It is positive if and only if \(D>|R_Z-R_W|\), that
is, when the two estimators disagree more than their risks differ. The
second, \(\min_{S\in\{Z,W\}}\mathbb E R_S-\mathbb E\min\{R_Z,R_W\}\), is the
gain from selecting the better black box after seeing the data rather than
fixing one in advance. Strict improvement follows once their sum dominates
the validation cost, and it suffices that either part does. For instance,
\(n_{\rm val}^{-1/2}=o(\mathbb E I)\) is enough, and this occurs when the two
estimators are comparably accurate but distinct.
Theorem~\ref{thm:ensemble_c}(ii) complements this with a guarantee of safety.
Even when the condition in (i) fails, the ensemble is never worse than the
better oracle estimator beyond the validation cost
\(\widetilde C_{\rm val}\,n_{\rm val}^{-1/2}\).

\section{Simulation}\label{sec:simulation}

We evaluate the predictive performance of the proposed method (BB-transfer) against the two baselines of Section~\ref{sec:comparison_c}, non-transfer NN learning (non-transfer NN) and imputation-based learning (Imp+NN).
The simulations also serve to corroborate the theoretical results of Theorems~\ref{thm:rates_c} and~\ref{thm:ensemble_c}.

We consider two main scenarios. With a single black box (Section~\ref{sec:sim-single}), we examine the influence of the three elements identified by Theorem~\ref{thm:rates_c}, the smoothness of $\delta$, its magnitude, and the size $n_A$ of the unlabeled sample $\mathcal D_A$ (which governs the error of $\widehat h$). With two black boxes (Section~\ref{sec:sim-ens}), we investigate whether the ensemble approach improves predictive performance over transferring either black box alone.

\subsection{Simulation design}\label{sec:sim-design}

We instantiate model \eqref{eq:decom}, which assumes a single black box $\widehat f$:
\begin{equation}\label{eq:sim-general-model}
    Y = \underbrace{\EE[\widehat f(Z) \mid X] }_{= h(X)} + \delta(X) + \epsilon_X, \quad \epsilon_X \sim N(0,\sigma^2).
\end{equation}
We generate $X \sim \mathrm{Uniform}[0,1]^{11}$ and set its relationship to $Z \in \mathbb R^{10}$ as 
\begin{equation}\label{eq:sim-z-on-x} 
    Z = 0.9 X^\top B + 0.1 V,
\end{equation}
where $V \sim \mathrm{Uniform}[0, 1]^{10}$ is random noise. We construct the matrix $B = [b_1, \dots, b_{10}] \in \mathbb{R}^{11 \times 10}$ column by column. We draw ${b}_k \sim \mathrm{Uniform}[0, 1]^{11}$ for $k = 1, \dots, 10$, and normalize each by its $\ell_1$-norm.

We define $\widehat f$ as: 
$$\widehat f(z) = \sqrt{\frac{2}{5}} \cdot \frac{1}{5}\sum_{k=1}^{5} \cos\bigl(10 \cdot \omega_k^\top z + \phi_k\bigr),
$$
where $\omega_k \sim t_{2,10}$, a $10$-dimensional multivariate $t$-distribution with two degrees of freedom, and $\phi_k \sim \mathrm{Uniform}[0, 2\pi]$ are drawn independently.
The value of $10\,\omega_k$ can make $\widehat f$ highly irregular.

We define $\delta$ as:
$$\delta(x) = M \cdot \tanh \left[ \sin\bigl(\pi x^\top \beta_1\bigr) \cdot \left| \frac{2(x^\top \beta_2 - 0.5)}{\sqrt{11}} \right|^{\,s} \right],$$
where $\beta_1, \beta_2 \in \mathbb{R}^{11}$ are drawn from $N(0, I_{11})$ and then normalized to have unit $\ell_2$-norm.
The parameter $s$ controls smoothness, with larger values yielding smoother functions, and $M$ controls magnitude, $\|\delta\|_2^2 \le M^2$.

We set $\sigma = \mathrm{sd}\{\delta(X)\}$, maintaining a signal-to-noise ratio (SNR) of 1 relative to $\delta$.
All quantities defining the design (i.e., the projection matrix $B$, the pairs $(\omega_k, \phi_k)$ defining $\widehat f$, and the directions $\beta_1, \beta_2$ defining $\delta$) are drawn once and held fixed across all replications; each replication redraws only $X$, $V$, and $\epsilon_X$. The parameters $s$ and $M$ vary across the experiments, with values specified in Section~\ref{sec:sim-single}.

We generate a labeled sample $\mathcal D_T$ of size $n_T=400$ and an unlabeled sample $\mathcal D_A$ of size $n_A=6400$, unless otherwise specified. We evaluate performance by the mean squared error (MSE) on an independent test set of size $n_{\text{test}}=10{,}000$. For each method, we report the average relative MSE (against the non-transfer NN baseline) with it's standard error across $R = 200$ replications: 
$$ \text{Relative MSE} = \frac{1}{R} \sum_{r=1}^{R} \frac{\text{MSE}_r(\text{method})}{\text{MSE}_r(\text{non-transfer NN})}, $$
where $\text{MSE}_r(\cdot)$ denotes the test set MSE in the $r$-th replication.

Non-transfer NN fits a deep ReLU network to $\mathcal D_T$ alone.
Imp+NN implements the estimator of Section~\ref{sec:imputation_c}, with a linear regression for $\widehat m$ and the Step~2 correction included.

To tune the architectures for $h$ and $\delta$ in our method, we randomly partition the corresponding data into training (80\%) and validation (20\%) sets and select the configuration that minimizes validation loss, searching over hidden layers $\in \{1, 2\}$ and widths $\in \{8, 16\}$ for $h$, hidden layers $\in \{2, 4\}$ and widths $\in \{64, 128\}$ for $\delta$, and dropout rates $\in \{0.1, 0.2\}$ for both. 
All networks are trained with the Adam optimizer (learning rate $3\times 10^{-3}$, batch size 64, at most 200 epochs with early stopping) and $\ell_2$ regularization ($10^{-5}$).
For the other methods, the neural networks are trained and tuned in the same way as the second step of the proposed method.

\subsection{Single black box: transferability}\label{sec:sim-single}

We conduct three simulations, varying one of $s$, $M$, and $n_A$ while holding the other two fixed:
\begin{enumerate}[label=(\alph*)]
    \item the smoothness $s \in \{0.03, 0.6, 1.2, 3.6\}$ of $\delta$ ($n_A = 6400$, $M = 1$);
    \item the magnitude $M \in \{0.2, 0.6, 1, 2\}$ of $\delta$ ($n_A = 6400$, $s = 0.3$);
    \item the unlabeled sample size $n_A \in \{1600, 3200, 6400, 12800\}$ ($s = 2.4$, $M = 1$).
\end{enumerate}
The correspondingly labeled panels (a)--(c) of Figure~\ref{fig:sim} report the results.

The proposed method performs at least as well as the non-transfer NN in every setting, with the benefit of transfer increasing as $s$ grows (a), as $M$ shrinks (b), and as $n_A$ grows (c); these trends are consistent with Theorem~\ref{thm:rates_c}.
Imp+NN, in contrast, never performs appreciably better than the baseline. A larger unlabeled sample improves the imputation of $Z$ but does not shrink the Jensen gap of the highly oscillatory $\widehat f$.

\begin{figure}[t]
    \centering
    \begin{subfigure}[b]{0.48\textwidth}
        \centering
        \includegraphics[width=\textwidth]{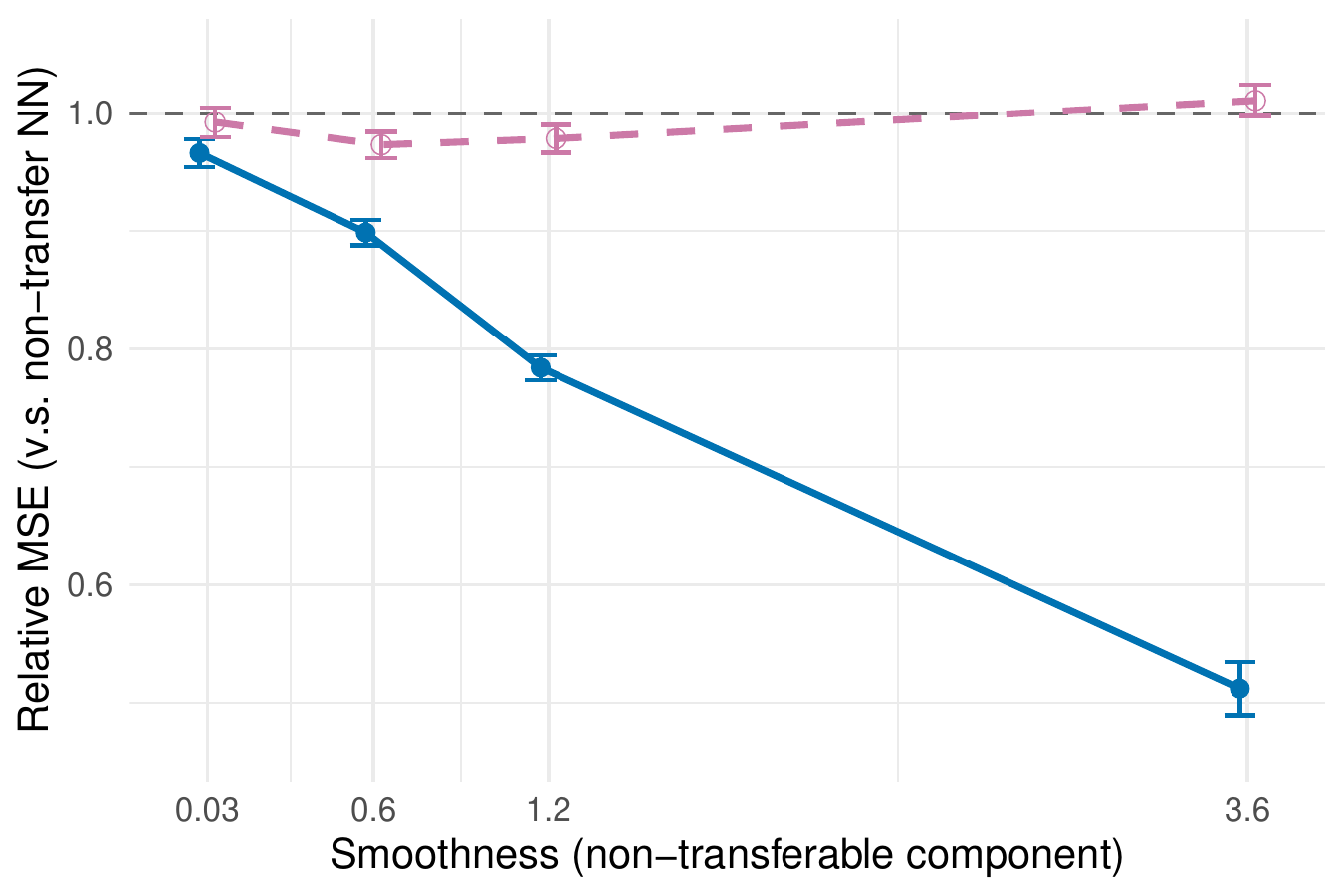}
        \caption{Varying smoothness $s$}
        \label{fig:sim-s}
    \end{subfigure}
    \hfill
    \begin{subfigure}[b]{0.48\textwidth}
        \centering
        \includegraphics[width=\textwidth]{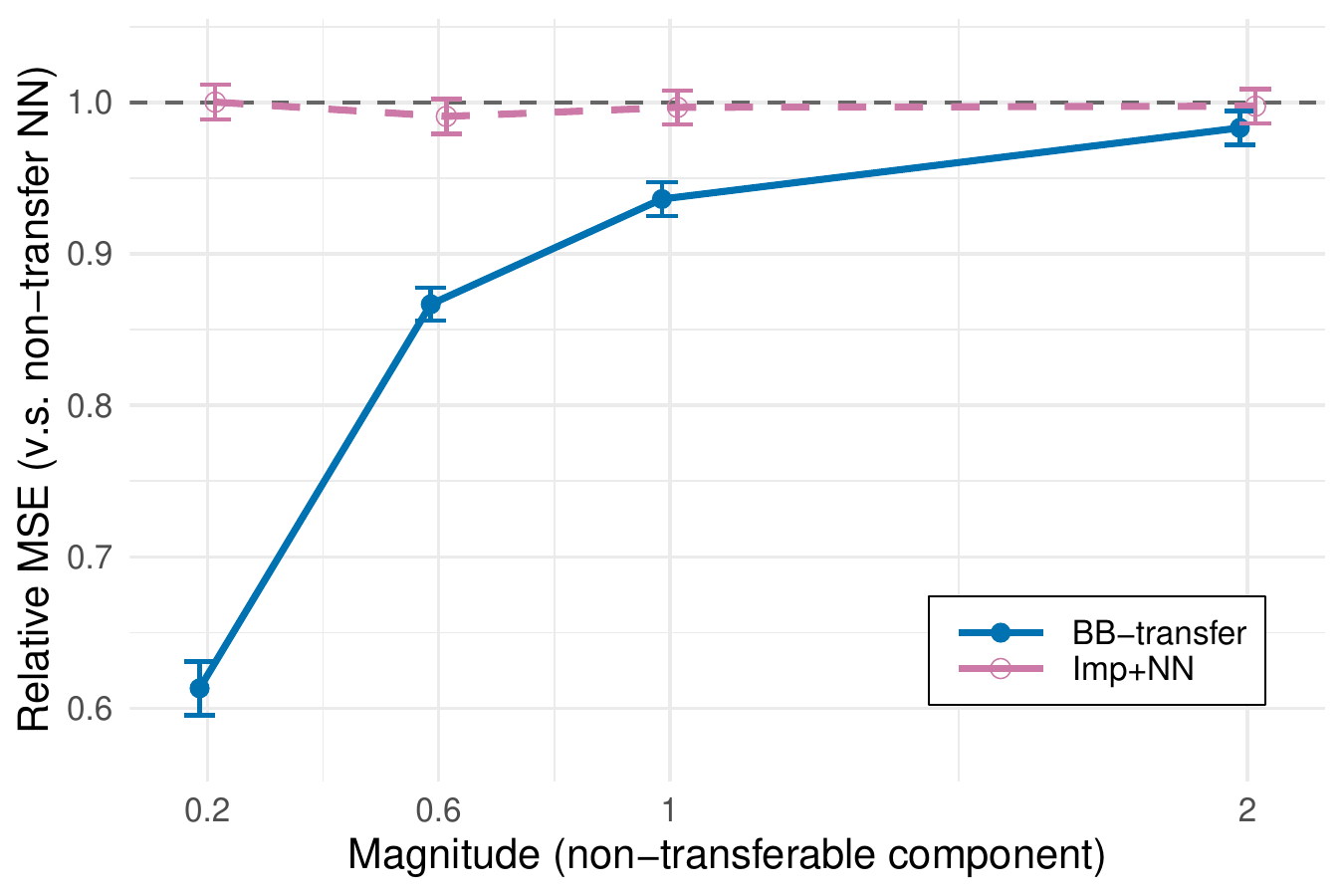}
        \caption{Varying magnitude $M$}
        \label{fig:sim-M}
    \end{subfigure}

    \medskip

    \begin{subfigure}[b]{0.48\textwidth}
        \centering
        \includegraphics[width=\textwidth]{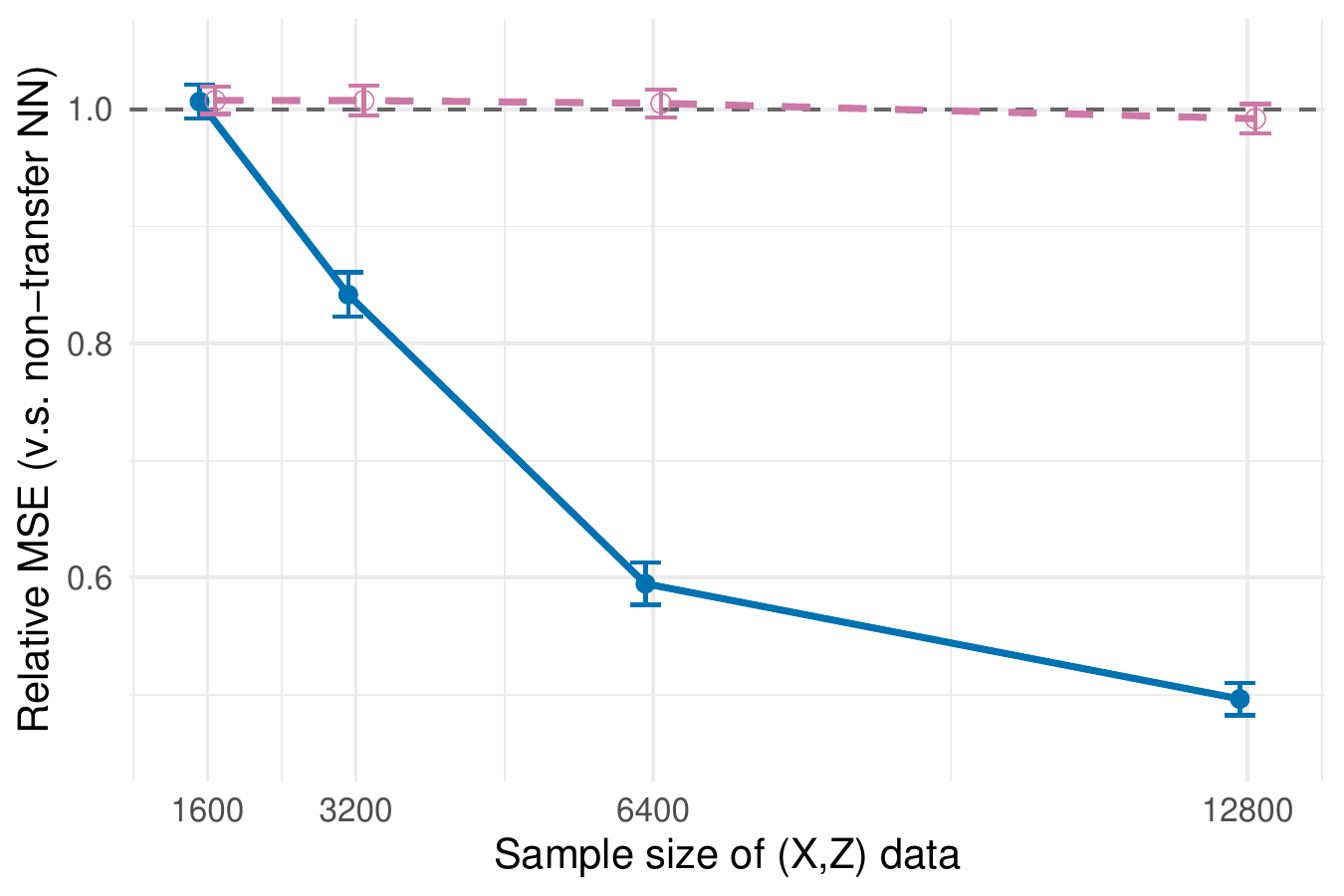}
        \caption{Varying unlabeled sample size $n_A$}
        \label{fig:sim-nA}
    \end{subfigure}
    \hfill
    \begin{subfigure}[b]{0.48\textwidth}
        \centering
        \includegraphics[width=\textwidth]{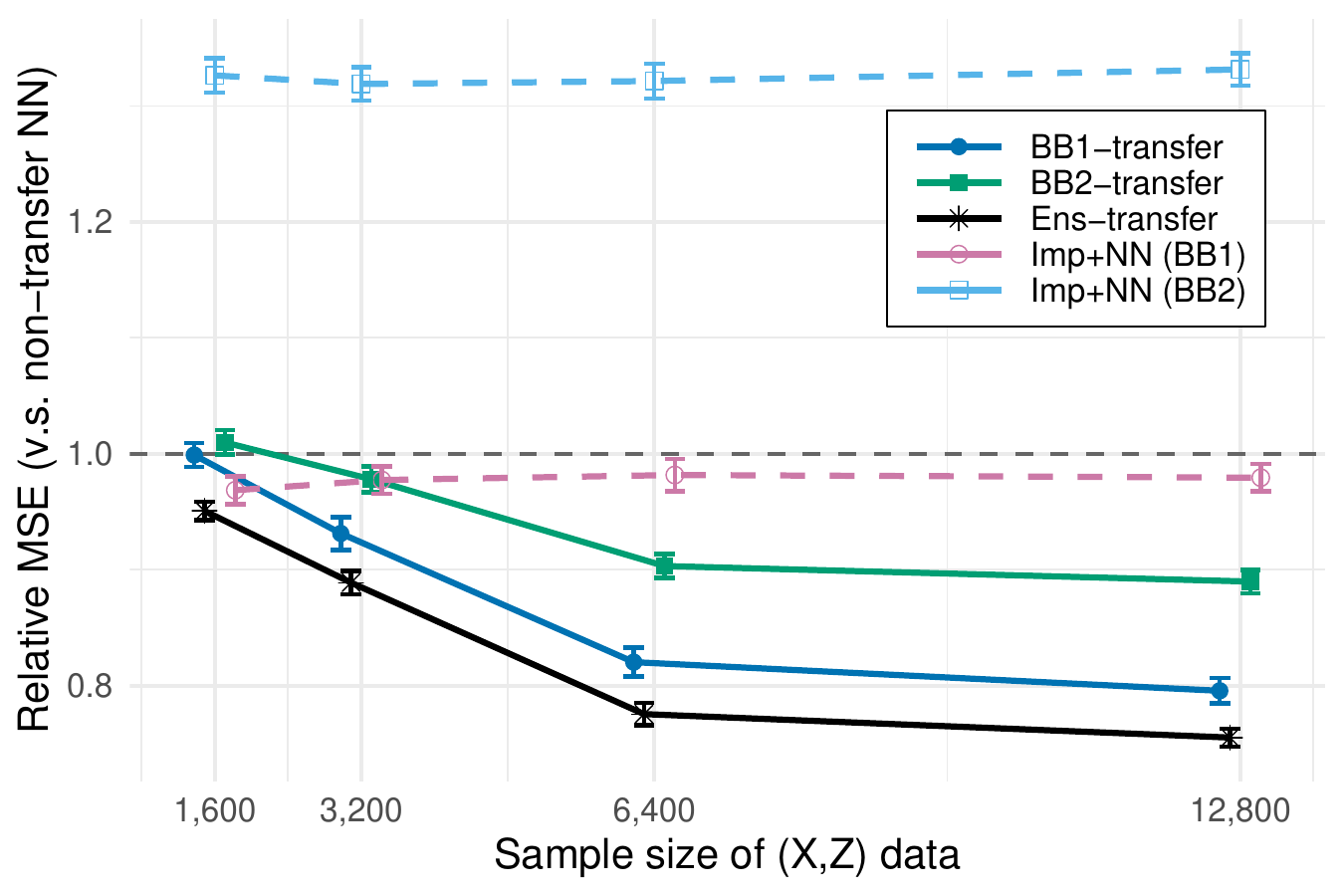}
        \caption{Two black boxes, varying $n_A$}
        \label{fig:sim-ens}
    \end{subfigure}
    \caption{Simulation results. Test MSE relative to the non-transfer NN (dashed horizontal line at 1), mean $\pm 1.96$ standard errors over 200 replications.
    (a)--(c): single black box.
    (d): two black boxes.}
    \label{fig:sim}
\end{figure}

\subsection{Two black boxes: ensemble gains}\label{sec:sim-ens}

We now consider two black boxes, $\widehat f_Z$ and $\widehat f_W$, and generate the response as
\begin{equation}\label{eq:sim-ens-model}
    Y = \underbrace{\EE[\widehat f_Z(Z) \mid X]}_{=\, h_Z(X)} + \underbrace{\EE[\widehat f_W(W) \mid X]}_{=\, h_W(X)} + \delta(X) + \epsilon_X.
\end{equation}
From the perspective of $\widehat f_Z$, the transferable component is $h_Z$ and the non-transferable component is $h_W + \delta$, and symmetrically for $\widehat f_W$.

We take $\widehat f_Z = \tfrac{1}{2}\,\widehat f$, where $\widehat f$ is the black-box function of Section~\ref{sec:sim-design}, and construct $\widehat f_W$ in the same way but with an independently drawn set of parameters $(\omega_k, \phi_k)$. The feature vector $W \in \mathbb{R}^{10}$ is generated as $Z$ in \eqref{eq:sim-z-on-x}, namely $W = 0.9\, X^\top B_W + 0.1\, V_W$, with $B_W$ and $V_W$ drawn independently in the same manner as $B$ and $V$. All remaining components, along with the estimation and evaluation procedure, follow Section~\ref{sec:sim-design}.

We fix $s = 2.4$ and $M = 1$ and vary $n_{A,Z} = n_{A,W} \in \{1600, 3200, 6400, 12800\}$.
Panel~(d) of Figure~\ref{fig:sim} reports the results. The ensemble performs best by exploiting both $h_Z$ and $h_W$, and transferring either black box alone still improves on non-transfer NN and Imp+NN. By construction, the two black boxes are comparably accurate but distinct, and the ensemble's gain over either single transfer is consistent with Theorem~\ref{thm:ensemble_c}(i). As in panel~(c), this advantage over non-transfer NN grows with the unlabeled sample size $n_{A,Z} = n_{A,W}$.

\section{Real Data Analysis}
\label{sec:real_data}

We apply the proposed method to an oceanographic application, estimating chlorophyll-a concentration ($Y$) from satellite ocean-color measurements \citep{o1998ocean,o2019chlorophyll,hu2019improving}. The covariates ($X,Z$ and $W$) are these measurements from different satellite sensors, obtained from the data of \citet{NASA_MODISA_RRS_2025,NASA_S3BOLCI_RRS_2025,NASA_VIIRSN_RRS_2025}. The response $Y$ is obtained from \citet{valente2022compilation} and matched to the satellite record by date and location.

The goal is to build a prediction function for the Sentinel-3B sensor. Because it is newer, it has few labeled observations $(X,Y)$.
Two earlier sensors, MODIS-Aqua and SNPP-VIIRS, have accumulated far more labeled data and supply prediction functions trained on it. Because they record at different wavelengths and resolutions from Sentinel-3B, their covariates ($Z$ and $W$) do not match $X$, and the prediction functions cannot be applied to $X$ directly. The sensors did, however, operate concurrently for a period, yielding collocated measurements that can link the feature spaces.

In turn, three sources are available. First, the black-box predictors $\widehat f_Z$ and $\widehat f_W$ are fixed functions built from abundant historical data, with $\widehat f_Z$ mapping $Z \in \mathbb{R}^{10}$ to $Y$ and $\widehat f_W$ mapping $W \in \mathbb{R}^{5}$ to $Y$. Each is a neural network, as in \citet{hieronymi2017olci}, trained on $12{,}903$ and $3{,}880$ labeled source pairs, respectively. Second, the unlabeled data $\mathcal{D}_{A,Z}$ and $\mathcal{D}_{A,W}$ are large sets of these collocated pairs, with the response $Y$ unobserved. Here $\mathcal{D}_{A,Z}$ contains $59{,}367$ pairs $(X, Z)$ and $\mathcal{D}_{A,W}$ contains $61{,}980$ pairs $(X, W)$. Third, the target data $\mathcal{D}_T$ form a labeled sample of size $n_T=198$ of pairs $(X, Y)$ with the new covariates $X \in \mathbb{R}^{11}$.

We compare the proposed single black-box transfers and their ensemble against the baselines non-transfer NN and Imp+NN from Section~\ref{sec:simulation}. 
We randomly split $\mathcal{D}_T$ into a training set ($(100-\rho)$\%) and a test set ($\rho$\%), varying the test fraction $\rho \in \{10, 20, 30\}$. Performance is measured by the test-set MSE, and we report the average MSE relative to non-transfer NN, with its standard error, across $R = 200$ random splits.

All networks use $\ell_2$ regularization ($10^{-5}$) and are trained by the Adam optimizer, with learning rate $10^{-2}$, full-batch updates, and at most 200 epochs with early stopping. Because the target sample is small, each network architecture is selected by five-fold cross-validation, with the validation loss averaged over three independent fold partitions. The candidates are single-hidden-layer networks with width in $\{8, 16\}$ and dropout rate in $\{0.1, 0.2\}$. The blending weight $q$ is chosen by the same cross-validation under the one-standard-error rule, taking, among the weights whose cross-validated error is within one standard error of the minimum, the one closest to $0.5$. (This acts as a regularizer, favoring a more equal blend when the fold variance is high.)

Figure~\ref{fig:real_data} summarizes the results. Transferring either black box alone achieves lower error than non-transfer NN and than its Imp+NN counterpart, and the ensemble attains the lowest error across all training fractions.
The gap between the transfer estimators and non-transfer NN narrows slightly as the training fraction increases. More labels help every method estimate the non-transferable component, so the gap lies in the transferable components. Our estimators learn these from the unlabeled pairs, while non-transfer NN learns them from the labels and hence improves as labels accumulate.

\begin{figure}[t]
    \centering
    \includegraphics[width=0.7\textwidth]{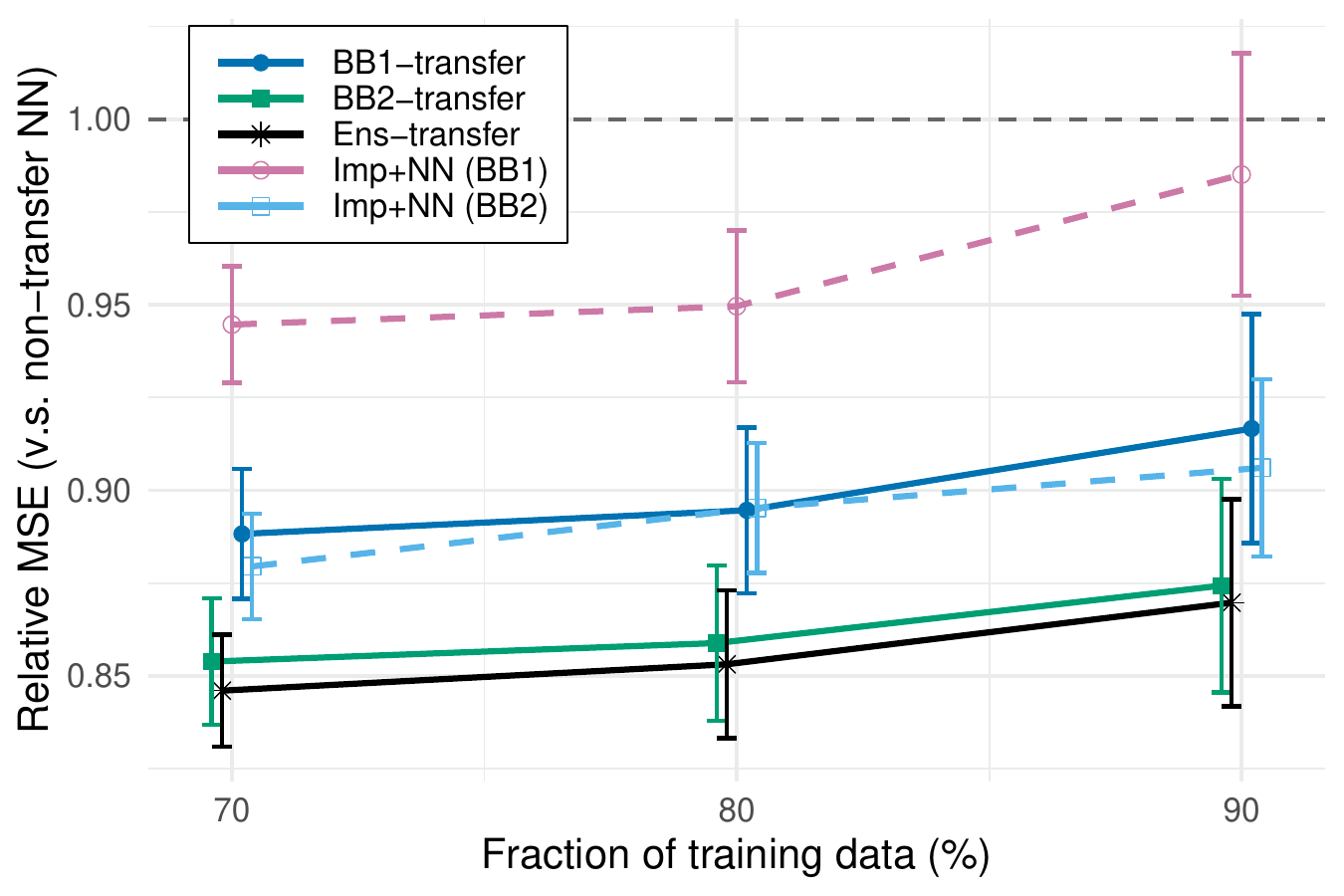}
    \caption{Real data analysis results. Test MSE relative to the non-transfer NN (dashed horizontal line at 1), mean $\pm 1.96$ standard errors over 200 random splits. BB1 and BB2 denote $\widehat f_Z$ (MODIS-Aqua) and $\widehat f_W$ (SNPP-VIIRS), respectively.}
    \label{fig:real_data}
\end{figure}

\section{Discussion}\label{sec:discussion}

We study transfer learning from a black box whose feature space differs from the target's. Our two-step estimator decomposes the target regression $g$ into $h$, the part the black box explains, plus $\delta$, the part it leaves out, and estimates each from the sample suited to it. Our prediction risk bounds show that transfer succeeds either when the black box is accurate, so that $\delta$ is small, or when it captures the complex part of $g$, so that $\delta$ is simple to learn.

Our two-step scheme can accommodate any estimator at each step. Because each step is a regression on the target features, methods such as kernels, random forests, or boosting apply, and the two steps need not use the same one. This can be useful in applications, where the preferred estimator may depend on the domain and the available data. Extending the theory to other component estimators would be an interesting direction.

Our theory concerns prediction. Inference is a natural next step, such as confidence bands for $g$ or for the non-transferable component $\delta$, the latter aided by the Neyman-orthogonal second step of Remark~\ref{rem:orth_c}. This would also connect our framework to prediction-powered inference, discussed in Section~\ref{sec:related}.

\section*{Acknowledgments}
We thank Jacob Bien for a helpful discussion. This work received a Paper Honorable Mention at STAI-X 2026 (Statistics and Trustworthy AI for Cross-Domain Acceleration). We acknowledge the NASA Ocean Biology Distributed Active Archive Center (OB.DAAC) for the use of data.

\section*{Data availability statement}
The data used in the real-data analysis (Section~\ref{sec:real_data}) are
publicly available.
The satellite remote-sensing reflectance data are available from the NASA
Ocean Biology Distributed Active Archive Center at
\url{https://doi.org/10.5067/AQUA/MODIS/L3M/RRS/2022.0},
\url{https://doi.org/10.5067/S3B/OLCI/L3M/ERR/RRS/2022.0}, and
\url{https://doi.org/10.5067/SUOMI-NPP/VIIRS/L3M/RRS/2022.0}
\citep{NASA_MODISA_RRS_2025,NASA_S3BOLCI_RRS_2025,NASA_VIIRSN_RRS_2025}.
The in situ chlorophyll-a data are openly available in PANGAEA at
\url{https://doi.org/10.1594/PANGAEA.941318} \citep{valente2022compilation}.

\clearpage

\begin{center}
{\LARGE\bf Supplementary Material}
\end{center}

\renewcommand{\thesection}{S\arabic{section}}
\renewcommand{\theequation}{S.\arabic{equation}}
\renewcommand{\thetheorem}{S.\arabic{theorem}}
\renewcommand{\thelemma}{S.\arabic{lemma}}
\renewcommand{\theproposition}{S.\arabic{proposition}}
\renewcommand{\thecorollary}{S.\arabic{corollary}}
\renewcommand{\theremark}{S.\arabic{remark}}

\setcounter{section}{0}
\setcounter{equation}{0}
\setcounter{theorem}{0}
\setcounter{lemma}{0}
\setcounter{proposition}{0}
\setcounter{corollary}{0}
\setcounter{remark}{0}

This Supplementary Material provides the proofs of all results in the
main paper, together with the supporting lemmas and additional details.
Section~\ref{sec:supplement_notation} sets notation.
Section~\ref{sec:appendix_theory} proves the risk upper bounds of the
main paper (Theorems~\mainref{thm:rates_c} and~\mainref{thm:naive_c}).
Section~\ref{sec:appendix_remarks} provides details for
Remarks~\mainref{rem:shift_c} and~\mainref{rem:orth_c}, the
covariate-shift extension and the Neyman-orthogonal second step.
Section~\ref{sec:appendix_target_only} proves the minimax lower bound
(Theorem~\mainref{thm:minimax_c}).
Section~\ref{sec:imputation} develops the risk bound for
imputation-based learning and its comparison with the non-transfer and
two-step estimators, supporting Section~\mainref{sec:imputation_c} of
the main paper. Section~\ref{sec:appendix_multiple} proves the ensemble
results of Section~\mainref{sec:multiple}.
Section~\ref{sec:proof_auxiliary_results} collects the proofs of the
auxiliary lemmas.

\section{Notation}
\label{sec:supplement_notation}
Throughout, notation is as in the main paper.
In addition, write \(U=\widehat f(Z)\).
For measurable functions \(a\), write
\[
\|a\|_2^2=\int a^2(x)\,dP_X(x),
\qquad
R(\widehat r,r)=\|\widehat r-r\|_2^2,
\]
so that \(\mathbb E R(\widehat r,r)\) is the prediction risk of the main
paper. 
As there, expectations of risks are taken over all training samples used to construct the estimator.

For a uniformly bounded function class \(\mathcal H\), let
\(N_\infty(\eta,\mathcal H)\) be its covering number under the supremum norm and define
\begin{equation}
\mathfrak e_n(\mathcal H)
=
\frac{\log\{N_\infty(n^{-1},\mathcal H)\vee3\}}{n}
+
\frac1n.
\label{eq:generic_entropy_cost}
\end{equation}

\section{Proofs for Section~\mainref{sec:theory_c}}
\label{sec:appendix_theory}

This section proves the risk upper bounds of the main paper,
Theorems~\mainref{thm:rates_c} and~\mainref{thm:naive_c}.
Section~\ref{sec:technical_relu_tools} collects technical lemmas used in
the risk upper bound computations,
Section~\ref{sec:aux_nontransfer} constructs the sparse-ReLU network
class for the non-transfer estimator, and
Section~\ref{sec:proof_plugin} proves Lemma~\mainref{lem:oracle_c} and
the two theorems.

\subsection{Technical tools for the risk upper bounds}
\label{sec:technical_relu_tools}

This subsection collects concentration and oracle inequalities for least
squares over bounded function classes, followed by representation,
approximation, and entropy bounds for sparse-ReLU networks.

\begin{lemma}
\label{lem:conditional_subgaussian_moments}
If \(\mathbb E\{\exp(t\zeta)\mid X\}\le\exp(\sigma^2t^2/2)\) for every
\(t\in\mathbb R\), then
\[
\mathbb E(\zeta\mid X)=0,
\qquad
\mathbb E(\zeta^2\mid X)\le\sigma^2,
\qquad
\mathbb E(|\zeta|\mid X)\le\sigma
\]
almost surely.
\end{lemma}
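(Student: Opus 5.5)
The plan is to work throughout with a regular conditional distribution of \(\zeta\) given \(X\), so that every conditional statement below reduces to an ordinary statement about a sub-Gaussian law, holding for almost every value of \(X\). Two reductions keep this clean. First, the hypothesis holds for all \(t\in\mathbb R\) almost surely; by continuity of both sides in \(t\), it suffices to impose it on a countable dense set, which gives one common null set off which the bound holds for every \(t\). Second, finiteness of the conditional moment generating function near \(0\) makes all conditional moments finite, so differentiating under the integral is justified.

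\emph{Zero mean.} Let \(\psi(t)=\mathbb E\{\exp(t\zeta)\mid X\}\), which is finite for all \(t\). On the good event, \(\psi\) is smooth near \(0\), with \(\psi(0)=1\), \(\psi'(0)=\mathbb E(\zeta\mid X)\) and \(\psi''(0)=\mathbb E(\zeta^2\mid X)\). Set \(\varphi(t)=\exp(\sigma^2t^2/2)\). Then \(\varphi(0)=1\), \(\varphi'(0)=0\) and \(\varphi''(0)=\sigma^2\).

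The function \(\varphi-\psi\) is nonnegative and vanishes at \(0\), so \(t=0\) is a minimum. Hence its first derivative there is zero, which gives \(\mathbb E(\zeta\mid X)=\psi'(0)=0\).

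\emph{Second moment.} Now that the first derivatives agree, a second-order Taylor expansion of \(\varphi(t)-\psi(t)\ge0\) gives
\[
\tfrac{t^2}{2}\bigl\{\sigma^2-\mathbb E(\zeta^2\mid X)\bigr\}+o(t^2)\ge0 .
\]
Dividing by \(t^2\) and letting \(t\to0\) yields \(\mathbb E(\zeta^2\mid X)\le\sigma^2\).

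A version that avoids derivatives is also available. Use \(\exp(u)\ge1+u\) together with \(\bigl(\psi(t)-1\bigr)/t\to\mathbb E(\zeta\mid X)\) by dominated convergence, and compare the limits as \(t\downarrow0\) and \(t\uparrow0\). For the second moment, apply the same argument to the symmetrized quantity \(\bigl(\psi(t)+\psi(-t)-2\bigr)/t^2\), which tends to \(\mathbb E(\zeta^2\mid X)\) and is bounded by \(\bigl(2\exp(\sigma^2t^2/2)-2\bigr)/t^2\to\sigma^2\). Dominated convergence applies because \(|\zeta|^2\exp(|t\zeta|)\) is conditionally integrable for small \(|t|\), which is immediate from the finiteness of \(\psi(\pm2t)\).

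\emph{First absolute moment.} The last claim follows from the conditional Jensen (or Cauchy--Schwarz) inequality:
\[
\mathbb E(|\zeta|\mid X)\le\{\mathbb E(\zeta^2\mid X)\}^{1/2}\le\sigma .
\]

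The only step I expect to need care is the measure-theoretic one: making the interchange of limits and conditional expectation rigorous with a single null set. The regular conditional distribution and the countable-dense-\(t\) argument above handle this.
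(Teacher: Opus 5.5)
Your proof is correct and is essentially the argument the paper defers to: the paper omits the proof and cites Wainwright's textbook, where the zero mean and the variance bound come from comparing the moment generating function with $e^{\sigma^2t^2/2}$ to second order at $t=0$, as you do, and the bound on $\mathbb E(|\zeta|\mid X)$ then follows from conditional Jensen. Your reduction to a regular conditional distribution off a single null set supplies the conditioning step the paper leaves implicit. Only countably many $t$ (for instance a sequence $t_n\to0$ of each sign) are needed there, so the continuity-in-$t$ step can be dropped or replaced by Fatou's lemma.
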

\noindent The proof is given in \citet{wainwright2019high}, so we omit the details here.
The following results are proved in Section~\ref{sec:proof_auxiliary_regression}.

\begin{lemma}
\label{lem:relative_bernstein}
Let \(Z_1,\ldots,Z_n\) be independent random variables satisfying
\(0\le Z_i\le B\), and put
\(\mu=n^{-1}\sum_{i=1}^n\mathbb E Z_i\).  There is a universal constant
\(C>0\) such that, for every \(u>0\), with probability at least
\(1-2e^{-u}\),
\begin{equation}
\left|
\frac1n\sum_{i=1}^n Z_i-\mu
\right|
\le
\frac14\mu+\frac{CBu}{n}.
\label{eq:relative_bernstein}
\end{equation}
Consequently, on the same event,
\begin{equation}
\mu
\le
2\frac1n\sum_{i=1}^n Z_i+\frac{CBu}{n},
\qquad
\frac1n\sum_{i=1}^n Z_i
\le
2\mu+\frac{CBu}{n}.
\label{eq:empirical_population_norm_comparison}
\end{equation}
\end{lemma}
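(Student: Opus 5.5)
The plan is to apply Bernstein's inequality to the centered variables \(Z_i-\mathbb E Z_i\) and then absorb the variance term using the boundedness \(0\le Z_i\le B\). Since \(Z_i\in[0,B]\), we have \(\operatorname{Var}(Z_i)\le\mathbb E Z_i^2\le B\,\mathbb E Z_i\). Therefore
\[
v:=\frac1n\sum_{i=1}^n\operatorname{Var}(Z_i)\le B\mu .
\]
Moreover \(|Z_i-\mathbb E Z_i|\le B\). Bernstein's inequality, in its two-sided form with a union bound over the two tails, then gives, for every \(u>0\), with probability at least \(1-2e^{-u}\),
\[
\left|\frac1n\sum_{i=1}^n Z_i-\mu\right|\le\sqrt{\frac{2vu}{n}}+\frac{2Bu}{3n}\le\sqrt{\frac{2B\mu u}{n}}+\frac{2Bu}{3n}.
\]

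The key step is to split the square-root term by the AM--GM inequality \(\sqrt{ab}\le a/4+b\), taking \(a=\mu\) and \(b=2Bu/n\). This gives
\[
\sqrt{2B\mu u/n}\le \mu/4+2Bu/n,
\]
and hence \eqref{eq:relative_bernstein} holds with \(C=2+2/3\), or any larger universal constant. This balancing is the only point requiring care, since the coefficient in front of \(\mu\) must be exactly \(1/4\).

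The consequences \eqref{eq:empirical_population_norm_comparison} follow by rearranging on the same event. Write \(\bar Z=n^{-1}\sum_i Z_i\). The lower deviation gives \(\mu-\bar Z\le\mu/4+CBu/n\), so
\[
\tfrac34\mu\le\bar Z+CBu/n,
\qquad\text{hence}\qquad
\mu\le\tfrac43\bar Z+\tfrac43 CBu/n.
\]
The upper deviation gives
\[
\bar Z\le\tfrac54\mu+CBu/n\le2\mu+CBu/n.
\]
To state both consequences with coefficient \(2\) and the same constant \(C\), I would enlarge \(C\) once at the outset, replacing it by \(\tfrac43 C\). The first inequality then becomes \(\mu\le\tfrac43\bar Z+CBu/n\le2\bar Z+CBu/n\), and \eqref{eq:relative_bernstein} remains valid because enlarging \(C\) only weakens it. No step here is a real obstacle; the main care is the bookkeeping of the universal constant.
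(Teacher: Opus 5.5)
Your proposal is correct and follows the paper's proof essentially step for step: Bernstein's inequality with $\operatorname{Var}(Z_i)\le B\,\mathbb E Z_i$, an AM--GM/Young split of $\sqrt{2B\mu u/n}$, and rearrangement. The differences are cosmetic: the paper splits via $2ab\le a^2/4+4b^2$ to get $\mu/8$ (leaving slack), where you land exactly on $\mu/4$; and your explicit enlargement of $C$ by the factor $4/3$ makes precise the constant bookkeeping that the paper's ``elementary rearrangement'' leaves implicit.
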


\begin{lemma}
\label{lem:self_normalized_multiplier}
Let \((X_i,\zeta_i)_{i=1}^n\) be independent, with
\(\mathbb E\{\exp(t\zeta_i)\mid X_i\}\le\exp(\sigma^2t^2/2)\) almost surely
for every \(t\in\mathbb R\).  For any fixed measurable function \(v\), any
\(a>0\), and any \(u>0\),
\begin{equation}
\mathbb P\left(
2\left|\frac1n\sum_{i=1}^n\zeta_i v(X_i)\right|
>
 a\frac1n\sum_{i=1}^n v^2(X_i)
+
\frac{2\sigma^2u}{an}
\,\middle|\,X_1,\ldots,X_n
\right)
\le
2e^{-u}.
\label{eq:self_normalized_multiplier}
\end{equation}
\end{lemma}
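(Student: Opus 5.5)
The plan is to condition on \(X_1,\ldots,X_n\), so that \(v(X_i)\) become fixed numbers \(c_i\), and then apply a Chernoff bound to the linear combination \(S=\frac1n\sum_i\zeta_i c_i\). By independence and the conditional sub-Gaussian assumption, and since \((X_j,\zeta_j)_{j\ne i}\) are independent of \((X_i,\zeta_i)\), the conditional law of \(\zeta_i\) given all the \(X\)'s equals its law given \(X_i\). Hence for every \(t\in\mathbb R\),
\[
\mathbb E\{\exp(tS)\mid X_1,\ldots,X_n\}\le\exp\Bigl(\frac{\sigma^2t^2}{2n^2}\sum_{i=1}^n c_i^2\Bigr)=\exp\Bigl(\frac{\sigma^2t^2 V}{2n}\Bigr),
\qquad V=\frac1n\sum_{i=1}^n c_i^2.
\]
Markov's inequality then gives \(\mathbb P(S>r\mid X)\le\exp\{-nr^2/(2\sigma^2V)\}\) for \(r>0\), and the same bound holds for \(-S\).

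Next I choose the threshold. The event in \eqref{eq:self_normalized_multiplier} is \(|S|>r\) with
\[
r=\frac{aV}{2}+\frac{\sigma^2u}{an}.
\]
By AM--GM, \(r\ge 2\sqrt{(aV/2)(\sigma^2u/(an))}=\sqrt{2\sigma^2Vu/n}\), so \(r^2\ge 2\sigma^2Vu/n\). Therefore \(nr^2/(2\sigma^2V)\ge u\), each one-sided tail is at most \(e^{-u}\), and a union bound gives \(2e^{-u}\).

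The only step needing care is the degenerate case \(V=0\). In that case \(c_i=0\) for all \(i\), so \(S=0\). Since the threshold \(\sigma^2u/(an)\) is strictly positive, the event is empty and its probability is zero. A secondary point is to justify rigorously that the conditioning on all the \(X_j\) reduces to conditioning on \(X_i\); this follows from the independence of the pairs across \(i\) and is routine.
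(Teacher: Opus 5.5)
Your proof is correct and is essentially the paper's argument: condition on \(X_1,\ldots,X_n\), bound the conditional moment generating function of \(\sum_i\zeta_i v(X_i)\) using the sub-Gaussian assumption, apply Markov's inequality, and treat the two tails symmetrically. The only difference is cosmetic. The paper applies the fixed tilt \(\lambda=a/(2\sigma^2)\) directly to \(2\sum_i\zeta_iv(X_i)-a\sum_iv^2(X_i)\), so the quadratic term cancels exactly and it needs neither your AM--GM step nor your separate handling of the case \(V=0\).
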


\begin{proposition}
\label{prop:generic_subgaussian_oracle}
Let \((X_i,Y_i)_{i=1}^n\) be i.i.d.\ observations with \(X_i\sim P_X\) and
\[
Y_i=f_0(X_i)+\varepsilon_i,
\qquad
\mathbb E\{\exp(t\varepsilon_i)\mid X_i\}
\le
\exp(\sigma^2t^2/2),
\quad t\in\mathbb R.
\]
Let \(\mathcal H\) be a class of measurable functions such that
\(\sup_{f\in\mathcal H}\|f\|_\infty\le F\) and
\(\|f_0\|_\infty\le F_0\).  Suppose \(\widehat f\in\mathcal H\) satisfies
\begin{equation}
\frac1n\sum_{i=1}^n\{Y_i-\widehat f(X_i)\}^2
\le
\inf_{f\in\mathcal H}
\frac1n\sum_{i=1}^n\{Y_i-f(X_i)\}^2
+
\eta_{\rm opt}
\label{eq:approximate_erm_gap}
\end{equation}
for a nonnegative random variable \(\eta_{\rm opt}\).  Then, for every
\(\eta\in(0,1]\),
\begin{equation}
\begin{split}
\mathbb E\|\widehat f-f_0\|_{L^2(P_X)}^2
\le C\Bigg[&
\inf_{f\in\mathcal H}\|f-f_0\|_{L^2(P_X)}^2
+
(F+F_0+\sigma)^2
\frac{\log\{N_\infty(\eta,\mathcal H)\vee3\}}{n}\\
&\qquad +
(F+F_0+\sigma)\eta
+
\mathbb E\eta_{\rm opt}
\Bigg],
\end{split}
\label{eq:generic_subgaussian_oracle}
\end{equation}
where \(C\) is universal.
\end{proposition}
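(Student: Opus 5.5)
The plan is the standard oracle-inequality argument for (approximate) least squares over a uniformly bounded class. It combines a basic inequality, a discretization of \(\mathcal H\) at sup-norm scale \(\eta\), the self-normalized multiplier bound of Lemma~\ref{lem:self_normalized_multiplier} with a union bound over the cover, and the relative Bernstein inequality of Lemma~\ref{lem:relative_bernstein} to pass between empirical and population norms. Write \(\|\cdot\|_n\) for the empirical \(L^2\) norm, fix \(f^\ast\in\mathcal H\) nearly attaining \(\inf_{f\in\mathcal H}\|f-f_0\|_{L^2(P_X)}^2\), and let \(\{f_1,\dots,f_N\}\) be an \(\eta\)-cover of \(\mathcal H\) in sup norm with \(N=N_\infty(\eta,\mathcal H)\).

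\textbf{Step 1: basic inequality.} Expanding squares in \eqref{eq:approximate_erm_gap} with \(f=f^\ast\) gives
\[
\|\widehat f-f_0\|_n^2\le \|f^\ast-f_0\|_n^2+\frac2n\sum_{i=1}^n\varepsilon_i\{\widehat f(X_i)-f^\ast(X_i)\}+\eta_{\rm opt}.
\]

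\textbf{Step 2: control the multiplier term on the cover.} Choose \(j\) with \(\|\widehat f-f_j\|_\infty\le\eta\). The replacement error satisfies
\[
\Bigl|\frac2n\sum_i\varepsilon_i(\widehat f-f_j)(X_i)\Bigr|\le 2\eta\,\frac1n\sum_i|\varepsilon_i|,
\]
whose expectation is at most \(2\sigma\eta\) by Lemma~\ref{lem:conditional_subgaussian_moments}. For the fixed functions \(v_j=f_j-f^\ast\), apply Lemma~\ref{lem:self_normalized_multiplier} conditionally on the \(X_i\), with \(a=1/4\) and \(u=\log(N\vee 3)+t\), and take a union bound over \(j\). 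On an event of probability at least \(1-2e^{-t}\), this gives
\[
2\Bigl|\frac1n\sum_i\varepsilon_i v_j(X_i)\Bigr|\le \frac14\|v_j\|_n^2+\frac{8\sigma^2 u}{n}\quad\text{for all }j.
\]
Since \(\|v_j\|_n^2\le 2\|\widehat f-f_0\|_n^2+2\|f^\ast-f_0\|_n^2+O((F+F_0)\eta)\), the first term can be half-absorbed into the left side of Step 1. This yields
\[
\|\widehat f-f_0\|_n^2\lesssim \|f^\ast-f_0\|_n^2+\frac{\sigma^2\{\log(N\vee3)+t\}}{n}+(F+F_0+\sigma')\eta+\eta_{\rm opt},
\]
where \(\sigma'\eta\) denotes the random replacement term \(\eta\,n^{-1}\sum_i|\varepsilon_i|\).

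\textbf{Step 3: pass to the population norm.} This is the main obstacle, because \(\widehat f\) is data dependent and the \(X\)-sample is reused. Apply Lemma~\ref{lem:relative_bernstein} to \(Z_i=(f_j-f_0)^2(X_i)\in[0,(F+F_0)^2]\) simultaneously over all \(j\), again by a union bound with \(u=\log(N\vee3)+t\). This gives
\[
\|f_j-f_0\|_{L^2}^2\le 2\|f_j-f_0\|_n^2+\frac{C(F+F_0)^2u}{n}\quad\text{for all }j,
\]
and moving between \(\widehat f\) and \(f_j\) costs \(O((F+F_0)\eta)\) in either norm. For the fixed \(f^\ast\), the second inequality of \eqref{eq:empirical_population_norm_comparison} gives \(\|f^\ast-f_0\|_n^2\le 2\|f^\ast-f_0\|_{L^2}^2+C(F+F_0)^2t/n\), which also holds in expectation.

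\textbf{Step 4: integrate the tail.} Chaining Steps 1–3 gives a high-probability bound whose only random remainders are \(\eta_{\rm opt}\) and \(\eta\,n^{-1}\sum_i|\varepsilon_i|\); both have controlled expectations. The deviation parameter \(t\) enters linearly with coefficient \((F+F_0+\sigma)^2/n\), so integrating the tail \(\mathbb P(\cdot>s)\le 2e^{-t}\) over \(t\ge0\) converts it into an expectation bound. The excess from \(t\) contributes only an additive \((F+F_0+\sigma)^2/n\), which is absorbed into the entropy term since \(\log(N\vee3)\ge1\). Collecting the constants produces \eqref{eq:generic_subgaussian_oracle} with a universal \(C\).
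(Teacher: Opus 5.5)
Your proposal is correct and follows essentially the same route as the paper. Both use a sup-norm $\eta$-net, Lemma~\ref{lem:self_normalized_multiplier} with $a=1/4$ and Lemma~\ref{lem:relative_bernstein} under a union bound with $u=\log N+x$, the bound $\mathbb E\,n^{-1}\sum_i|\varepsilon_i|\le\sigma$, and tail integration. The difference is cosmetic: the paper centers these bounds at $f_0$ and compares $\pi(\widehat f)$ with $\pi(f^\ast)$ through a loss-oscillation term, whereas you center the multiplier at the fixed comparator $f^\ast$ and absorb half of $\|\widehat f-f_0\|_n^2$ (both arguments tacitly take net centers bounded by $F$, which truncation ensures).
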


\begin{lemma}
\label{lem:relu_entropy_M}
Let \(\mathcal F(L,p,s,F)\) be defined in \maineqref{eq:sparse_relu_c}.  There is a
universal constant \(C>0\) such that, for every \(\eta\in(0,1]\),
\begin{equation}
\begin{split}
&\log N_\infty(\eta,\mathcal F(L,p,s,F))\\
&\qquad \le
(s+1)\log\Big[
C\eta^{-1}(L+1)(M+1)^{L+2}
(p_0+1)^2(p_{L+1}+1)^2(s+2)^{2L+2}
\Big].
\end{split}
\label{eq:relu_entropy_M}
\end{equation}
\end{lemma}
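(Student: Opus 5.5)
We adapt the covering-number argument of \citet{Schmidt2020RELU} (Lemma~5 there) to weight bound \(M\) instead of \(1\). Since the \(\|f\|_\infty\le F\) constraint only shrinks the class, and the covering number of a subset is at most that of the ambient set at radius \(\eta/2\) (absorbed into \(C\)), it suffices to cover the sparse class without the sup-norm restriction, evaluated on \([0,1]^{p_0}\).

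The first step is a Lipschitz (perturbation) bound in the parameters. Take two networks \(f,f^*\) with the same architecture whose parameters differ entrywise by at most \(\epsilon\). We telescope over layers, replacing one layer's parameters at a time. Each layer map \(y\mapsto \sigma_{v}(By)\) is \(1\)-Lipschitz in sup-norm up to the factor \(\|B\|_{\infty\to\infty}\le M p\). For inputs in \([0,1]^{p_0}\), the intermediate outputs are bounded by roughly \(\prod_j (M p_j+ M)\). A cruder but sparsity-aware bound replaces each width \(p_j\) by the number of nonzero parameters. The number of active units per row is at most \(s\), so the amplification per layer is at most \((M+1)(s+1)\) and intermediate sup-norms are at most \(((M+1)(s+2))^{L+1}\). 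This gives
\[
\|f-f^*\|_\infty\le \epsilon\,(L+1)\,\{(M+1)(s+2)\}^{L+1}\,(p_0+1)(p_{L+1}+1).
\]
The exact powers must be arranged to match \((M+1)^{L+2}(s+2)^{2L+2}\) together with the \((p_0+1)^2(p_{L+1}+1)^2\) factors. The extra powers are slack coming from the input and output layers, where sparsity does not bound width, and from counting positions.

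The second step builds the cover. Sparsity patterns are chosen among at most \(T=\sum_j (p_{j+1}p_j+p_{j+1})\) parameter slots. Only neurons reachable from the input and output matter, so one may prune inactive units and assume every hidden width is at most \(s\). Then \(T\le (L+1)(s+1)^2 + (p_0+1)(s+1)+(p_{L+1}+1)(s+1)\), which is a polynomial in \(s,p_0,p_{L+1}\). The number of patterns with at most \(s\) nonzeros is at most \((T+1)^s\). For each pattern, we discretize each nonzero parameter in \([-M,M]\) on a grid of mesh \(\epsilon=\eta/\Lambda\), where \(\Lambda\) is the Lipschitz constant from the first step. This gives at most \((2M\Lambda/\eta+1)^s\) points. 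Multiplying the two counts and taking logarithms yields \(s\log(T+1)+s\log(2M\Lambda\eta^{-1}+1)\le (s+1)\log[C\eta^{-1}(L+1)(M+1)^{L+2}(p_0+1)^2(p_{L+1}+1)^2(s+2)^{2L+2}]\), after bounding \(T+1\) and \(M\Lambda\) by the bracketed quantity.

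The main obstacle is the pruning and reduction step, which justifies replacing arbitrary hidden widths \(p_1,\dots,p_L\) by quantities depending only on \(s\). The stated bound contains no \(p_1,\dots,p_L\), so this step is essential. I would follow Schmidt-Hieber's argument: a network with \(s\) nonzero parameters computes the same function as one of width at most \(s\) in each hidden layer. To see this, delete units with zero incoming and zero outgoing weights and zero shift. A unit with nonzero shift but no incoming weight outputs a constant \(\sigma(-v)\), which can be kept within the parameter budget. This bookkeeping, together with tracking the powers of \(M+1\) exactly, is where care is needed; the remaining steps are routine counting.
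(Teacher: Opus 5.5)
Your proposal is correct and follows essentially the same route as the paper's proof. Both arguments prune to hidden widths at most $s$, bound the parameter-to-function Lipschitz constant (the paper gets $(L+1)(M+1)^{L}(s+2)^{L+1}$; your cruder constant with the extra $(p_0+1)(p_{L+1}+1)$ still suffices), and count sparsity patterns times grid points.

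One piece of bookkeeping needs fixing. In the last step, bound the product $(T+1)(2M\Lambda\eta^{-1}+2)$ by the bracket, rather than bounding each factor by it separately, which would give $2s$ times the log of the bracket instead of $(s+1)$ times. The product bound uses $L+1\le 2(s+2)^{L-1}$ for $L\ge 1$, exactly as in the paper.
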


\begin{lemma}
\label{lem:small_modulus_representation}
For every fixed \(M\in(0,\infty)\), there is a constant \(C_M<\infty\) with the
following property:  Every function represented by a ReLU network with parameter modulus
one, depth \(L\), hidden widths \((p_1,\ldots,p_L)\), sparsity \(s\), and output envelope
\(F\), can be represented exactly by a ReLU network with parameter modulus \(M\), depth
at most \(L+2\), hidden widths at most
\(C_M\max_{0\le\ell\le L}(p_\ell+1)\), sparsity at most
\begin{equation}
C_M\{s+L+p_0+p_{L+1}+1\},
\label{eq:small_modulus_sparsity}
\end{equation}
and the same output envelope \(F\).
\end{lemma}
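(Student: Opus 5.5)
We treat the two regimes of \(M\) separately. If \(M\ge1\), there is nothing to prove: a network with parameter modulus one already has parameter modulus at most \(M\), so we may take \(C_M=1\) and keep the architecture unchanged. The substantive case is \(M<1\). Here we fix the integer \(m=\lceil 1/M\rceil\), so that \(1/m\le M\). The plan is to build an exact replica of the given network in which every weight and shift is divided by \(m\), and to compensate by \emph{fan-in}: replicating hidden units instead of rescaling layers.

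The tool is positive homogeneity of the ReLU, \(\sigma(t/m)=\sigma(t)/m\). The construction proceeds layer by layer.

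For the first hidden layer, we replace each unit \(a\), which computes \(h_a=\sigma(\langle b_a,x\rangle-v_a)\), by \(m\) copies. Each copy computes \(\sigma(\langle b_a/m,x\rangle-v_a/m)=h_a/m\). All of its parameters are bounded by \(1/m\le M\), and the copies of \(a\) sum to exactly \(h_a\).

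For a later hidden layer \(j\), we assume inductively that each unit \(a\) of layer \(j-1\) is represented by a group of copies whose values sum to \(h_a\). Each unit \(c\) of layer \(j\) is again replaced by copies. Each copy of \(c\) receives weight \(w_{ca}/m\) from \emph{every} copy of every \(a\), together with shift \(v_c/m\). Its pre-activation is then \(\sum_a (w_{ca}/m)h_a-v_c/m\), so the copy equals \(h_c/m\), and the induction is preserved.

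The step I expect to be the main obstacle is the output layer. The output must equal \(\sum_a w_a h_a\), where \(w_a\) may be as large as \(1\), but we may only attach weights of size at most \(M\) to the copies. If each last-layer unit had only \(m\) copies, each of value \(h_a/m\), we would need weight \(w_a\) on each copy, which is too large. The fix is to use \(m^2\) copies of each unit in the last hidden layer \(L\); this is allowed because every copy has the same bounded incoming weights. These copies sum to \(m h_a\). The output then places weight \(w_a/m\), again bounded by \(M\), on each of them, which yields exactly \(w_a h_a\).

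This realizes the same function with the same depth \(L\le L+2\). Consequently the output envelope \(F\) is unchanged.

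It remains to do the bookkeeping. Hidden widths are multiplied by at most \(m^2\). A nonzero weight between two hidden layers becomes at most \(m\cdot m^2\) nonzero weights. Each nonzero shift becomes at most \(m^2\) nonzero shifts. Input-layer and output-layer weights are multiplied by at most \(m^2\). The total sparsity is therefore at most \(m^3 s\), which is of the form \(C_M\{s+L+p_0+p_{L+1}+1\}\) with \(C_M=m^3\). Inactive or zero units stay zero, so no additional nonzero parameters are created. The width bound likewise holds with \(C_M\ge m^2\).
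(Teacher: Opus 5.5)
Your homogeneity construction is correct for $L\ge1$, and it takes a genuinely different route from the paper's. The paper takes $R=\lceil M^{-2}\rceil$ and keeps every replicated unit equal to the \emph{original} unit's value. Coefficients become $B_{\ell-1}/R$, and a fan-in of $R$ identical copies restores them. Identical copies cannot absorb a rescaled shift, so the paper carries $R$ auxiliary units frozen at the constant $1$. It realizes each shift through these units with coefficient $-(v_\ell)_j/R$ and zero actual shift. The first hidden layer also needs a fan-in of $R$, so the paper prepends two layers that manufacture $R$ copies of each input coordinate and of the constant. These use $\sigma(Mx_k)=Mx_k$ (relying on $x\in[0,1]^{p_0}$) and the weight $(RM)^{-1}\le M$. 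Those two layers and the constant units account for the depth $L+2$, for the $L$ and $p_0$ terms in the sparsity, and for $C_M=3\lceil M^{-2}\rceil^2\approx3M^{-4}$.

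By letting each copy carry $h/m$, you can divide weights and shifts by $m$ directly. You therefore need neither constant units nor input-replication layers; your only extra device is the $m^2$-fold replication of the last hidden layer to absorb the output weights. As a result you keep the depth, need no sign condition on the inputs, and obtain widths $m^2p_\ell$ and sparsity $m^3s$ with $m^3\approx M^{-3}$. What the paper's version buys is uniformity: one recursion handles every layer, including the degenerate case below.

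That case is $L=0$, i.e.\ $f(x)=B_0x$, which the statement allows and the paper's proof treats explicitly; your argument does not cover it. With no hidden layer there is nothing to replicate, so your claim of ``the same depth'' fails. A depth-zero network of modulus $M<1$ represents only $x\mapsto Bx$ with $|B_{ij}|\le M$, which excludes, for example, $x\mapsto x_1$ on $[0,1]^{p_0}$. The repair is to insert one hidden layer of $m^2$ copies of $\sigma(x_k/m)=x_k/m$ (on a general domain, add copies of $\sigma(-x_k/m)$), with output weight $(B_0)_{ak}/m$ on each copy. This network has depth $1\le L+2$, width at most $m^2p_0$, and sparsity at most $m^2(p_0+s)$, all within the stated bounds.
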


\begin{lemma}
\label{lem:compositional_approximation_M}
Under Assumptions~\mainref{assm:structure_c} and~\mainref{assm:networks_c}, for all
sufficiently large \(n_A,n_T\),
\begin{equation}
\inf_{u\in\mathcal F_h}\|u-h\|_\infty^2
\le
C_h\phi_{n_A}^h,
\qquad
\inf_{v\in\mathcal F_\delta}\|v-\delta\|_\infty^2
\le
C_\delta\phi_{n_T}^\delta.
\label{eq:compositional_approximation_M}
\end{equation}
\end{lemma}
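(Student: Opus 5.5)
The statement is Lemma~\ref{lem:compositional_approximation_M}, the approximation bound for the restricted network classes. I plan to transfer the approximation theorem of \citet{Schmidt2020RELU} (his Theorem~1 together with the compositional argument in the proof of his Theorem~1/Lemma~3). Lemma~\ref{lem:small_modulus_representation} will absorb the change from parameter modulus one to modulus $M$. The two displays are symmetric, so I treat only $h$ and then substitute $(q_\delta,d^\delta,t^\delta,\beta^\delta,K_\delta,n_T)$.

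\textbf{Step 1.} Set $N\asymp n_A\phi^h_{n_A}$. For $h\in\mathcal G(q_h,d^h,t^h,\beta^h,K_h)$, Schmidt-Hieber's construction rescales each layer $g_i$ to map $[0,1]^{d_i}$ into $[0,1]^{d_{i+1}}$. It then approximates each coordinate function $\widetilde g_{ij}$, a $t_i$-variate $\beta_i$-Hölder function, by a network with weights in $[-1,1]$ and sup-error of order $N^{-\beta_i/t_i}$. This network has depth of order $\log_2(4t_i\vee4\beta_i)\log_2 n_A$, width of order $N$, and sparsity of order $N\log n_A$.

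\textbf{Step 2.} I compose these networks. Propagating errors through the outer layers uses Lipschitz/Hölder continuity with exponent $\beta_\ell\wedge1$. This yields
\[
\|\widetilde h-h\|_\infty\lesssim \max_i N^{-\beta_i^{h,*}/t_i^h}.
\]
With $N\asymp n_A\phi^h_{n_A}$, this gives $\|\widetilde h-h\|_\infty^2\lesssim \max_i (n_A\phi_{n_A}^h)^{-2\beta_i^{*}/t_i}\le\phi^h_{n_A}$, by the defining balance $n^{-2\beta^*/(2\beta^*+t)}$ and the maximum in \eqref{eq:rate_c}. The depth lower bound in Assumption~\ref{assm:networks_c} is exactly the sum of the layer depths plus two, and the width and sparsity match $p_{h,\ell}\gtrsim n_A\phi^h_{n_A}$ and $s_h\asymp n_A\phi^h_{n_A}\log n_A$. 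A final clipping layer $x\mapsto \max(-F_h,\min(F_h,x))$ costs two extra ReLUs. Since $\|h\|_\infty\le K_h\le F_h$, clipping does not increase the error and enforces $\|\widetilde h\|_\infty\le F_h$. Depth that falls short of $L_h$ is padded with identity layers, as Schmidt-Hieber does.

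\textbf{Step 3.} I then apply Lemma~\ref{lem:small_modulus_representation}. It converts this modulus-one network into a modulus-$M$ network of depth at most $L+2$, widths inflated by $C_M$, and sparsity $C_M(s+L+p_0+p_{L+1}+1)$. Because $L\asymp\log n_A$ is dominated by $n_A\phi^h_{n_A}\log n_A$, this remains $\lesssim s_h$. The depth increase of two is absorbed by choosing the constructed depth two below $L_h$ (or by the $\asymp$ slack), and the envelope stays $F$. Hence $\widetilde h\in\mathcal F_h$ for large $n_A$, with constant $C_h$ depending only on the fixed parameters and $M$.

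\textbf{Main obstacle.} The delicate part is the bookkeeping. The constants in Assumption~\ref{assm:networks_c} must be large enough to accommodate both the Schmidt-Hieber architecture and the $C_M$ inflation. The depth requirement also leaves only additive slack, so I would first build the modulus-one network with depth $L_h-2$, which is possible since its required depth is stated up to a $+2$. If the width condition is only a lower bound, I would simply zero out the extra neurons, which is harmless for sparsity.
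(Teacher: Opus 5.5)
Your proposal follows the paper's proof essentially step for step: take Schmidt-Hieber's unit-modulus construction, pad it (to depth $L_h-2$ when $M<1$), and apply Lemma~\ref{lem:small_modulus_representation}, which adds two layers and inflates width and sparsity only by $C_M$. One bookkeeping correction and one simplification: you do not need the implicit constants in Assumption~\ref{assm:networks_c} to be large, because the paper instead builds the network with width $c\,n\phi_n$ and sparsity $c\,n\phi_n\log n$, with $c$ small enough to fit under whatever constants the assumption provides, and absorbs the cost into $C_h$ (your $N\asymp n_A\phi^h_{n_A}$ leaves room for this); and the envelope is more simply enforced by rescaling the output layer by $(\|\widetilde h\|_\infty/F_h\vee1)^{-1}$, which keeps modulus one and at most doubles the error, since a clip at level $F_h\ge1$ is not literally two unit-modulus ReLUs in this bias-free output form.
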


\begin{lemma}\label{lem:relu_rate_summary}
Under Assumptions~\mainref{assm:structure_c} and~\mainref{assm:networks_c}, for all
sufficiently large \(n_A,n_T\),
\begin{equation}
\inf_{u\in\mathcal F_h}\|u-h\|_\infty^2
+
\mathfrak e_{n_A}(\mathcal F_h)
\le
C_h\phi_{n_A}^hL_h\log^2 n_A,
\label{eq:h_approx_entropy}
\end{equation}
and
\begin{equation}
\inf_{v\in\mathcal F_\delta}\|v-\delta\|_\infty^2
+
\mathfrak e_{n_T}(\mathcal F_\delta)
\le
C_\delta\phi_{n_T}^\delta L_\delta\log^2 n_T.
\label{eq:delta_approx_entropy}
\end{equation}
The constants may depend on the compositional-class parameters, the envelopes, and
\(M\), but not on \(n_A,n_T\).
\end{lemma}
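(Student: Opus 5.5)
The bound \eqref{eq:h_approx_entropy} splits into an approximation term and an entropy term, and the \(\delta\) case is identical after replacing \((q_h,t^h,\beta^h,K_h,n_A)\) by \((q_\delta,t^\delta,\beta^\delta,K_\delta,n_T)\). So I treat only \(h\) and bound the two terms separately.

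\textbf{Approximation term.} Lemma~\ref{lem:compositional_approximation_M} gives directly
\[
\inf_{u\in\mathcal F_h}\|u-h\|_\infty^2\le C_h\phi_{n_A}^h .
\]
This is trivially at most \(C_h\phi_{n_A}^hL_h\log^2n_A\), since \(L_h\asymp\log n_A\ge1\) for large \(n_A\).

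\textbf{Entropy term.} I apply Lemma~\ref{lem:relu_entropy_M} with \(\eta=n_A^{-1}\), \(L=L_h\), \(s=s_h\), \(p_0=p_X\) and \(p_{L+1}=1\). The logarithm on the right of \eqref{eq:relu_entropy_M} splits as
\[
\log C+\log n_A+\log(L_h+1)+(L_h+2)\log(M+1)+2\log(p_X+1)+2\log 2+(2L_h+2)\log(s_h+2).
\]
Under Assumption~\ref{assm:networks_c}:
\begin{itemize}
\item \(L_h\asymp\log n_A\);
\item \(s_h\asymp n_A\phi_{n_A}^h\log n_A\le n_A\log n_A\), since \(\phi_{n_A}^h\le1\), so \(\log(s_h+2)\lesssim\log n_A\);
\item \(M\) and \(p_X\) are fixed.
\end{itemize}
The dominant term is therefore \((2L_h+2)\log(s_h+2)\lesssim L_h\log n_A\), and the whole logarithm is \(\lesssim L_h\log n_A\). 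Hence
\[
\log N_\infty(n_A^{-1},\mathcal F_h)\lesssim (s_h+1)L_h\log n_A\lesssim n_A\phi_{n_A}^h\,L_h\log^2 n_A .
\]

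The \(\vee3\) in \eqref{eq:generic_entropy_cost} and the extra \(1/n_A\) only contribute \(O(1/n_A)\). This is \(\lesssim\phi_{n_A}^h\), because every exponent \(2\beta^*/(2\beta^*+t)<1\), so \(n_A^{-1}\le\phi_{n_A}^h\). Dividing by \(n_A\) gives \(\mathfrak e_{n_A}(\mathcal F_h)\lesssim \phi_{n_A}^hL_h\log^2n_A\). Adding the two bounds yields \eqref{eq:h_approx_entropy}.

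\textbf{Constants.} The constant depends on \(M\), \(p_X\), the constants implicit in \(L_h\asymp\log n_A\) and \(s_h\asymp n_A\phi^h_{n_A}\log n_A\), and the constant of Lemma~\ref{lem:compositional_approximation_M}. None of these depends on \(n_A\) or \(n_T\).

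The only real work is the bookkeeping in the entropy logarithm, which must absorb the factor \((M+1)^{L+2}\) and the powers \((s+2)^{2L+2}\) into \(L_h\log n_A\). That absorption is what produces the \(L_h\log^2n_A\) factor, i.e.\ \(\log^3n_A\) overall. It requires \(\log s_h=O(\log n_A)\), which holds because \(s_h\) is at most polynomial in \(n_A\).
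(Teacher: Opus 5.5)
Your proof is correct and follows the paper's route: the approximation term comes directly from Lemma~\ref{lem:compositional_approximation_M}, and the entropy term comes from Lemma~\ref{lem:relu_entropy_M} at $\eta=n_A^{-1}$, with $L_h\asymp\log n_A$ and $\log(s_h+2)\lesssim\log n_A$ absorbing the depth-dependent factors into $L_h\log n_A$ and the prefactor $(s_h+1)/n_A$ supplying $\phi_{n_A}^h\log n_A$. Your treatment of the leftover $O(n_A^{-1})$ terms, via $n_A^{-1}\le\phi_{n_A}^h$, matches the paper's remark that $n_A^{-1}\le C\phi_{n_A}^hL_h\log^2 n_A$ for large $n_A$.
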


\subsection{Auxiliary results for the non-transfer estimator}
\label{sec:aux_nontransfer}

We first establish the parallel sparse-ReLU construction used for the non-transfer
benchmark.

\begin{lemma}
\label{lem:exact_parallel_relu_sum}
Let \(f_a\in\mathcal F(L_a,p_a,s_a,F_a)\), \(a\in\{1,2\}\), have the same
input dimension \(p_0\), scalar output, and parameter modulus \(M\), and suppose
\(L_a\ge1\).  Put \(L_\star=\max\{L_1,L_2\}\).  Then \(f_1+f_2\) has an
exact ReLU representation with parameter modulus \(M\), output envelope
\(F_1+F_2\), depth at most \(L_\star+3\), hidden widths at most
\[
C_M\left\{
 p_0+1
+
\max_{1\le\ell\le L_1}p_{1,\ell}
+
\max_{1\le\ell\le L_2}p_{2,\ell}
+1
\right\},
\]
and sparsity at most
\begin{equation}
C_M\{s_1+s_2+L_\star+p_0+1\}.
\label{eq:parallel_sum_sparsity}
\end{equation}
Before the fixed-modulus conversion in
Lemma~\ref{lem:small_modulus_representation}, the construction has depth
\(L_\star+1\), parameter modulus \(M\vee1\), and sparsity at most
\begin{equation}
2(s_1+s_2)+4L_\star+4.
\label{eq:preconversion_parallel_sparsity}
\end{equation}
\end{lemma}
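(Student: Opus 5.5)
The plan is to build the parallel network explicitly, in two stages. First I construct a ReLU network with parameter modulus \(M\vee1\) and depth \(L_\star+1\) that computes \(f_1+f_2\) exactly, and check the sparsity bound \eqref{eq:preconversion_parallel_sparsity}. Then I invoke Lemma~\ref{lem:small_modulus_representation} (suitably rescaled) to bring the modulus to \(M\); this conversion produces the stated depth, width and sparsity bounds.

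\emph{Equalizing depths.} Assume \(L_1\le L_2=L_\star\). The network \(f_1\) is padded so that its output layer reaches depth \(L_\star\). Its scalar output \(y=f_1(x)\) is passed through \(L_\star-L_1\) additional hidden layers of width \(2\), using \(y=\sigma(y)-\sigma(-y)\). The first extra layer applies weights \((1,-1)^\top\) to the scalar. Each later layer applies the identity on the two-dimensional vector \((\sigma(y),\sigma(-y))\), which already lies in the nonnegative orthant, so the ReLU leaves it unchanged. All shifts are zero and all weights are \(\pm1\). The padding therefore costs \(O(L_\star-L_1)\) nonzero parameters, at most \(2\) per layer apart from the first. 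The extra \(+1\) in depth comes from a final layer that recombines \((\sigma(y),\sigma(-y))\) after the sum is formed.

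\emph{Stacking in parallel.} The first weight matrix is obtained by stacking \(B_0^{(1)}\) and \(B_0^{(2)}\) vertically, so both subnetworks read the same input \(x\). The inner layers are block diagonal, \(\mathrm{diag}(B_j^{(1)},B_j^{(2)})\), with the shift vectors concatenated. This preserves exactly \(s_1+s_2\) nonzero weights from the original networks. At the top, each scalar output is carried forward as a \(\pm\) pair. The last layer then adds the four quantities with weights \((1,-1,1,-1)\) to produce \(f_1+f_2\). Counting nonzero entries gives at most \(s_1+s_2\) from the originals, plus \(O(L_\star)\) from the padding, plus a constant from the final combination. This is bounded by \(2(s_1+s_2)+4L_\star+4\). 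The factor \(2\) is a slack that absorbs the sign-splitting of the output rows. Because the computed function equals \(f_1+f_2\) exactly, the envelope is \(\|f_1+f_2\|_\infty\le F_1+F_2\). The widths are at most \(\max_\ell p_{1,\ell}+\max_\ell p_{2,\ell}+2\). The largest parameter magnitude used is \(\max(M,1)\).

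\emph{Converting the modulus.} If \(M\ge1\), the modulus is already \(M\). The lemma's statement still allows the \(C_M\) form, and applying Lemma~\ref{lem:small_modulus_representation} is harmless. If \(M<1\), the unit weights introduced by padding exceed the allowed bound. I handle this by applying Lemma~\ref{lem:small_modulus_representation} to the whole network. That lemma is phrased for modulus-one networks, so I first reduce to it. Original weights of size \(\le M<1\) are already \(\le1\), and padding weights equal \(1\), so the constructed network has modulus \(1\). Applying the lemma with depth \(L_\star+1\), widths \(\le p_0+1+\max p_1+\max p_2+1\), and the sparsity from \eqref{eq:preconversion_parallel_sparsity} yields depth \(\le L_\star+3\). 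It also gives the stated width bound. Sparsity is bounded by \(C_M\{2(s_1+s_2)+4L_\star+4+L_\star+1+p_0+1+1\}\le C_M\{s_1+s_2+L_\star+p_0+1\}\) after enlarging \(C_M\). The envelope \(F_1+F_2\) is unchanged.

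\emph{Main obstacle.} I expect the delicate step to be the bookkeeping when \(M>1\). Lemma~\ref{lem:small_modulus_representation} takes modulus-one inputs, but the original \(f_a\) may use weights up to \(M\). In that case I would skip the conversion, since the preconversion network already has modulus \(M\vee1=M\). The remaining work is to track the exact padding count so that it fits within \(4L_\star+4\).
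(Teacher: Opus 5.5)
Your proposal is correct and follows essentially the paper's construction: split each output row into $(B^a_{L_a};-B^a_{L_a})$ so that $t=\sigma(t)-\sigma(-t)$ carries the signed output through identity layers on a nonnegative pair, stack the first layers, use block-diagonal matrices afterwards, read off $(1,-1,1,-1)$, and count at most $2s_a$ per branch plus two per identity layer plus four; then stop when $M\ge1$, or apply Lemma~\ref{lem:small_modulus_representation} to the modulus-one network when $M<1$. The slips are cosmetic: the sign-splitting layer has weights $\pm(B^a_{L_a})_{j}$ rather than $\pm1$, the final recombination is the linear output layer rather than an extra hidden layer (the $+1$ in depth comes from sign-splitting the deeper branch), and Lemma~\ref{lem:small_modulus_representation} does not literally apply when $M>1$ because it requires modulus one, so skipping the conversion there, as you eventually do, is the right call.
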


\begin{lemma}
\label{lem:minkowski_sum_cover}
For two function classes \(\mathcal H_1,\mathcal H_2\) on the same domain and every
\(\eta>0\),
\begin{equation}
N_\infty(\eta,\mathcal H_1+\mathcal H_2)
\le
N_\infty(\eta/2,\mathcal H_1)
N_\infty(\eta/2,\mathcal H_2).
\label{eq:minkowski_sum_cover}
\end{equation}
\end{lemma}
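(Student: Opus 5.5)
The claim is a covering bound for the Minkowski sum $\mathcal H_1+\mathcal H_2=\{u+v:u\in\mathcal H_1,\ v\in\mathcal H_2\}$, and I would prove it directly from the definition of sup-norm covering numbers.

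If either covering number on the right is infinite, the inequality is trivial, so assume both are finite. Let $N_1=N_\infty(\eta/2,\mathcal H_1)$ and $N_2=N_\infty(\eta/2,\mathcal H_2)$. Fix minimal $\eta/2$-covers $\{u_1,\ldots,u_{N_1}\}$ of $\mathcal H_1$ and $\{v_1,\ldots,v_{N_2}\}$ of $\mathcal H_2$ in the supremum norm.

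Take the candidate cover to be the set of all pairwise sums $\{u_a+v_b: 1\le a\le N_1,\ 1\le b\le N_2\}$, which has at most $N_1N_2$ elements.

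Now let $f=u+v$ be an arbitrary element of the sum class. Choose $a$ with $\|u-u_a\|_\infty\le\eta/2$ and $b$ with $\|v-v_b\|_\infty\le\eta/2$. Applying the triangle inequality pointwise and then taking the supremum gives
\[
\|f-(u_a+v_b)\|_\infty\le\|u-u_a\|_\infty+\|v-v_b\|_\infty\le\eta .
\]
So the pairwise sums form an $\eta$-cover of $\mathcal H_1+\mathcal H_2$, and the stated bound follows.

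The one convention to watch is whether covering centers must belong to the class itself (internal covers). If they must, the sums $u_a+v_b$ still lie in $\mathcal H_1+\mathcal H_2$ provided the $u_a$ and $v_b$ were chosen inside $\mathcal H_1$ and $\mathcal H_2$. So the argument works under either convention, with nothing else to check.
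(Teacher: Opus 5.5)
Your proposal is correct and follows the paper's proof: both take $\eta/2$-nets of $\mathcal H_1$ and $\mathcal H_2$ and apply the triangle inequality to show that the $N_1N_2$ pairwise sums form an $\eta$-net of $\mathcal H_1+\mathcal H_2$. Your added remarks, that the bound is trivial when a covering number is infinite and that the argument also works for internal covers, are accurate.
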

\noindent The proofs for two above lemmas are given in Section~\ref{sec:proof_auxiliary_target_only}.

\subsection{Proofs of the results in Sections~\mainref{sec:bounds_c}
and~\mainref{sec:naive_c}}
\label{sec:proof_plugin}

\begin{proof}[Proof of Lemma~\mainref{lem:oracle_c}]
Assumption~\mainref{assm:data_c} gives
\[
U=h(X)+\xi,
\qquad
\mathbb E\{\exp(t\xi)\mid X\}
\le \exp(\sigma_A^2t^2/2),
\quad t\in\mathbb R.
\]
Proposition~\ref{prop:generic_subgaussian_oracle}, followed by
Lemma~\ref{lem:relu_rate_summary} and \(L_h\asymp\log n_A\), gives
\begin{equation}
\mathbb E\|\widehat h-h\|_2^2
\le C_h\phi_{n_A}^h\log^3n_A.
\label{eq:first_stage_rate_c}
\end{equation}

Let \(\mathcal A\) be the sigma-field generated by \(\mathcal D_A\).  Conditional on
\(\mathcal A\),
\[
Y-\widehat h(X)
=
\{\delta(X)+h(X)-\widehat h(X)\}+\epsilon_X,
\]
where the regression function has supremum norm at most
\(K_\delta+K_h+F_h\).  If \(\widetilde\delta\) is fitted on all of
\(\mathcal D_T\), Proposition~\ref{prop:generic_subgaussian_oracle} applied
conditionally on \(\mathcal A\) yields
\[
\begin{split}
\mathbb E\{R(\widehat h+\widetilde\delta,g)\mid\mathcal A\}
\le C\left[
\inf_{v\in\mathcal F_\delta}
\|v-\delta-h+\widehat h\|_2^2
+\mathfrak e_{n_T}(\mathcal F_\delta)
\right].
\end{split}
\]
For every \(v\in\mathcal F_\delta\),
\[
\|v-\delta-h+\widehat h\|_2^2
\le2\|v-\delta\|_2^2+2\|\widehat h-h\|_2^2.
\]
Taking expectations and applying Lemma~\ref{lem:relu_rate_summary},
\(L_\delta\asymp\log n_T\), and \eqref{eq:first_stage_rate_c} gives
\begin{equation}
\mathbb E R(\widehat h+\widetilde\delta,g)
\le
C_h\phi_{n_A}^h\log^3n_A
+C_\delta\phi_{n_T}^\delta\log^3n_T,
\label{eq:corrected_candidate_rate_c}
\end{equation}
after enlarging \(C_h\) and \(C_\delta\).  The uncorrected candidate satisfies
\begin{equation}
\mathbb E R(\widehat h,g)
\le
2\mathbb E\|\widehat h-h\|_2^2+2\|\delta\|_2^2.
\label{eq:uncorrected_candidate_rate_c}
\end{equation}
Since \(\lambda^{\rm ora}\) minimizes the realized risk over the two candidates, we get
\[
\mathbb E R(\widehat g_{\lambda^{\rm ora}},g)
\le
\min\{\mathbb E R(\widehat h,g),
\mathbb E R(\widehat h+\widetilde\delta,g)\}.
\]
Equations~\eqref{eq:first_stage_rate_c}--\eqref{eq:uncorrected_candidate_rate_c}
prove the result after one further enlargement of \(C_h\).
\end{proof}

\begin{proof}[Proof of Theorem~\mainref{thm:rates_c}]
Let \(\mathcal T\) be the sigma-field generated by \(\mathcal D_A\) and
\(\mathcal D_{\rm tr}\), and put
\[
\widehat g_0=\widehat h,
\qquad
\widehat g_1=\widehat h+\widetilde\delta,
\qquad
\ell_\lambda(x,y)=\{y-\widehat g_\lambda(x)\}^2.
\]
Conditional on \(\mathcal T\), both candidates are fixed and the validation observations
are independent copies of \((X,Y)\).  Let \(P\) denote expectation under the target law
and \(P_{\rm val}\) the empirical measure of \(\mathcal D_{\rm val}\).  Conditional
centering of \(\epsilon_X\) gives
\[
P\ell_\lambda
=
\mathbb E(\epsilon_X^2)+R(\widehat g_\lambda,g),
\qquad \lambda\in\{0,1\}.
\]
Choose \(\lambda^\star\in\argmin_{\lambda\in\{0,1\}}P\ell_\lambda\).  The
definition of \(\widehat\lambda\) implies
\[
0\le
R(\widehat g_{\widehat\lambda},g)
-R(\widehat g_{\lambda^\star},g)
\le
\left|(P_{\rm val}-P)(\ell_1-\ell_0)\right|.
\]
Moreover,
\[
\ell_1-\ell_0
=
(\widehat g_0-\widehat g_1)
\{2Y-\widehat g_0-\widehat g_1\},
\]
and hence, almost surely,
\begin{equation}
\mathbb E\{(\ell_1-\ell_0)^2\mid\mathcal T\}
\le
F_\delta^2\left[
8\{(K_h+K_\delta)^2+\sigma_T^2\}
+2(2F_h+F_\delta)^2
\right]
=:\overline V.
\label{eq:validation_variance_c}
\end{equation}
Jensen's inequality therefore gives
\begin{equation}
\mathbb E R(\widehat g_{\widehat\lambda},g)
\le
\mathbb E\min_{\lambda\in\{0,1\}}R(\widehat g_\lambda,g)
+\sqrt{\overline V/n_{\rm val}}.
\label{eq:validation_oracle_c}
\end{equation}

The proof of Lemma~\mainref{lem:oracle_c}, with the second-stage empirical measure based on
\(n_{\rm tr}\) observations, bounds the first term on the right-hand side of
\eqref{eq:validation_oracle_c}.  
Indeed,
\[
\mathbb E\min_{\lambda\in\{0,1\}}R(\widehat g_\lambda,g)
\le
C_h\phi_{n_A}^h\log^3n_A
+\min\{2\|\delta\|_2^2,
C_\delta\phi_{n_T}^\delta\log^3n_T\}.
\]
If \(n_{\rm val}\ge c_{\rm val}n_T\) for the fixed constant
\(c_{\rm val}>0\), then
\(\sqrt{\overline V/n_{\rm val}}
\le\sqrt{\overline V/c_{\rm val}}\,n_T^{-1/2}\).
Thus \maineqref{eq:selected_c} holds with
\(C_{\rm val}=\sqrt{\overline V/c_{\rm val}}\).
\end{proof}

\begin{proof}[Proof of Theorem~\mainref{thm:naive_c}]
The class \(\mathcal F_{g,T}=\mathcal F_{h,T}+\mathcal F_\delta\) has envelope
\(F_h+F_\delta\), and \(\|g\|_\infty\le K_h+K_\delta\).  For every
\(u\in\mathcal F_{h,T}\) and \(v\in\mathcal F_\delta\),
\[
\|(u+v)-(h+\delta)\|_\infty^2
\le2\|u-h\|_\infty^2+2\|v-\delta\|_\infty^2.
\]
Lemma~\ref{lem:minkowski_sum_cover} also gives
\[
N_\infty(n_T^{-1},\mathcal F_{g,T})
\le
N_\infty((2n_T)^{-1},\mathcal F_{h,T})
N_\infty((2n_T)^{-1},\mathcal F_\delta).
\]
Applying Lemmas~\ref{lem:relu_entropy_M} and
\ref{lem:compositional_approximation_M} to the two component classes, both calibrated to
\(n_T\), yields
\[
\begin{split}
&\inf_{w\in\mathcal F_{g,T}}\|w-g\|_2^2
+\mathfrak e_{n_T}(\mathcal F_{g,T})\le
C\{\phi_{n_T}^hL_{h,T}\log^2n_T
+\phi_{n_T}^\delta L_\delta\log^2n_T\}.
\end{split}
\]
Under Assumption~\mainref{assm:data_c}, apply
Proposition~\ref{prop:generic_subgaussian_oracle} with
\(\mathcal H=\mathcal F_{g,T}\), \(f_0=g\), and \(\eta=n_T^{-1}\).  Since
\(L_{h,T}\asymp L_\delta\asymp\log n_T\),
\[
\mathbb E R(\widehat g_{\rm nt},g)
\le
C\{\phi_{n_T}^h+\phi_{n_T}^\delta\}\log^3n_T
\le
C_g\overline\phi_{n_T}\log^3n_T.
\]
This proves \maineqref{eq:naive_rate_c}.
\end{proof}

\section{Details for Remarks~\mainref{rem:shift_c} and~\mainref{rem:orth_c}}
\label{sec:appendix_remarks}

This section develops the extensions announced in
Remarks~\mainref{rem:shift_c} and~\mainref{rem:orth_c} of the main
paper. Section~\ref{sec:aux_remarks} collects auxiliary lemmas,
Section~\ref{sec:covariate_shift_extension} establishes the risk bound
under covariate shift, and Section~\ref{sec:orthogonal_refinement}
defines the Neyman-orthogonal second step and establishes its risk
bound.

\subsection{Auxiliary results}
\label{sec:aux_remarks}

The following lemmas identify \(\delta\) as the population minimizer of
the orthogonal objective and bound the associated empirical processes.
They are used in the remainder of this section.

\begin{lemma}
\label{lem:orthogonal_population_geometry}
Suppose \(P_A^X=P_T^X=P_X\),
\(\mathbb E_T(Y\mid X)=h(X)+\delta(X)\), and
\(\mathbb E_A(U\mid X)=h(X)\).  For any fixed bounded measurable function
\(h^\circ\), let
\[
Q_{\rm orth}(v;h^\circ)
=
\mathbb E_T\{Y-h^\circ(X)-v(X)\}^2
+2\mathbb E_A[v(X)\{U-h^\circ(X)\}].
\]
Then, for every square-integrable \(v\),
\[
Q_{\rm orth}(v;h^\circ)-Q_{\rm orth}(\delta;h^\circ)
=
\|v-\delta\|_2^2.
\]
Hence \(\delta\) is the unique population minimizer in \(L^2(P_X)\), independently of
\(h^\circ\).
\end{lemma}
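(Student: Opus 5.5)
The plan is a direct population computation. We expand both expectations in $Q_{\rm orth}(v;h^\circ)$ using the two conditional-mean identities, and then complete the square in $v$. The shared $X$-marginal is what lets the $T$-expectation and the $A$-expectation combine into a single $L^2(P_X)$ inner product.

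\emph{First term.} Write $Y-h^\circ(X)-v(X)=\{h(X)+\delta(X)-h^\circ(X)-v(X)\}+\epsilon_X$ with $\mathbb E_T(\epsilon_X\mid X)=0$. Conditioning on $X$ kills the cross term, so
\[
\mathbb E_T\{Y-h^\circ-v\}^2=\|h+\delta-h^\circ-v\|_2^2+\mathbb E_T\epsilon_X^2 .
\]
\emph{Second term.} Condition on $X$ under the $A$-law and use $\mathbb E_A(U\mid X)=h(X)$ together with $P_A^X=P_X$. This gives
\[
2\mathbb E_A[v(X)\{U-h^\circ(X)\}]=2\langle v,h-h^\circ\rangle,
\]
where $\langle\cdot,\cdot\rangle$ is the $L^2(P_X)$ inner product.

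Now set $a=h-h^\circ$. The objective becomes
\[
Q_{\rm orth}(v;h^\circ)=\|a+\delta-v\|_2^2+2\langle v,a\rangle+\mathbb E_T\epsilon_X^2 .
\]
Expanding $\|a+(\delta-v)\|_2^2=\|a\|_2^2+2\langle a,\delta-v\rangle+\|\delta-v\|_2^2$, the terms linear in $v$ cancel:
\[
2\langle a,\delta-v\rangle+2\langle v,a\rangle=2\langle a,\delta\rangle .
\]
Hence
\[
Q_{\rm orth}(v;h^\circ)=\|v-\delta\|_2^2+\|a\|_2^2+2\langle a,\delta\rangle+\mathbb E_T\epsilon_X^2 ,
\]
and the remainder does not depend on $v$. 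Evaluating at $v=\delta$ and subtracting yields the identity in the lemma. Uniqueness in $L^2(P_X)$ follows because $\|v-\delta\|_2^2=0$ forces $v=\delta$ $P_X$-a.e.

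The only delicate point is integrability, which justifies the conditioning and the finiteness of every term. Here $h^\circ$ is bounded and $v\in L^2(P_X)$. The functions $h$ and $\delta$ are square-integrable, being bounded under Assumption~\ref{assm:structure_c}, or more generally in $L^2$ whenever $U,Y\in L^2$. Square-integrability of $Y$ and $U$ ensures $\mathbb E_T\epsilon_X^2<\infty$ and that $v(X)U$ is integrable by Cauchy--Schwarz. Under Assumption~\ref{assm:data_c} these moments hold by Lemma~\ref{lem:conditional_subgaussian_moments}, so every expectation above is finite and the tower-property steps are valid.
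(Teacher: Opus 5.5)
Your proof is correct and is essentially the paper's argument: conditional centering of $\epsilon_X$ under the target law and of $U-h(X)$ under the auxiliary law, with the shared marginal $P_X$ making the cross terms $\pm 2\langle h-h^\circ, v\rangle$ cancel. The only difference is bookkeeping. The paper works directly with the difference $Q_{\rm orth}(v;h^\circ)-Q_{\rm orth}(\delta;h^\circ)$, so $\mathbb E_T\epsilon_X^2$ never appears, whereas you expand each objective in full and complete the square in $v$.
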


\begin{lemma}
\label{lem:orth_localized_process}
Let \((X_i,\zeta_i)_{i=1}^n\) be independent and identically distributed, and let
\(P\) denote their common law and \(P_n\) the corresponding empirical measure.  Suppose
\[
\mathbb E(\zeta_i\mid X_i)=0,
\qquad
\mathbb E\{\exp(t\zeta_i)\mid X_i\}
\le\exp(\sigma^2t^2/2),
\quad t\in\mathbb R.
\]
Let \(b\) be a fixed measurable function satisfying
\(\|b\|_\infty\le B_b\), and let \(\mathcal V\) be a class of measurable
functions satisfying
\[
\sup_{v\in\mathcal V}\|v\|_\infty\le B.
\]
Assume that \(\mathcal V\) contains a countable subclass such that, for every
\(v\in\mathcal V\), there is a sequence of functions from that subclass
converging to \(v\) at every point.  This condition ensures that the supremum
below is measurable.  For \(\eta\in(0,1]\) such that
\(N_\infty(\eta,\mathcal V)<\infty\), put
\[
N=N_\infty(\eta,\mathcal V),
\]
where \(N_\infty\) is the supremum-norm covering number defined in
Section~\ref{sec:supplement_notation}.  Define
\[
\mathbb Z_n^{\rm q}(v)
=(P_n-P)\{v^2+2bv-2\zeta v\},
\]
and
\[
\mathbb Z_n^{\rm l}(v)
=2(P_n-P)\{v(\zeta-b)\}.
\]
For every \(c_0>0\), there is a constant \(C_{c_0}\), depending only on
\(c_0,B,B_b,\sigma\), such that, for \(r\in\{{\rm q},{\rm l}\}\),
\[
\mathbb E\sup_{v\in\mathcal V}
\left(|\mathbb Z_n^r(v)|-c_0Pv^2\right)_+
\le
C_{c_0}\left\{\frac{\log(N\vee3)}{n}+\eta\right\}.
\]
Here \((\cdot)_+\) denotes the positive part.  The same inequality holds with
\(\mathbb E\) replaced by conditional expectation given any sigma-field that is
independent of the observations and determines \(b\) and \(\mathcal V\), provided
that, almost surely, the realized \(b\) and \(\mathcal V\) satisfy all the preceding
conditions and the displayed supremum is measurable.
\end{lemma}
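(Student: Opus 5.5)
We bound the two processes separately by a discretization argument: replace each \(v\) by a nearby net point, control the finitely many net points with the relative Bernstein and self-normalized multiplier bounds, and absorb cross terms into \(c_0Pv^2\). Fix a minimal sup-norm \(\eta\)-net \(\{v_1,\ldots,v_N\}\subset\mathcal V\); taking centres in \(\mathcal V\) at most doubles \(\eta\), which we absorb into constants. For \(v\in\mathcal V\), pick \(v_j\) with \(\|v-v_j\|_\infty\le\eta\). Since \(\mathcal V\) is bounded by \(B\), we have \(|v^2-v_j^2|\le 2B\eta\) and \(|2b(v-v_j)|\le 2B_b\eta\) pointwise. The noise term needs more care:
\[
\bigl|(P_n-P)\{\zeta(v-v_j)\}\bigr|\le \eta\,(P_n|\zeta|+P|\zeta|).
\]
Lemma~\ref{lem:conditional_subgaussian_moments} gives \(\mathbb E P_n|\zeta|\le\sigma\), so after taking expectations the discretization error is at most \(C(B+B_b+\sigma)\eta\). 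We also replace \(c_0Pv^2\) by \(c_0Pv_j^2\). This costs \(c_0|Pv^2-Pv_j^2|\le 2c_0B\eta\), so it only changes constants. The supremum over \(\mathcal V\) is therefore reduced to a maximum over \(N\) fixed functions, plus \(C\eta\). The countable-approximation hypothesis ensures the supremum is measurable.

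For each fixed \(v_j\), split \(\mathbb Z_n^{\rm q}(v_j)\) into a bounded part and a noise part:
\[
(P_n-P)\{v_j^2+2bv_j\}
\qquad\text{and}\qquad
-2(P_n-P)(\zeta v_j).
\]
Since \(\mathbb E(\zeta\mid X)=0\), the noise part equals \(-2P_n(\zeta v_j)\). We bound it with Lemma~\ref{lem:self_normalized_multiplier}, taking \(a=c_0/4\) and level \(u+\log N\). This gives \(2|P_n\zeta v_j|\le (c_0/4)P_nv_j^2+C\sigma^2(u+\log N)/n\). We then convert \(P_nv_j^2\) to \(2Pv_j^2+CB^2(u+\log N)/n\) using \eqref{eq:empirical_population_norm_comparison} of Lemma~\ref{lem:relative_bernstein}.

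The bounded part \(w_j=v_j^2+2bv_j\) is not nonnegative, so the relative Bernstein lemma does not apply directly. Instead we use Bernstein's inequality with variance proxy \(Pw_j^2\le(B+2B_b)^2Pv_j^2\). This gives, with probability \(1-2e^{-u}/N\),
\[
|(P_n-P)w_j|\le\sqrt{2(B+2B_b)^2Pv_j^2\,(u+\log N)/n}+C(u+\log N)/n.
\]
The inequality \(\sqrt{xy}\le \epsilon x+y/(4\epsilon)\) then turns the square-root term into \((c_0/4)Pv_j^2+C_{c_0}(u+\log N)/n\).

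The linear process \(\mathbb Z_n^{\rm l}\) is handled the same way. Its noise part is \(2P_n(\zeta v_j)\), again with mean zero, and its bounded part is \(-2(P_n-P)(bv_j)\), which has variance at most \(4B_b^2Pv_j^2\).

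A union bound over the \(N\) net points gives, with probability at least \(1-4e^{-u}\),
\[
\max_j\bigl(|\mathbb Z_n^r(v_j)|-c_0Pv_j^2\bigr)_+\le C_{c_0}\,(u+\log N)/n.
\]
Integrating this tail bound over \(u\) yields the expectation \(C_{c_0}\log(N\vee3)/n\). Adding the discretization error \(C\eta\) gives the claim for both \(r\in\{{\rm q},{\rm l}\}\).

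The main obstacle is the unbounded sub-Gaussian factor in the discretization step. A direct sup-norm bound on \(\zeta(v-v_j)\) is unavailable, so we control it through \(\mathbb E|\zeta|\le\sigma\) and keep only an additive \(\eta\) term; a supremum of \(|\zeta|\) would have introduced a \(\log n\) factor.

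The conditional version follows by running the same argument given the independent sigma-field. Given that sigma-field, \(b\) and \(\mathcal V\) are fixed, and the observations keep their law by independence. Every constant above depends only on \(c_0,B,B_b,\sigma\).
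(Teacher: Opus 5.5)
Your proposal is correct and follows essentially the same route as the paper: a sup-norm \(\eta\)-net, Bernstein bounds with variance proportional to \(Pv_j^2\) for the bounded parts, the self-normalized multiplier bound combined with the relative Bernstein comparison for the noise part, a union bound and tail integration, and a discretization step in which the unbounded noise is controlled only in expectation via \(\mathbb E|\zeta|\le\sigma\). The remaining differences are cosmetic: you treat \(v^2+2bv\) jointly rather than term by term, split \(c_0\) differently, and count the failure probability in the union bound slightly differently (three events per net point rather than your stated four), none of which affects the conclusion.
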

\noindent The proofs are given in Section~\ref{sec:proof_auxiliary_orthogonal}.

\subsection{Covariate-shift details for Remark~\mainref{rem:shift_c}}
\label{sec:covariate_shift_extension}

Let \(P_A\) denote the law of \((X,U)\) in the auxiliary population and let
\(P_T\) denote the law of \((X,Y,U)\) in the target population, where only \((X,Y)\)
is observed from the target population.  The auxiliary and target samples are
independent i.i.d.\ samples from the respective observed marginals.  Write
\(\mathbb E_A\) and \(\mathbb E_T\) for expectations under these laws, and write
\(P_A^X\) and \(P_T^X\) for their covariate marginals.  For a target-population estimator, write
\[
R_T(a,b)=\|a-b\|_{L^2(P_T^X)}^2.
\]
Assume
\(\mathcal L_A(U\mid X)=\mathcal L_T(U\mid X)\)
\(P_T^X\)-almost surely.  This follows from equality of the conditional laws
of \(Z\mid X\), since \(U=\widehat f(Z)\).  Hence
\begin{equation}
h(x)=\mathbb E_A\{U\mid X=x\}
=
\mathbb E_T\{U\mid X=x\}
\quad
P_T^X\text{-almost surely}.
\label{eq:conditional_invariance_h}
\end{equation}
The target regression and target residual are
\[
g^{(T)}(x)=\mathbb E_T(Y\mid X=x),
\qquad
\delta^{(T)}(x)=g^{(T)}(x)-h(x).
\]
Assume that the conditional sub-Gaussian conditions in
Assumption~\mainref{assm:data_c} hold under the corresponding populations.

For the plug-in procedure, one-sided overlap is sufficient:
\begin{equation}
P_T^X\ll P_A^X,
\qquad
r_0(x)=\frac{dP_T^X}{dP_A^X}(x)\le C_r
\quad P_A^X\text{-almost surely}.
\label{eq:one_sided_overlap}
\end{equation}
Indeed, for every measurable \(a\),
\begin{equation}
\|a\|_{L^2(P_T^X)}^2
=
\mathbb E_A\{r_0(X)a^2(X)\}
\le
C_r\|a\|_{L^2(P_A^X)}^2.
\label{eq:norm_transport_upper}
\end{equation}
In this subsection, let
\[
\lambda^{\rm ora}
\in
\argmin_{\lambda\in\{0,1\}}
R_T(\widehat g_\lambda,g^{(T)}).
\]

\begin{theorem}
\label{thm:plugin_covariate_shift}
Suppose \eqref{eq:conditional_invariance_h}--\eqref{eq:one_sided_overlap} hold,
\(h\in\mathcal G(q_h,d^h,t^h,\beta^h,K_h)\),
\(\delta^{(T)}\in\mathcal G(q_\delta,d^\delta,t^\delta,\beta^\delta,K_\delta)\),
and the two network classes satisfy Assumption~\mainref{assm:networks_c}.  Then,
for all sufficiently large \(n_A,n_T\),
\begin{equation}
\mathbb E R_T(\widehat g_{\lambda^{\rm ora}},g^{(T)})
\le
C_rC_h\phi_{n_A}^h\log^3 n_A
+
\min\left\{
2\|\delta^{(T)}\|_{L^2(P_T^X)}^2,
C_\delta\phi_{n_T}^{\delta}\log^3 n_T
\right\},
\label{eq:plugin_shift_rate}
\end{equation}
If \(n_{\rm tr}\asymp n_{\rm val}\asymp n_T\), then
\begin{equation}
\begin{split}
\mathbb E R_T(\widehat g_{\widehat\lambda},g^{(T)})
\le{}&
C_rC_h\phi_{n_A}^h\log^3 n_A+
\min\left\{
2\|\delta^{(T)}\|_{L^2(P_T^X)}^2,
C_\delta\phi_{n_T}^{\delta}\log^3 n_T
\right\}
+C_{\rm val}n_T^{-1/2}.
\end{split}
\label{eq:plugin_shift_validation_rate}
\end{equation}
\end{theorem}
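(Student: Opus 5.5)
We follow the proofs of Lemma~\mainref{lem:oracle_c} and Theorem~\mainref{thm:rates_c}, changing only the first stage, where the training and evaluation laws now differ.

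\textbf{First stage.} The estimator \(\widehat h\) is fitted on \(\mathcal D_A\). By \eqref{eq:conditional_invariance_h}, \(U=h(X)+\xi\) under \(P_A\), with \(\xi\) conditionally sub-Gaussian. Proposition~\ref{prop:generic_subgaussian_oracle} is applied under \(P_A^X\). The approximation term is a supremum-norm bound, so Lemmas~\ref{lem:compositional_approximation_M} and~\ref{lem:relu_rate_summary} hold unchanged. Together these give
\[
\mathbb E\|\widehat h-h\|_{L^2(P_A^X)}^2\le C_h\phi_{n_A}^h\log^3n_A .
\]
The norm transport \eqref{eq:norm_transport_upper} then yields
\[
\mathbb E\|\widehat h-h\|_{L^2(P_T^X)}^2\le C_rC_h\phi_{n_A}^h\log^3n_A .
\]
This is the only place \(C_r\) enters.

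\textbf{Second stage.} Condition on \(\mathcal A=\sigma(\mathcal D_A)\). The target sample is independent of \(\mathcal D_A\) and has covariate law \(P_T^X\). It satisfies
\[
Y-\widehat h(X)=\{\delta^{(T)}+h-\widehat h\}(X)+\epsilon_X,
\]
where the regression function is bounded by \(K_\delta+K_h+F_h\) and \(\epsilon_X\) is conditionally sub-Gaussian under \(P_T\). Apply Proposition~\ref{prop:generic_subgaussian_oracle} conditionally, with all norms taken in \(L^2(P_T^X)\). Use the splitting
\[
\|v-\delta^{(T)}-h+\widehat h\|^2\le 2\|v-\delta^{(T)}\|^2+2\|\widehat h-h\|^2 ,
\]
and again bound the approximation and entropy terms via Lemma~\ref{lem:relu_rate_summary}; these are sup-norm statements and hence law-free. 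This bounds \(\mathbb E R_T(\widehat h+\widetilde\delta,g^{(T)})\) by the transported first-stage term plus \(C_\delta\phi_{n_T}^\delta\log^3n_T\). For the other candidate,
\[
\mathbb E R_T(\widehat h,g^{(T)})\le 2\mathbb E\|\widehat h-h\|_{L^2(P_T^X)}^2+2\|\delta^{(T)}\|_{L^2(P_T^X)}^2 .
\]
Since \(\lambda^{\rm ora}\) minimizes the realized risk, its expected risk is at most the minimum of the two expected risks. After enlarging constants this gives \eqref{eq:plugin_shift_rate}.

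\textbf{Validation.} For \eqref{eq:plugin_shift_validation_rate}, repeat the proof of Theorem~\mainref{thm:rates_c} verbatim under \(P_T\). Conditional on \(\mathcal T\), the validation points are i.i.d.\ from \(P_T\), and
\[
P_T\ell_\lambda=\mathbb E_T\epsilon_X^2+R_T(\widehat g_\lambda,g^{(T)}).
\]
The variance bound \eqref{eq:validation_variance_c} uses only the envelopes, \(K_h+K_\delta\) and \(\sigma_T\), so it holds as before. Jensen's inequality then gives the additive cost \(C_{\rm val}n_T^{-1/2}\). The first term is the oracle bound with \(n_{\rm tr}\asymp n_T\).

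\textbf{Main obstacle.} The delicate point is the first-stage transport. We must check that Proposition~\ref{prop:generic_subgaussian_oracle} bounds the risk in the \emph{training} law \(P_A^X\), so that the one-sided bound \eqref{eq:one_sided_overlap} suffices without a lower density-ratio bound. We must also check that no second-stage quantity requires comparing norms in the reverse direction; the argument above avoids any such comparison, since every second-stage norm is taken in \(L^2(P_T^X)\).
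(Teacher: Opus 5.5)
Your proposal is correct and follows essentially the same route as the paper. You fit the first stage under \(P_A\) and transport it to \(L^2(P_T^X)\) via the one-sided bound on \(r_0\); apply the conditional second-stage oracle inequality under \(P_T\) with the splitting \(2\|v-\delta^{(T)}\|^2+2\|\widehat h-h\|^2\); use the crude bound for the uncorrected candidate; and reuse the validation argument with \(\|g^{(T)}\|_\infty\le K_h+K_\delta\). Your observation that only the upper bound on \(r_0\) is needed, because every second-stage norm is already a \(P_T^X\)-norm, is exactly why the paper can work with one-sided overlap here.
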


\begin{proof}
The first-stage least-squares argument is carried out under \(P_A\) and yields
\[
\mathbb E\|\widehat h-h\|_{L^2(P_A^X)}^2
\le
C_h\phi_{n_A}^h\log^3 n_A.
\]
Equation \eqref{eq:norm_transport_upper} therefore gives
\begin{equation}
\mathbb E\|\widehat h-h\|_{L^2(P_T^X)}^2
\le
C_r C_h\phi_{n_A}^h\log^3 n_A.
\label{eq:h_shift_transport}
\end{equation}
Conditional on the auxiliary sample, the target pseudo-response satisfies
\[
Y-\widehat h(X)
=
\delta_{\widehat h}^{(T)}(X)+\epsilon_X,
\qquad
\delta_{\widehat h}^{(T)}
=
\delta^{(T)}-(\widehat h-h).
\]
The generic conditional sub-Gaussian least-squares inequality in
Proposition~\ref{prop:generic_subgaussian_oracle}, now applied under \(P_T\), and the
deterministic approximation bound
\[
\inf_{v\in\mathcal F_\delta}
\|v-\delta_{\widehat h}^{(T)}\|_{L^2(P_T^X)}^2
\le
2\|\widehat h-h\|_{L^2(P_T^X)}^2
+
2\inf_{v\in\mathcal F_\delta}
\|v-\delta^{(T)}\|_{L^2(P_T^X)}^2
\]
give the corrected-candidate bound in \eqref{eq:plugin_shift_rate}.  The uncorrected
candidate satisfies
\[
R_T(\widehat h,g^{(T)})
\le
2\|\widehat h-h\|_{L^2(P_T^X)}^2
+
2\|\delta^{(T)}\|_{L^2(P_T^X)}^2.
\]
Taking the better candidate proves \eqref{eq:plugin_shift_rate}.
For the validation-selected estimator, condition on the auxiliary and target-training
samples.  The proof of \eqref{eq:validation_oracle_c} applies under \(P_T\), and
\eqref{eq:validation_variance_c} remains valid because
\(\|g^{(T)}\|_\infty\le K_h+K_\delta\).  Combining that validation inequality with the
preceding candidate bounds based on \(n_{\rm tr}\asymp n_T\) proves
\eqref{eq:plugin_shift_validation_rate}.
\end{proof}

\subsection{Neyman-orthogonal refinement for the second stage}
\label{sec:orthogonal_refinement}

In this section, we investigate a second-stage refinement that targets the structural residual \(\delta\) by a
Neyman-orthogonal score.  
The orthogonal construction uses the density ratio in its auxiliary augmentation.  For a
test function \(a\), define
\begin{equation}
\Psi_r(a;\delta^\circ,h^\circ,r^\circ)
=
\mathbb E_T[a(X)\{Y-h^\circ(X)-\delta^\circ(X)\}]
-
\mathbb E_A[r^\circ(X)a(X)\{U-h^\circ(X)\}].
\label{eq:shift_orthogonal_moment}
\end{equation}

\begin{proposition}
\label{prop:shift_orthogonality_dr}
Suppose \(P_T^X\ll P_A^X\) and let
\(r_0=dP_T^X/dP_A^X\).  Then, for every measurable \(a\) for which the
expectations below are finite,
\[
\Psi_r(a;\delta^{(T)},h,r_0)=0.
\]
Moreover, for every measurable perturbation \(\eta\) and \(\rho\) for which
the displayed expectations are finite,
\begin{equation}
\left.
\frac{d}{dt}
\Psi_r(a;\delta^{(T)},h+t\eta,r_0)
\right|_{t=0}=0,
\qquad
\left.
\frac{d}{dt}
\Psi_r(a;\delta^{(T)},h,r_0+t\rho)
\right|_{t=0}=0.
\label{eq:shift_orthogonal_derivatives}
\end{equation}
For every \(h^\circ\) and \(r^\circ\) for which the displayed expectations
are finite,
\begin{equation}
\Psi_r(a;\delta^{(T)},h^\circ,r_0)=0
\quad\text{for every }h^\circ,
\quad
\Psi_r(a;\delta^{(T)},h,r^\circ)=0
\quad\text{for every }r^\circ.
\label{eq:shift_double_robustness}
\end{equation}
\end{proposition}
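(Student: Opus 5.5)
The plan is to reduce $\Psi_r$ to a single product-form expression, from which all three claims follow at once. Throughout I use the standing conventions of Section~\ref{sec:covariate_shift_extension}: $\mathbb E_A(U\mid X)=h(X)$, the conditional invariance \eqref{eq:conditional_invariance_h} (so that $h$ is also $\mathbb E_T(U\mid X)$ $P_T^X$-a.s.), and the definition $g^{(T)}=h+\delta^{(T)}$ with $g^{(T)}(X)=\mathbb E_T(Y\mid X)$.

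\textbf{Step 1 (target term).} For arbitrary $h^\circ$, condition on $X$ inside $\mathbb E_T$. Since $\mathbb E_T(Y\mid X)=h(X)+\delta^{(T)}(X)$, this gives
\[
\mathbb E_T[a(X)\{Y-h^\circ(X)-\delta^{(T)}(X)\}]=\mathbb E_T[a(X)\{h(X)-h^\circ(X)\}].
\]
Because the integrand now depends only on $X$, the change of measure with $r_0=dP_T^X/dP_A^X$ (valid since $P_T^X\ll P_A^X$) rewrites this as $\mathbb E_A[r_0(X)a(X)\{h(X)-h^\circ(X)\}]$.

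\textbf{Step 2 (auxiliary term).} For arbitrary $r^\circ$, condition on $X$ under $P_A$. Using $\mathbb E_A(U\mid X)=h(X)$,
\[
\mathbb E_A[r^\circ(X)a(X)\{U-h^\circ(X)\}]=\mathbb E_A[r^\circ(X)a(X)\{h(X)-h^\circ(X)\}].
\]

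\textbf{Step 3 (identity and conclusions).} Subtracting the two steps yields the key identity
\[
\Psi_r(a;\delta^{(T)},h^\circ,r^\circ)=\mathbb E_A\bigl[\{r_0(X)-r^\circ(X)\}\,a(X)\,\{h(X)-h^\circ(X)\}\bigr].
\]
This vanishes whenever $r^\circ=r_0$ or $h^\circ=h$, which is exactly \eqref{eq:shift_double_robustness}. Taking $h^\circ=h$ and $r^\circ=r_0$ gives $\Psi_r(a;\delta^{(T)},h,r_0)=0$. For the derivatives:
\begin{itemize}
\item along $h^\circ=h+t\eta$ with $r^\circ=r_0$, the map $t\mapsto\Psi_r$ is identically zero;
\item along $r^\circ=r_0+t\rho$ with $h^\circ=h$, the factor $h-h^\circ$ vanishes, so the map is again identically zero.
\end{itemize}
Both derivatives at $t=0$ are therefore zero, which is \eqref{eq:shift_orthogonal_derivatives}.

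The only delicate point is bookkeeping of integrability and null sets. The conditioning steps need the expectations to be finite, which the statement assumes. The functions $h$ and $\delta^{(T)}$ need only be defined $P_T^X$-a.s., which is harmless because every integrand carrying $r_0$ is $P_A^X$-a.s.\ equal to one supported where $r_0>0$. I do not expect any genuine obstacle beyond this.
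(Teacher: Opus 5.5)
Your proof is correct and uses the same ingredients as the paper's: conditional centering of $Y$ under $P_T$ and of $U$ under $P_A$, together with the change of measure $\mathbb E_T\{\phi(X)\}=\mathbb E_A\{r_0(X)\phi(X)\}$. The only difference is organizational. The paper verifies each claim by a separate computation, whereas you collect everything into the product identity $\Psi_r(a;\delta^{(T)},h^\circ,r^\circ)=\mathbb E_A[\{r_0(X)-r^\circ(X)\}a(X)\{h(X)-h^\circ(X)\}]$, which makes all three conclusions immediate and also shows that the conditional invariance you invoke is not actually needed for this proposition.
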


\begin{proof}
By conditional centering and the definition of \(\delta^{(T)}\),
\[
\begin{split}
\Psi_r(a;\delta^{(T)},h,r_0)
={}&
\mathbb E_T[a(X)\{g^{(T)}(X)-h(X)-\delta^{(T)}(X)\}]\\
&-
\mathbb E_A[r_0(X)a(X)\mathbb E_A\{U-h(X)\mid X\}]
=0.
\end{split}
\]
For the derivative with respect to \(h^\circ\),
\[
\begin{split}
\left.
\frac{d}{dt}
\Psi_r(a;\delta^{(T)},h+t\eta,r_0)
\right|_{t=0}
&=
-\mathbb E_T\{a(X)\eta(X)\}
+
\mathbb E_A\{r_0(X)a(X)\eta(X)\}\\
&=0.
\end{split}
\]
For the derivative with respect to \(r^\circ\),
\[
\begin{split}
\left.
\frac{d}{dt}
\Psi_r(a;\delta^{(T)},h,r_0+t\rho)
\right|_{t=0}
&=
-\mathbb E_A[\rho(X)a(X)\{U-h(X)\}]\\
&=
-\mathbb E_A[\rho(X)a(X)\mathbb E_A\{U-h(X)\mid X\}]
=0.
\end{split}
\]
If \(r^\circ=r_0\), then for arbitrary \(h^\circ\),
\[
\begin{split}
\Psi_r(a;\delta^{(T)},h^\circ,r_0)
&=
\mathbb E_T[a\{g^{(T)}-h^\circ-\delta^{(T)}\}]
-
\mathbb E_T[a\{h-h^\circ\}]\\
&=
\mathbb E_T[a\{g^{(T)}-h-\delta^{(T)}\}]
=0.
\end{split}
\]
If \(h^\circ=h\), then the auxiliary term has conditional mean zero for every
\(r^\circ\), while the target term is also zero.  This proves
\eqref{eq:shift_double_robustness}.
\end{proof}

For the orthogonal estimator construction, split the auxiliary sample into independent folds
\(\mathcal D_A^{(1)}\) and \(\mathcal D_A^{(2)}\), with sizes \(n_{A,1}\) and
\(n_{A,2}\), and assume
\begin{equation}
n_{A,1}\asymp n_A,
\qquad
n_{A,2}\asymp n_A.
\label{eq:auxiliary_split}
\end{equation}
Let
\begin{equation}
\widehat h
\in
\argmin_{u\in\mathcal F_h}
\frac1{n_{A,1}}
\sum_{(X_j,U_j)\in\mathcal D_A^{(1)}}
\{U_j-u(X_j)\}^2.
\label{eq:h_split_estimator}
\end{equation}
The split makes the orthogonal proof conditional on the first fold.
Let \(\widehat r\) be a density-ratio estimator trained on data
independent of the target sample and \(\mathcal D_A^{(2)}\).  Define
\begin{equation}
\begin{split}
\widehat Q_{\rm orth,r}(v)
={}&
\frac1{n_T}\sum_{i=1}^{n_T}
\{Y_i-\widehat h(X_i)-v(X_i)\}^2+
\frac2{n_{A,2}}
\sum_{(X_j,U_j)\in\mathcal D_A^{(2)}}
\widehat r(X_j)v(X_j)\{U_j-\widehat h(X_j)\},
\end{split}
\label{eq:weighted_orthogonal_objective}
\end{equation}
and let \(\widehat\delta_{\rm orth,r}\) be a minimizer over
\(\mathcal F_\delta\).
We further define
\begin{equation}
\omega_{A,\delta}
=
\frac{s_\delta+1}{n_{A,2}}
\left[
(L_\delta+1)\log\{(M+1)(s_\delta+2)\}
+
\log\{n_{A,2}(L_\delta+1)(p_X+1)\}
\right]
+
\frac1{n_{A,2}}.
\label{eq:auxiliary_orthogonal_cost}
\end{equation}

\begin{theorem}
\label{thm:orthogonal_covariate_shift}
Suppose
\begin{equation}
\begin{aligned}
0<c_r\le r_0(X)\le C_r<\infty
&\quad P_A^X\text{-almost surely},\\
0\le\widehat r(X)\le C_{\widehat r}<\infty
&\quad P_A^X\text{-almost surely}.
\end{aligned}
\label{eq:two_sided_overlap}
\end{equation}
Assume \(\widehat h\) and \(\widehat r\) are independent of the observations used in
\eqref{eq:weighted_orthogonal_objective}, the conditional sub-Gaussian conditions in
Assumption~\mainref{assm:data_c} hold under their corresponding populations, and
Assumptions~\mainref{assm:structure_c} and~\mainref{assm:networks_c} hold for
\(h\) under \(P_A\) and for \(\delta^{(T)}\) under \(P_T\).  Then
\begin{equation}
\begin{split}
\mathbb E&\|\widehat\delta_{\rm orth,r}-\delta^{(T)}\|_{L^2(P_T^X)}^2\\
&\qquad\le C\Bigg[
\inf_{v\in\mathcal F_\delta}
\|v-\delta^{(T)}\|_{L^2(P_T^X)}^2
+
\mathfrak e_{n_T}(\mathcal F_\delta)
+
\mathfrak e_{n_{A,2}}(\mathcal F_\delta)+
\mathbb E\|\{\widehat r-r_0\}(\widehat h-h)\|_{L^2(P_A^X)}^2
\Bigg].
\end{split}
\label{eq:weighted_orthogonal_residual_rate}
\end{equation}
The constant \(C\) may depend on \(c_r,C_r,C_{\widehat r}\).
Consequently, for all sufficiently large \(n_A,n_T\), under the
sparse-ReLU architecture, with
\(\omega_{A,\delta}\) defined in \eqref{eq:auxiliary_orthogonal_cost},
\begin{equation}
\begin{split}
\mathbb E\|\widehat\delta_{\rm orth,r}-\delta^{(T)}\|_{L^2(P_T^X)}^2
\le C\Bigg[&
\phi_{n_T}^{\delta}L_\delta\log^2 n_T
+
\omega_{A,\delta}+
\mathbb E\|\{\widehat r-r_0\}(\widehat h-h)\|_{L^2(P_A^X)}^2
\Bigg].
\end{split}
\label{eq:weighted_orthogonal_relu_rate}
\end{equation}
If \(\widehat r=r_0\), the first-stage nuisance disappears exactly from the residual-risk
bound.  More generally, the nuisance contribution is second order:
\begin{equation}
\begin{split}
&\mathbb E\|\{\widehat r-r_0\}(\widehat h-h)\|_{L^2(P_A^X)}^2\le
\left\{\mathbb E\|\widehat r-r_0\|_{L^4(P_A^X)}^4\right\}^{1/2}
\left\{\mathbb E\|\widehat h-h\|_{L^4(P_A^X)}^4\right\}^{1/2}.
\end{split}
\label{eq:nuisance_product_L4}
\end{equation}
If \(\widehat g_{\rm orth,r}=\widehat h+\widehat\delta_{\rm orth,r}\), then
\begin{equation}
\begin{split}
\mathbb E R_T(\widehat g_{\rm orth,r},g^{(T)})
\le C\Big[&
C_r\phi_{n_{A,1}}^hL_h\log^2 n_{A,1}
+
\phi_{n_T}^{\delta}L_\delta\log^2n_T
+
\omega_{A,\delta}\\
&\qquad +
\mathbb E\|\{\widehat r-r_0\}(\widehat h-h)\|_{L^2(P_A^X)}^2
\Big].
\end{split}
\label{eq:weighted_orthogonal_prediction_rate}
\end{equation}
\end{theorem}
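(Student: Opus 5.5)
The plan is to run a basic-inequality argument for the M-estimator \(\widehat\delta_{\rm orth,r}\), conditional on the sigma-field \(\mathcal A_1\) generated by \(\mathcal D_A^{(1)}\) and by the data used for \(\widehat r\). Given \(\mathcal A_1\), the functions \(\widehat h\) and \(\widehat r\) are fixed and bounded, and the target sample and \(\mathcal D_A^{(2)}\) are independent of them.

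\textbf{Step 1: population objective.} Let \(Q(v)\) be the conditional expectation of \(\widehat Q_{\rm orth,r}(v)\). Using \(\mathbb E_T(Y\mid X)=h+\delta^{(T)}\) and \(\mathbb E_A(U\mid X)=h\), together with \(\mathbb E_T\{a(X)\}=\mathbb E_A\{r_0(X)a(X)\}\), a direct expansion in the spirit of Lemma~\ref{lem:orthogonal_population_geometry} should give
\[
Q(v)-Q(\delta^{(T)})=\|v-\delta^{(T)}\|_{L^2(P_T^X)}^2-2\,\mathbb E_A\bigl[(r_0-\widehat r)(v-\delta^{(T)})(h-\widehat h)\bigr].
\]
The cross term is exactly the second-order nuisance remainder. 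By Cauchy--Schwarz, \(2ab\le a^2/2+2b^2\), and \(\|\cdot\|_{L^2(P_A^X)}^2\le c_r^{-1}\|\cdot\|_{L^2(P_T^X)}^2\) (which follows from \(r_0\ge c_r\)), it is at most
\[
\tfrac12\|v-\delta^{(T)}\|_{L^2(P_T^X)}^2+C\|(\widehat r-r_0)(\widehat h-h)\|_{L^2(P_A^X)}^2 .
\]
Hence \(Q(v)-Q(\delta^{(T)})\) is bounded below by \(\tfrac12\|v-\delta^{(T)}\|^2_{L^2(P_T^X)}\) minus the nuisance term, and bounded above by \(\tfrac32\|v-\delta^{(T)}\|_{L^2(P_T^X)}^2\) plus the same term.

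\textbf{Step 2: basic inequality and empirical processes.} Let \(v^\star\in\mathcal F_\delta\) nearly attain \(\inf_{v\in\mathcal F_\delta}\|v-\delta^{(T)}\|_{L^2(P_T^X)}\). Then
\[
Q(\widehat\delta)-Q(v^\star)\le -(\widehat Q-Q)(\widehat\delta)+(\widehat Q-Q)(v^\star).
\]
Split \(\widehat Q-Q\) into a target part and an auxiliary part.

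\emph{Target part.} Write \(w=v-\delta^{(T)}\), \(\zeta=-\epsilon_X\), and \(b=h-\widehat h\), so that \(Y-\widehat h-v=b-w-\zeta\). Up to terms not depending on \(v\), the target part is \((P_n-P)\{w^2-2bw+2\zeta w\}\). This is \(\mathbb Z_n^{\rm q}\) of Lemma~\ref{lem:orth_localized_process}, applied to the class \(\{\delta^{(T)}-v:\ v\in\mathcal F_\delta\}\) with shift \(-b\) and noise \(\zeta\); all of these are bounded by constants.

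\emph{Auxiliary part.} Using \(U-\widehat h=\xi+(h-\widehat h)\), the auxiliary part is \(2(P_{n_{A,2}}-P_A)\{\widehat r\,v\,(\xi+h-\widehat h)\}\). After subtracting its value at \(\delta^{(T)}\), this is a linear process in \(\widehat r\,w\). It fits the form \(\mathbb Z_n^{\rm l}\) with class \(\{\widehat r\,w\}\), noise \(\xi\), and shift \(b=\widehat h-h\). The class \(\{\widehat r\,w\}\) is bounded by \(C_{\widehat r}(F_\delta+K_\delta)\), and its sup-norm covering number at \(\eta\) is controlled by that of \(\mathcal F_\delta\) at \(\eta/C_{\widehat r}\).

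Applying Lemma~\ref{lem:orth_localized_process} conditionally on \(\mathcal A_1\), with \(\eta=n_T^{-1}\) and \(\eta=n_{A,2}^{-1}\) respectively, bounds each process by \(c_0\) times its \(P\)-norm squared plus \(C\{\mathfrak e_{n_T}(\mathcal F_\delta)+\mathfrak e_{n_{A,2}}(\mathcal F_\delta)\}\). Both localization norms are comparable to \(\|w\|^2_{L^2(P_T^X)}\): the auxiliary one because \(\widehat r\le C_{\widehat r}\) and \(r_0\ge c_r\). Choosing \(c_0\) small absorbs these terms into the left side, leaving \(\tfrac14\|\widehat\delta-\delta^{(T)}\|^2_{L^2(P_T^X)}\). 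Taking expectations then gives \eqref{eq:weighted_orthogonal_residual_rate}.

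\textbf{Step 3: rates and prediction.} Substitute Lemma~\ref{lem:relu_rate_summary} for the approximation and \(n_T\)-entropy terms. Lemma~\ref{lem:relu_entropy_M} at \(\eta=n_{A,2}^{-1}\) reproduces \(\omega_{A,\delta}\) in \eqref{eq:auxiliary_orthogonal_cost}, which yields \eqref{eq:weighted_orthogonal_relu_rate}. The product bound \eqref{eq:nuisance_product_L4} follows from Cauchy--Schwarz inside the \(P_A^X\) integral and then Cauchy--Schwarz in expectation. For \eqref{eq:weighted_orthogonal_prediction_rate}, write \(\widehat g_{\rm orth,r}-g^{(T)}=(\widehat h-h)+(\widehat\delta_{\rm orth,r}-\delta^{(T)})\). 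The first piece is handled by the first-stage rate on \(\mathcal D_A^{(1)}\) transported to \(P_T^X\) via \eqref{eq:norm_transport_upper}, exactly as in \eqref{eq:h_shift_transport}.

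The main obstacle is Step~2 for the auxiliary process. The class there is weighted by the random \(\widehat r\), so the localization must be done in \(P_A^X\)-norm and then converted to the target norm. This conversion is the only place where the two-sided overlap \(c_r\le r_0\) is genuinely needed. One must also check that the centering at \(\delta^{(T)}\) leaves exactly the shift structure required by Lemma~\ref{lem:orth_localized_process}, with \(\mathbb E(\xi\mid X)=0\). I would first verify these by reparametrizing \(v\mapsto\widehat r\,(v-\delta^{(T)})\) and confirming that the pointwise-countable approximation condition holds because network parameters can be restricted to rationals.
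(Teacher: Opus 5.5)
Your proposal is correct and follows the paper's proof essentially step for step. It uses the same population identity $Q(v)-Q(\delta^{(T)})=\|v-\delta^{(T)}\|_{L^2(P_T^X)}^2-2\mathbb E_A\{(\widehat r-r_0)(\widehat h-h)(v-\delta^{(T)})\}$ with absorption via $r_0\ge c_r$, the same split of the centered criterion into a quadratic target process and a linear $\widehat r$-weighted auxiliary process controlled by Lemma~\ref{lem:orth_localized_process} (localization converted to the $L^2(P_T^X)$ norm), and the same finish via Lemmas~\ref{lem:relu_rate_summary} and~\ref{lem:relu_entropy_M}, Cauchy--Schwarz, and the transport bound \eqref{eq:norm_transport_upper}.

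There is one cosmetic slip. To match your target process $(P_n-P)\{w^2-2bw+2\zeta w\}$ to $\mathbb Z_n^{\rm q}$ with class $\{\delta^{(T)}-v\}$ and noise $\zeta$, the shift must be $+b$, not $-b$. The paper instead uses class $\{v-\delta^{(T)}\}$, shift $\widehat h-h$, and noise $\epsilon_X$. Since the lemma holds for any bounded shift, this changes nothing.
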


\noindent Theorem~\ref{thm:orthogonal_covariate_shift} separates the effect of
first-stage estimation on the structural residual from its effect on the combined
predictor.  The population identity \eqref{eq:weighted_population_geometry} shows
that the errors in \(\widehat h\) and \(\widehat r\) enter the residual criterion only
through their product.  This product vanishes if either \(\widehat h=h\) or
\(\widehat r=r_0\), in agreement with Proposition~\ref{prop:shift_orthogonality_dr}
and \eqref{eq:shift_double_robustness}.  When both nuisances are estimated,
\eqref{eq:nuisance_product_L4} controls the product, but it is negligible only if
the right-hand side of that bound is of smaller order than the other terms in
\eqref{eq:weighted_orthogonal_residual_rate}.  The upper and lower bounds on
\(r_0\) in \eqref{eq:two_sided_overlap} have distinct roles: the upper bound
transports the first-stage risk from \(P_A^X\) to \(P_T^X\) through
\eqref{eq:norm_transport_upper}, whereas the lower bound converts the
auxiliary-sample localization norm into the target-sample norm.

Under the sparse-ReLU architecture, \eqref{eq:weighted_orthogonal_relu_rate}
consists of the target-sample cost of learning \(\delta^{(T)}\), the auxiliary
empirical-score cost \(\omega_{A,\delta}\) in
\eqref{eq:auxiliary_orthogonal_cost}, and the nuisance-product remainder.  
In the common-covariate special case, this can improve the order of the
usual plug-in residual bound only when the first-stage term in that bound
is of larger order than its \(\delta\)-estimation term and the two
replacement terms are of smaller order than the first-stage term.
For the combined predictor, however, \eqref{eq:weighted_orthogonal_prediction_rate}
retains the first-stage rate because \(\widehat g_{\rm orth,r}\) still contains
\(\widehat h\), and it also contains the auxiliary empirical-score cost.  Thus,
Theorem~\ref{thm:orthogonal_covariate_shift} removes a separate
first-stage contribution from the residual-risk bound but does not, by itself,
establish a faster prediction rate for \(g^{(T)}\).

\begin{proof}
Condition on the nuisance-training data and write
\[
\Delta_h=\widehat h-h,
\qquad
\Delta_r=\widehat r-r_0,
\qquad
v=\delta^\circ-\delta^{(T)}.
\]
The population counterpart of \eqref{eq:weighted_orthogonal_objective} is
\[
Q_{\rm orth,r}(\delta^\circ)
=
\mathbb E_T\{Y-\widehat h(X)-\delta^\circ(X)\}^2
+
2\mathbb E_A[\widehat r(X)\delta^\circ(X)\{U-\widehat h(X)\}].
\]
Using \(g^{(T)}=h+\delta^{(T)}\) and
\(\mathbb E_A(U-\widehat h\mid X)=-\Delta_h\), direct expansion gives
\begin{equation}
Q_{\rm orth,r}(\delta^\circ)-Q_{\rm orth,r}(\delta^{(T)})
=
\|v\|_{L^2(P_T^X)}^2
-
2\mathbb E_A\{\Delta_r(X)\Delta_h(X)v(X)\}.
\label{eq:weighted_population_geometry}
\end{equation}
Indeed, the target cross term is
\(2\mathbb E_T\{\Delta_h(X)v(X)\}\), while the part of the auxiliary augmentation
containing \(r_0\) is
\(-2\mathbb E_A\{r_0(X)\Delta_h(X)v(X)\}
=-2\mathbb E_T\{\Delta_h(X)v(X)\}\), so these terms cancel.

By \eqref{eq:two_sided_overlap},
\[
\|v\|_{L^2(P_A^X)}^2
\le c_r^{-1}\|v\|_{L^2(P_T^X)}^2.
\]
Consequently, Cauchy--Schwarz and \(2ab\le a^2/4+4b^2\) yield
\begin{equation}
2|\mathbb E_A(\Delta_r\Delta_h v)|
\le
\frac14\|v\|_{L^2(P_T^X)}^2
+
4c_r^{-1}\|\Delta_r\Delta_h\|_{L^2(P_A^X)}^2.
\label{eq:weighted_product_absorption}
\end{equation}
Thus the population objective retains quadratic curvature, up to the product remainder.

For \(v\in\mathcal F_\delta\), put
\(w_v=v-\delta^{(T)}\), and define the centered criterion process
\[
\mathbb Z_r(v)
=
\{\widehat Q_{\rm orth,r}(v)-\widehat Q_{\rm orth,r}(\delta^{(T)})\}
-
\{Q_{\rm orth,r}(v)-Q_{\rm orth,r}(\delta^{(T)})\}.
\]
A direct expansion gives
\[
\begin{split}
\mathbb Z_r(v)
={}&(P_{n_T}^T-P_T)
\{w_v^2+2\Delta_hw_v-2\epsilon_X w_v\}+2(P_{n_{A,2}}^A-P_A)
\{\widehat r\,w_v(\xi-\Delta_h)\}.
\end{split}
\]
Apply the quadratic conclusion of
Lemma~\ref{lem:orth_localized_process} to the target process with
\(b=\Delta_h\), \(\zeta=\epsilon_X\), and the translated class
\(\{w_v:v\in\mathcal F_\delta\}\), using \(c_0=1/16\).  For the auxiliary process, condition on
the nuisances and apply the linear-process argument in that lemma to the
class \(\{\widehat r w_v:v\in\mathcal F_\delta\}\), with
\(b=\Delta_h\), \(\zeta=\xi\), and
\(c_0=c_r/(16C_{\widehat r}^2)\) when \(C_{\widehat r}>0\); if
\(C_{\widehat r}=0\), the auxiliary process is zero.  Since
\[
P_A(\widehat r w_v)^2
\le C_{\widehat r}^2P_Aw_v^2
\le \frac{C_{\widehat r}^2}{c_r}P_Tw_v^2,
\]
each process can therefore be assigned a localization term
\(P_Tw_v^2/16\).  Multiplication by the fixed bounded function
\(\widehat r\) and translation by \(\delta^{(T)}\) change the sparse-ReLU
covering bound only by a fixed constant in its resolution.  Hence
Lemma~\ref{lem:relu_entropy_M} bounds the two entropy costs by the two
terms on the right-hand side below.  Consequently,
conditionally on the nuisance-training data,
\begin{equation}
\begin{split}
\mathbb E\Bigg[
\sup_{v\in\mathcal F_\delta}
\left\{
|\mathbb Z_r(v)|-\frac18\|w_v\|_{L^2(P_T^X)}^2
\right\}_+
\,\Bigg|\,\widehat h,\widehat r
\Bigg]
\le
C\{\mathfrak e_{n_T}(\mathcal F_\delta)
+\mathfrak e_{n_{A,2}}(\mathcal F_\delta)\}.
\end{split}
\label{eq:weighted_localized_process}
\end{equation}

Let \(v^\star\in\mathcal F_\delta\) be arbitrary.  Empirical optimality yields
\[
Q_{\rm orth,r}(\widehat\delta_{\rm orth,r})
-Q_{\rm orth,r}(\delta^{(T)})
\le
Q_{\rm orth,r}(v^\star)-Q_{\rm orth,r}(\delta^{(T)})
+|\mathbb Z_r(\widehat\delta_{\rm orth,r})|
+|\mathbb Z_r(v^\star)|.
\]
Applying \eqref{eq:weighted_product_absorption} to the population geometry on both sides
gives
\[
\begin{split}
\frac34\|\widehat\delta_{\rm orth,r}-\delta^{(T)}\|_{L^2(P_T^X)}^2
\le{}&
\frac54\|v^\star-\delta^{(T)}\|_{L^2(P_T^X)}^2
+8c_r^{-1}\|\Delta_r\Delta_h\|_{L^2(P_A^X)}^2 +|\mathbb Z_r(\widehat\delta_{\rm orth,r})|
+|\mathbb Z_r(v^\star)|.
\end{split}
\]
Use \eqref{eq:weighted_localized_process} at the empirical minimizer and at
\(v^\star\), absorb the resulting multiple of the empirical-minimizer norm into the
left-hand side, and then take the infimum over \(v^\star\).  This gives, conditionally on
the nuisances,
\[
\begin{split}
\mathbb E&\{\|\widehat\delta_{\rm orth,r}-\delta^{(T)}\|_{L^2(P_T^X)}^2
\mid\widehat h,\widehat r\}\\
&\qquad\le C\Bigg[
\inf_{v\in\mathcal F_\delta}
\|v-\delta^{(T)}\|_{L^2(P_T^X)}^2
+
\mathfrak e_{n_T}(\mathcal F_\delta) +
\mathfrak e_{n_{A,2}}(\mathcal F_\delta)
+
\|\Delta_r\Delta_h\|_{L^2(P_A^X)}^2
\Bigg].
\end{split}
\]
Taking expectation proves \eqref{eq:weighted_orthogonal_residual_rate}.  The ReLU
specialization follows from Lemma~\ref{lem:relu_rate_summary} and
Lemma~\ref{lem:relu_entropy_M}.  Finally, \eqref{eq:nuisance_product_L4} is obtained by the Cauchy--Schwarz inequality.  
By \eqref{eq:norm_transport_upper},
\[
\mathbb E\|\widehat h-h\|_{L^2(P_T^X)}^2
\le
C_r\mathbb E\|\widehat h-h\|_{L^2(P_A^X)}^2
\le
C C_r\phi_{n_{A,1}}^hL_h\log^2n_{A,1}.
\]
Together with
\[
R_T(\widehat g_{\rm orth,r},g^{(T)})
\le
2\|\widehat h-h\|_{L^2(P_T^X)}^2
+2\|\widehat\delta_{\rm orth,r}-\delta^{(T)}\|_{L^2(P_T^X)}^2,
\]
this proves \eqref{eq:weighted_orthogonal_prediction_rate}.
\end{proof}

\section{Details for Section~\mainref{sec:naive_c}}
\label{sec:appendix_target_only}

This section proves the minimax lower bound,
Theorem~\mainref{thm:minimax_c}. We first develop the auxiliary lemmas
used in the construction and then give the proof.

\begin{lemma}
\label{lem:bernoulli_subgaussian_c}
Let \(B\) be a Bernoulli random variable with success probability \(p\in(0,1/2)\).  Then,
for every \(t\in\mathbb R\),
\[
\mathbb E\exp\{t(B-p)\}
\le
\exp\left\{
\frac{1-2p}{4\log\{(1-p)/p\}}t^2
\right\}.
\]
\end{lemma}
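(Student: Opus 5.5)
The plan is a direct calculus argument on the cumulant generating function. Write $L=\log\{(1-p)/p\}$, which is positive because $p<1/2$, and put $c=(1-2p)/(4L)$. Set
\[
\psi(t)=\log\mathbb E\exp\{t(B-p)\}=\log(1-p+pe^t)-pt .
\]
The claim is equivalent to $f(t):=ct^2-\psi(t)\ge 0$ for all $t\in\mathbb R$. Clearly $f(0)=0$. I will control the sign of $f'$ and then check $f$ at the one remaining local minimum.

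First I compute the derivative. We have
\[
\psi'(t)=\frac{pe^t}{1-p+pe^t}-p=\Lambda(t-L)-p,
\]
where $\Lambda(u)=1/(1+e^{-u})$ is the logistic function; this uses $p/(1-p)=e^{-L}$. Hence $f'(t)=2ct-\psi'(t)$, and $f'$ compares $\psi'$ with the line $t\mapsto 2ct=(1-2p)t/(2L)$. Since $\Lambda$ is convex on $(-\infty,0]$ and concave on $[0,\infty)$, the function $\psi'$ is convex on $(-\infty,L]$ and concave on $[L,\infty)$.

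Next I locate the three points where the line meets $\psi'$:
\[
\psi'(0)=0,\qquad \psi'(L)=\tfrac12-p=2cL,\qquad \psi'(2L)=\Lambda(L)-p=1-2p=2c\cdot 2L .
\]
On $(-\infty,L]$, a convex function that agrees with a line at $0$ and at $L$ lies below the line between these points and above it outside them. So $\psi'\le 2ct$ on $[0,L]$ and $\psi'\ge 2ct$ on $(-\infty,0]$. On $[L,\infty)$, concavity gives $\psi'\ge 2ct$ on $[L,2L]$ and $\psi'\le 2ct$ on $[2L,\infty)$.

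These inequalities determine the monotonicity of $f$:
\begin{itemize}
\item $f'\le 0$ on $(-\infty,0]$, so $f$ decreases there;
\item $f'\ge 0$ on $[0,L]$, so $f$ increases there;
\item $f'\le 0$ on $[L,2L]$, so $f$ decreases there;
\item $f'\ge 0$ on $[2L,\infty)$, so $f$ increases there.
\end{itemize}
Therefore the infimum of $f$ is attained at $t=0$ or $t=2L$. It remains to show $f(2L)\ge 0$.

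For this, note that $pe^{2L}=(1-p)^2/p$, so $1-p+pe^{2L}=(1-p)/p$. This gives
\[
\psi(2L)=L-2pL=(1-2p)L,\qquad c(2L)^2=(1-2p)L .
\]
Hence $f(2L)=0$, so $f\ge 0$ everywhere, which proves the lemma; this is the Kearns--Saul constant.

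The main obstacle is recognising that the line $2ct$ meets $\psi'$ exactly at $0$, $L$ and $2L$. Once that is seen, the concave--convex shape of the logistic function settles every sign, and the endpoint computation is the only remaining step. If I did not spot this structure, my fallback would be to study $\psi'(t)/t$ directly, but that route seems messier.
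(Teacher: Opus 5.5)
Your proof is correct and follows the same route as the paper. You use the same function ($-F$ in the paper's notation), the same three critical points $0$, $L$, $2L$ where the line $2ct$ meets $\psi'$, and the same evaluation showing the function vanishes at $t=0$ and $t=2L$. The only difference is how you get the sign pattern of the derivative: you read it off from the convex--concave shape of the logistic function via the chord property, while the paper bounds the number of zeros of $F'$ by Rolle's theorem and then fixes the signs using $F''(L)>0$ and the limits of $F'$ at $\pm\infty$, so your version is slightly more direct.
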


\begin{lemma}
\label{lem:black_box_realizability_c}
Suppose that \(z_-,z_+\in\mathcal Z\) satisfy
\(\widehat f(z_-)<\widehat f(z_+)\).
There exists \(\eta\in
(0,\widehat f(z_+)-\widehat f(z_-))\) such that every measurable \(h\)
satisfying
\[
\widehat f(z_-)\le h(x)\le\widehat f(z_-)+\eta,
\qquad P_X\text{-almost every }x,
\]
admits a measurable Markov kernel giving the conditional law of \(Z\)
given \(X\) for which
\[
\mathbb E\{\widehat f(Z)\mid X\}=h(X)
\quad P_X\text{-almost surely}
\]
and
\[
\mathbb E\!\left[
\exp\left\{t\bigl(\widehat f(Z)-h(X)\bigr)\right\}
\,\middle|\,X
\right]
\le
\exp(\sigma_A^2t^2/2)
\quad P_X\text{-almost surely},
\qquad t\in\mathbb R.
\]
\end{lemma}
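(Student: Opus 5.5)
The plan is to build the kernel as a two-point mixture supported on $\{z_-,z_+\}$, and to use Lemma~\ref{lem:bernoulli_subgaussian_c} to control the sub-Gaussian constant uniformly in $x$. Write $a=\widehat f(z_-)$ and $\Delta=\widehat f(z_+)-\widehat f(z_-)>0$. Given $h$ with $a\le h\le a+\eta$, set
\[
p(x)=\frac{h(x)-a}{\Delta}\in[0,\eta/\Delta],
\]
which is measurable in $x$. Define the kernel $K(x,\cdot)=\{1-p(x)\}\,\delta_{z_-}+p(x)\,\delta_{z_+}$. It is a Markov kernel because $x\mapsto K(x,A)=\{1-p(x)\}\mathbf 1\{z_-\in A\}+p(x)\mathbf 1\{z_+\in A\}$ is measurable for every measurable $A$. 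Under $K$ we have $\widehat f(Z)=a+\Delta B$ with $B\mid X\sim\mathrm{Bernoulli}(p(X))$, so $\mathbb E\{\widehat f(Z)\mid X\}=a+\Delta p(X)=h(X)$, as required.

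For the sub-Gaussian bound, note that $\widehat f(Z)-h(X)=\Delta\{B-p(X)\}$. It therefore suffices to show that, for every $p\in[0,\eta/\Delta]$ and every $t$,
\[
\mathbb E\exp\{s(B-p)\}\le\exp(\sigma_A^2t^2/2),\qquad s=\Delta t.
\]
When $p=0$ the left side equals $1$ and there is nothing to prove. For $p\in(0,1/2)$, Lemma~\ref{lem:bernoulli_subgaussian_c} bounds the left side by $\exp\{c(p)\Delta^2t^2\}$, where
\[
c(p)=\frac{1-2p}{4\log\{(1-p)/p\}}.
\]
Thus we need $c(p)\Delta^2\le\sigma_A^2/2$ for all $p\in(0,\eta/\Delta]$. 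As $p\downarrow0$, $\log\{(1-p)/p\}\to\infty$ and $1-2p\le1$, so $c(p)\to0$. Moreover $c$ is increasing on $(0,1/2)$: the numerator $1-2p$ decreases, but $\log\{(1-p)/p\}$ decreases faster. Even without monotonicity, the crude bound $c(p)\le 1/\{4\log((1-p)/p)\}$ suffices, since its right side is small uniformly on $(0,p_0]$ once $p_0$ is small. So we choose $p_0\in(0,1/2)$ such that
\[
\frac{\Delta^2}{4\log\{(1-p_0)/p_0\}}\le\frac{\sigma_A^2}{2},
\]
that is, $(1-p_0)/p_0\ge\exp\{\Delta^2/(2\sigma_A^2)\}$. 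Then we set $\eta=p_0\Delta$, which lies in $(0,\Delta)$.

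The uniform-in-$p$ control of the sub-Gaussian constant is the only delicate step. Lemma~\ref{lem:bernoulli_subgaussian_c} does most of the work, and the remaining task is the monotonicity and limit argument above, together with the trivial handling of the endpoint $p=0$. Finally, "$P_X$-almost every $x$" is handled by redefining $p(x)=0$ on the null set where the constraint on $h$ fails. This gives a measurable kernel for which both displayed properties hold $P_X$-almost surely.
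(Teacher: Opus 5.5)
Your proposal is correct and follows essentially the same route as the paper. Both use a two-point kernel on $\{z_-,z_+\}$ with success probability $\{h(x)-\widehat f(z_-)\}/\Delta$, invoke Lemma~\ref{lem:bernoulli_subgaussian_c} with $\eta=p_0\Delta$, and set the probability to zero on the exceptional null set. The only difference is cosmetic: the paper obtains $p_0$ from $\lim_{p\downarrow0}c(p)=0$. Your crude bound $c(p)\le 1/[4\log\{(1-p)/p\}]$ instead gives an explicit $p_0$, which makes your loosely justified monotonicity claim unnecessary.
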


\begin{lemma}
\label{lem:localized_compositional_minimax_c}
Fix \(z_-\in\mathcal Z\) and \(\eta\in(0,\infty)\).
Suppose that \(P_X\) has a Lebesgue density on \([0,1]^{p_X}\) bounded
above and away from zero, and, for \(\sigma_T\in(0,\infty)\), suppose
that \(\epsilon_X\sim N(0,\sigma_T^2)\) is independent of \(X\).
Suppose that
\[
t_i^h\le\min_{0\le\ell\le i}d_\ell^h,
\qquad i=0,\ldots,q_h,
\]
and that \(K_h\) is sufficiently large depending only on
\((q_h,d^h,t^h,\beta^h)\) and \(\widehat f(z_-)\).
Then, for all sufficiently large \(n\),
\begin{equation}
\begin{split}
\inf_{\widehat g}
\sup_{\substack{
h\in\mathcal G(q_h,d^h,t^h,\beta^h,K_h)\\
\widehat f(z_-)\le h(x)\le\widehat f(z_-)+\eta,\ 
x\in[0,1]^{p_X}
}}
\mathbb E\|\widehat g-h\|_2^2
\ge c_h\phi_n^h,
\end{split}
\label{eq:localized_h_minimax_c}
\end{equation}
where the expectation is under \(Y=h(X)+\epsilon_X\).

Suppose that
\[
t_i^\delta\le\min_{0\le\ell\le i}d_\ell^\delta,
\qquad i=0,\ldots,q_\delta,
\]
and that \(K_\delta\) is sufficiently large depending only on
\((q_\delta,d^\delta,t^\delta,\beta^\delta)\).  Then, for all sufficiently
large \(n\),
\begin{equation}
\inf_{\widehat g}
\sup_{\substack{
\delta\in
\mathcal G(q_\delta,d^\delta,t^\delta,\beta^\delta,K_\delta)\\
\|\delta\|_\infty\le1
}}
\mathbb E\|\widehat g-\delta\|_2^2
\ge c_\delta\phi_n^\delta,
\label{eq:localized_delta_minimax_c}
\end{equation}
where the expectation is under \(Y=\delta(X)+\epsilon_X\).
The infima are taken over all Borel-measurable \(L^2(P_X)\)-valued estimators
based on \(n\) observations, and \(c_h,c_\delta\in(0,\infty)\) do not
depend on \(n\).
\end{lemma}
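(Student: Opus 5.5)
The plan is to adapt the minimax lower-bound construction of \citet{Schmidt2020RELU} (their Theorem~3) to the compositional classes here. The only changes are a constant shift for $h$ and the observation that the hypotheses become uniformly small, so the extra sup-norm constraints in \eqref{eq:localized_h_minimax_c} and \eqref{eq:localized_delta_minimax_c} hold automatically for large $n$. I treat \eqref{eq:localized_delta_minimax_c} first and then obtain \eqref{eq:localized_h_minimax_c} by shifting.

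\emph{Step 1: the hypotheses.} Fix an index $i^\ast$ attaining the maximum in \eqref{eq:rate_delta_c}. Write $\beta^\ast=\beta_{i^\ast}^{\delta,*}$ and $t^\ast=t_{i^\ast}^\delta$.
\begin{itemize}[itemsep=1pt,topsep=2pt]
\item Following Schmidt-Hieber, the layers $g_\ell$ with $\ell<i^\ast$ are coordinate projections. The condition $t_i^\delta\le\min_{\ell\le i}d_\ell^\delta$ is exactly what lets $t^\ast$ input coordinates be carried through to layer $i^\ast$.
\item Layer $i^\ast$ carries a sum of $m^{t^\ast}$ disjoint bumps $\sum_j w_j\, \mathsf h^{\beta_{i^\ast}}\psi\{(x-x_j)/\mathsf h\}$ on a grid of width $\mathsf h\asymp n^{-1/(2\beta^\ast+t^\ast)}$, with signs $w\in\{0,1\}^{m^{t^\ast}}$.
\item The later layers are powers $y\mapsto y^{\beta_\ell\wedge1}$ (suitably smoothed and rescaled). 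These convert the bump amplitude into $\mathsf h^{\beta^\ast}$ in the final function.
\end{itemize}
With $K_\delta$ large enough, depending only on $(q_\delta,d^\delta,t^\delta,\beta^\delta)$, every such function lies in $\mathcal G(q_\delta,d^\delta,t^\delta,\beta^\delta,K_\delta)$, and the intermediate ranges stay inside $[a_i,b_i]$ with $|a_i|\vee|b_i|\le K_\delta$. Each hypothesis has sup norm $\lesssim\mathsf h^{\beta^\ast}\to0$, so $\|\delta\|_\infty\le1$ holds for all sufficiently large $n$.

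\emph{Step 2: Varshamov--Gilbert and Fano.} Use the Varshamov--Gilbert bound to extract $\gtrsim 2^{m^{t^\ast}/8}$ sign vectors with pairwise Hamming distance $\gtrsim m^{t^\ast}$. Because the density of $P_X$ is bounded below, the pairwise $L^2(P_X)$ separation is $\gtrsim \mathsf h^{2\beta^\ast}$. Under Gaussian noise independent of $X$, the KL divergence between the laws of $n$ observations is $n\|\delta-\delta'\|_2^2/(2\sigma_T^2)$. By the upper density bound this is $\lesssim n\,\mathsf h^{2\beta^\ast}$, which is a small multiple of $m^{t^\ast}\asymp\mathsf h^{-t^\ast}$ for the chosen $\mathsf h$. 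Fano's lemma (Tsybakov, Theorem~2.5) then gives the lower bound $c_\delta\,\mathsf h^{2\beta^\ast}\asymp c_\delta\phi_n^\delta$.

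\emph{Step 3: the class for $h$.} Set $c=\widehat f(z_-)+\eta/2$ and take $h=c+\delta'$, where $\delta'$ ranges over the hypotheses of Steps~1--2 built with the parameters $(q_h,d^h,t^h,\beta^h)$.
\begin{itemize}[itemsep=1pt,topsep=2pt]
\item Adding $c$ to the outermost function $g_{q_h}$ changes its H\"older norm by at most $|c|$, and shifts $[a_{q_h+1},b_{q_h+1}]$ by $c$. So membership in $\mathcal G(q_h,d^h,t^h,\beta^h,K_h)$ holds once $K_h$ is large depending on $\widehat f(z_-)$; taking $\eta$ below a fixed bound covers the $\eta$-dependence.
\item Since $\|\delta'\|_\infty\to0$, for large $n$ every hypothesis satisfies $\widehat f(z_-)\le h\le \widehat f(z_-)+\eta$.
\item A constant shift changes neither the KL divergences nor the separations.
\end{itemize}
Hence the same Fano argument yields \eqref{eq:localized_h_minimax_c}.

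The main obstacle is Step~1: checking that the Schmidt-Hieber construction embeds in the class $\mathcal G$ as defined here. The points to verify are that each coordinate function depends on exactly $t_i$ arguments (pad with dummy coordinates when needed), that the H\"older constants on the rectangles $[a_i,b_i]^{t_i}$ are controlled, and that the power maps $y^{\beta\wedge1}$ are applied on nonnegative ranges. I would try first to import their construction essentially verbatim, since their class definition matches this one up to the range constants absorbed into $K$.
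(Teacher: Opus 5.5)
Your proposal is correct and follows essentially the same route as the paper, which likewise reproduces the Schmidt-Hieber construction: the slowest layer, coordinate propagation under $t_i\le\min_{\ell\le i}d_\ell$, nonnegative bumps at scale $n^{-1/(2\beta^\ast+t^\ast)}$, outer power maps $x_1^{\beta\wedge1}$, then Varshamov--Gilbert, the exact Gaussian KL identity, and Fano; the $h$-bound comes from adding a constant in the final component. The one difference is that you shift by $\widehat f(z_-)+\eta/2$, which forces your reduction to bounded $\eta$ (valid, since shrinking $\eta$ shrinks the class) to keep $K_h$ free of $\eta$. The paper shifts by $\widehat f(z_-)$ itself, and since the bumps are nonnegative this needs no such reduction.
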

\noindent Proofs for three lemmas above are given in Section~\ref{sec:proof_auxiliary_targetonly}. 
Now we provide the proof for the main result.

\begin{proof}[Proof of Theorem~\mainref{thm:minimax_c}]
Let \(\eta\) be as in
Lemma~\ref{lem:black_box_realizability_c}.  Apply
Lemma~\ref{lem:localized_compositional_minimax_c} with this \(\eta\).
Set
\[
\delta=0.
\]
The zero function belongs to
\(\mathcal G(q_\delta,d^\delta,t^\delta,\beta^\delta,K_\delta)\).
Restrict \(h\) to the class in \eqref{eq:localized_h_minimax_c}.
Generate the auxiliary sample using the Markov kernel in
Lemma~\ref{lem:black_box_realizability_c}.  Independently generate the
target sample from
\[
Y=h(X)+\epsilon_X,
\qquad
\epsilon_X\sim N(0,\sigma_T^2),
\]
with \(\epsilon_X\) independent of \(X\).  Couple \(Y\) and \(Z\)
conditionally independently given \(X\).  Then
\[
\mathbb E\{Y-\widehat f(Z)\mid X\}=0.
\]
These laws satisfy
Assumptions~\mainref{assm:data_c} and~\mainref{assm:structure_c} with the fixed
\(\sigma_A\) and \(\sigma_T\).  Since \(\widehat f\) and \(P_X\) are
fixed, every estimator in the theorem is an estimator covered by
Lemma~\ref{lem:localized_compositional_minimax_c}.  Therefore
\eqref{eq:localized_h_minimax_c}, with \(n=n_T\), gives
\[
\inf_{\widehat g=\widehat g(\mathcal D_T;\widehat f,P_X)}
\sup
\mathbb E R(\widehat g,g)
\ge c_h\phi_{n_T}^h.
\]
This proves the first assertion.

Next, set
\[
h(x)=\widehat f(z_-),
\]
and use the conditional law \(Q_h(\cdot\mid x)=\delta_{z_-}\).
Choose every intermediate component map in the
composition defining \(h\) to be zero and the final component map to be
the constant \(\widehat f(z_-)\).  For sufficiently large \(K_h\), these
maps give
\[
h\in\mathcal G(q_h,d^h,t^h,\beta^h,K_h).
\]
Restrict \(\delta\) to the class in
\eqref{eq:localized_delta_minimax_c}, set
\[
Y=h(X)+\delta(X)+\epsilon_X,
\]
and let \(Y\) and \(Z\) be conditionally independent given \(X\).  Then
\[
\mathbb E\{Y-\widehat f(Z)\mid X\}=\delta(X).
\]
Independent auxiliary and target samples from these marginals satisfy
Assumptions~\mainref{assm:data_c} and~\mainref{assm:structure_c}; the auxiliary
noise is zero and the target noise is \(N(0,\sigma_T^2)\).
Let \(\epsilon_1,\ldots,\epsilon_{n_T}\) be independent
\(N(0,\sigma_T^2)\) variables independent of
\(X_1,\ldots,X_{n_T}\), and set
\[
Y_i'=\delta(X_i)+\epsilon_i,
\qquad i=1,\ldots,n_T.
\]
For any measurable estimator
\(\widehat g=\widehat g(\mathcal D_T;\widehat f,P_X)\), define an
estimator in \eqref{eq:localized_delta_minimax_c} by
\[
\widehat\delta\bigl(\{(X_i,Y_i')\}_{i=1}^{n_T}\bigr)
=
\widehat g\bigl(
\{(X_i,Y_i'+h(X_i))\}_{i=1}^{n_T};
\widehat f,P_X
\bigr)-h.
\]
Under each \(\delta\), its squared
\(L_2(P_X)\)-loss equals
\[
\left\|
\widehat g\bigl(
\{(X_i,Y_i'+h(X_i))\}_{i=1}^{n_T};
\widehat f,P_X
\bigr)-(h+\delta)
\right\|_2^2.
\]
Consequently,
\[
\inf_{\widehat g=\widehat g(\mathcal D_T;\widehat f,P_X)}
\sup
\mathbb E R(\widehat g,g)
\ge c_\delta\phi_{n_T}^\delta.
\]
Combining the two lower bounds gives
\[
\begin{aligned}
\inf_{\widehat g=\widehat g(\mathcal D_T;\widehat f,P_X)}
\sup
\mathbb E R(\widehat g,g)
&\ge
\max\{c_h\phi_{n_T}^h,c_\delta\phi_{n_T}^\delta\}\\
&\ge
(c_h\wedge c_\delta)\overline\phi_{n_T}.
\end{aligned}
\]
\end{proof}

\section{Details for Section~\mainref{sec:imputation_c}}
\label{sec:imputation}

This section supports Section~\mainref{sec:imputation_c} of the main
paper. We formalize the imputation-based estimator, establish its risk
bound (Theorem~\ref{thm:imp-risk}), and compare the resulting rate with
the non-transfer and two-step estimators in
Section~\ref{sec:imp_comparison}.

Assume \(\mathbb E\|Z\|<\infty\), and fix the version
\[
m(x)=\mathbb E(Z\mid X=x)
\]
of the conditional mean.  Let \(\widehat m\) be a measurable estimator
constructed from \(\mathcal D_A\).  Suppose that the fixed function
\(\widehat f\) is measurable on a set containing \(Z\), \(m(X)\), and
\(\widehat m(X)\) almost surely.  Define
\[
a_I(x)=\widehat f\{m(x)\},
\qquad
\widehat a_I(x)=\widehat f\{\widehat m(x)\}.
\]
The population residual induced by imputation is
\begin{equation}
\gamma_I(x)=g(x)-a_I(x).
\label{eq:imp-residual}
\end{equation}
Since \(g=h+\delta\),
\begin{equation}
\gamma_I(x)
=
\delta(x)+h(x)-a_I(x).
\label{eq:imp-residual-decomposition}
\end{equation}
Thus an approximation condition on \(\delta\) alone does not imply an
approximation condition on \(\gamma_I\).

For $n_{\mathrm{tr}}=|\mathcal D_{\mathrm{tr}}|$, and 
$n_{\mathrm{val}}=|\mathcal D_{\mathrm{val}}|$,
let \(\mathcal D_{\rm tr}\) and \(\mathcal D_{\rm val}\) be independent target
samples that are also independent of \(\mathcal D_A\).  Let \(\mathcal F_I\)
be a deterministic class of measurable functions on \([0,1]^{p_X}\), possibly
depending on \(n_{\rm tr}\).  Suppose that, for a fixed \(B<\infty\),
\begin{equation}
\|g\|_\infty\vee\|a_I\|_\infty\vee\|\widehat a_I\|_\infty
\vee\sup_{v\in\mathcal F_I}\|v\|_\infty\le B
\quad\text{almost surely}.
\label{eq:imp-envelope}
\end{equation}
Let
\begin{equation}
\widehat\gamma_I
\in
\arg\min_{v\in\mathcal F_I}
\frac1{n_{\mathrm{tr}}}
\sum_{(X_i,Y_i)\in\mathcal D_{\mathrm{tr}}}
\{Y_i-\widehat a_I(X_i)-v(X_i)\}^2.
\label{eq:imp-residual-erm}
\end{equation}
and define
\begin{equation}
\widehat g_{I,0}=\widehat a_I,
\qquad
\widehat g_{I,1}=\widehat a_I+\widehat\gamma_I.
\label{eq:imp-candidates}
\end{equation}
Let
\[
\widehat k_I
\in
\arg\min_{k\in\{0,1\}}
\frac1{n_{\mathrm{val}}}
\sum_{(X_i,Y_i)\in\mathcal D_{\mathrm{val}}}
\{Y_i-\widehat g_{I,k}(X_i)\}^2,
\qquad
\widehat g_I=\widehat g_{I,\widehat k_I},
\]
where the smaller index is selected in case of a tie.  For an independent
test covariate \(X\), define
\begin{equation}
A_I=\mathbb E\bigl[\{\widehat a_I(X)-a_I(X)\}^2\bigr].
\label{eq:imp-first-stage-error}
\end{equation}
The expectation in this definition is over \(\mathcal D_A\) and the test
covariate.

\begin{theorem}
\label{thm:imp-risk}
Suppose Assumption~\mainref{assm:data_c} and the preceding conditions hold.  Then
\begin{equation}
\begin{aligned}
\mathbb E R(\widehat g_I,g)
\le C\Bigl[&A_I+n_{\mathrm{val}}^{-1/2}
+\min\Bigl\{\|\gamma_I\|_2^2,
\inf_{v\in\mathcal F_I}\|v-\gamma_I\|_2^2
+\mathfrak e_{n_{\mathrm{tr}}}(\mathcal F_I)\Bigr\}\Bigr].
\end{aligned}
\label{eq:imp-validation-risk}
\end{equation}
where \(C\) depends only on \(B\) and \(\sigma_T\).
\end{theorem}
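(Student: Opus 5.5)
The argument mirrors the proof of Theorem~\mainref{thm:rates_c}: a validation step reduces the problem to the oracle choice between the two candidates in \eqref{eq:imp-candidates}, and each candidate is then bounded separately.

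\emph{Step 1: validation reduction.} Condition on the sigma-field \(\mathcal T\) generated by \(\mathcal D_A\) and \(\mathcal D_{\rm tr}\). Both candidates are then fixed, and \(P\ell_k=\mathbb E(\epsilon_X^2)+R(\widehat g_{I,k},g)\), where \(\ell_k(x,y)=\{y-\widehat g_{I,k}(x)\}^2\). The selection rule gives
\[
R(\widehat g_I,g)\le\min_k R(\widehat g_{I,k},g)+|(P_{\rm val}-P)(\ell_1-\ell_0)|.
\]
We have \(\ell_1-\ell_0=-\widehat\gamma_I\{2Y-\widehat g_{I,0}-\widehat g_{I,1}\}\), and \eqref{eq:imp-envelope} together with Lemma~\ref{lem:conditional_subgaussian_moments} bounds its conditional second moment by a constant depending on \(B,\sigma_T\). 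Jensen's inequality then turns the validation fluctuation into \(Cn_{\rm val}^{-1/2}\).

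\emph{Step 2: bound each candidate.} For the uncorrected candidate,
\[
R(\widehat a_I,g)\le 2\|\widehat a_I-a_I\|_2^2+2\|\gamma_I\|_2^2,
\]
and its expectation is \(2A_I+2\|\gamma_I\|_2^2\). For the corrected candidate, condition on \(\mathcal D_A\). The pseudo-response \(Y-\widehat a_I(X)\) has regression function
\[
\gamma_I+(a_I-\widehat a_I),
\]
whose sup norm is bounded by \(3B\), with noise \(\epsilon_X\) that is conditionally sub-Gaussian. We apply Proposition~\ref{prop:generic_subgaussian_oracle} with \(\eta=n_{\rm tr}^{-1}\) and \(\eta_{\rm opt}=0\), and use
\[
\|v-\gamma_I-a_I+\widehat a_I\|_2^2\le 2\|v-\gamma_I\|_2^2+2\|\widehat a_I-a_I\|_2^2.
\]
This gives
\[
\mathbb E\|\widehat\gamma_I-\gamma_I-(a_I-\widehat a_I)\|_2^2\le C\bigl[\inf_{v\in\mathcal F_I}\|v-\gamma_I\|_2^2+\mathfrak e_{n_{\rm tr}}(\mathcal F_I)+A_I\bigr].
\]
This quantity equals \(R(\widehat g_{I,1},g)\), since \(\widehat g_{I,1}-g=\widehat a_I+\widehat\gamma_I-a_I-\gamma_I\).

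\emph{Step 3: combine.} Since \(\mathbb E\min_k R(\widehat g_{I,k},g)\le\min_k\mathbb E R(\widehat g_{I,k},g)\), we pull \(A_I\) out of the minimum and obtain \eqref{eq:imp-validation-risk}.

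The main obstacle is the measurability and conditioning bookkeeping. The conditional application of Proposition~\ref{prop:generic_subgaussian_oracle} needs \(\mathcal D_{\rm tr}\) to be independent of \(\mathcal D_A\). It also needs the identification \(\mathbb E\|\widehat a_I-a_I\|_2^2=A_I\), which uses an independent test covariate with the same law \(P_X\). Finally, the constant \(C\) must depend only on \(B\) and \(\sigma_T\); this holds because every envelope is controlled by \eqref{eq:imp-envelope}.
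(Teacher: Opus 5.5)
Your proposal is correct and follows essentially the same route as the paper's proof. Both reduce the selected estimator to the oracle choice by conditioning on $\mathcal T=\sigma(\mathcal D_A,\mathcal D_{\rm tr})$ and applying Jensen to the validation fluctuation, bound the uncorrected candidate by $2A_I+2\|\gamma_I\|_2^2$, and apply Proposition~\ref{prop:generic_subgaussian_oracle} conditionally on $\mathcal D_A$ with $f_0=g-\widehat a_I=\gamma_I+(a_I-\widehat a_I)$. The differences are cosmetic: you order the steps differently, and you factor $\ell_1-\ell_0$ instead of bounding $|\ell_1-\ell_0|\le B(2|Y|+3B)$ directly.
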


\begin{proof}
Condition on \(\mathcal D_A\).  Then \(\widehat a_I\) is fixed and independent
of the target training observations, and
\[
Y-\widehat a_I(X)
=
\{g(X)-\widehat a_I(X)\}+\epsilon_X.
\]
Independence gives, for every \(t\in\mathbb R\),
\[
\mathbb E\{\exp(t\epsilon_X)\mid X,\mathcal D_A\}
=\mathbb E\{\exp(t\epsilon_X)\mid X\}
\le
\exp(\sigma_T^2t^2/2).
\]
Applying Proposition~\ref{prop:generic_subgaussian_oracle} conditionally with
\(\mathcal H=\mathcal F_I\), \(f_0=g-\widehat a_I\),
\(\eta=n_{\mathrm{tr}}^{-1}\), and \(\eta_{\rm opt}=0\) yields
\begin{equation}
\begin{aligned}
\mathbb E\{R(\widehat g_{I,1},g)\mid\mathcal D_A\}
\le C\Bigl[
\inf_{v\in\mathcal F_I}
\|v-\{g-\widehat a_I\}\|_2^2
+\mathfrak e_{n_{\mathrm{tr}}}(\mathcal F_I)\Bigr].
\end{aligned}
\label{eq:imp-cond-oracle}
\end{equation}
By \eqref{eq:imp-residual}, one has
\[
g-\widehat a_I
=
\gamma_I+(a_I-\widehat a_I),
\]
and hence, for every \(v\in\mathcal F_I\),
\[
\|v-\{g-\widehat a_I\}\|_2^2
\le
2\|v-\gamma_I\|_2^2
+2\|\widehat a_I-a_I\|_2^2.
\]
Taking the infimum and expectation gives
\begin{equation}
\mathbb E R(\widehat g_{I,1},g)
\le
C\left\{
A_I+
\inf_{v\in\mathcal F_I}\|v-\gamma_I\|_2^2
+\mathfrak e_{n_{\mathrm{tr}}}(\mathcal F_I)
\right\}.
\label{eq:imp-corrected-risk}
\end{equation}

For the uncorrected candidate,
\[
\mathbb E R(\widehat g_{I,0},g)
\le
2A_I+2\|\gamma_I\|_2^2.
\]
Let \(\mathcal T=\sigma(\mathcal D_A,\mathcal D_{\rm tr})\), let \(P\) denote
expectation under the target law, and let \(P_{\rm val}\) denote the empirical
measure of \(\mathcal D_{\rm val}\).  Conditional on \(\mathcal T\), define
\(\ell_k(x,y)=\{y-\widehat g_{I,k}(x)\}^2\).  Conditional centering of
\(\epsilon_X\) gives
\[
P\ell_k=\mathbb E(\epsilon_X^2)+R(\widehat g_{I,k},g),
\qquad k\in\{0,1\}.
\]
If \(k^*\) minimizes \(P\ell_k\), empirical optimality implies
\begin{equation}
R(\widehat g_I,g)-\min_{k\in\{0,1\}}R(\widehat g_{I,k},g)
\le
\left|(P_{\rm val}-P)(\ell_1-\ell_0)\right|.
\label{eq:imp-validation-excess}
\end{equation}
By \eqref{eq:imp-envelope},
\[
|\ell_1(X,Y)-\ell_0(X,Y)|
\le B(2|Y|+3B).
\]
Assumption~\mainref{assm:data_c} gives
\(\mathbb E(Y^2)\le B^2+\sigma_T^2\).  Therefore
\[
\operatorname{Var}(\ell_1-\ell_0\mid\mathcal T)
\le26B^4+8B^2\sigma_T^2.
\]
Conditional Jensen's inequality applied to \eqref{eq:imp-validation-excess}
yields
\[
\mathbb E\left[
R(\widehat g_I,g)-\min_kR(\widehat g_{I,k},g)
\,\middle|\,\mathcal T
\right]
\le
\{26B^4+8B^2\sigma_T^2\}^{1/2}n_{\rm val}^{-1/2}.
\]
Taking expectations and using the preceding bounds for the two candidates
proves \eqref{eq:imp-validation-risk}.
\end{proof}

Assume that, on a fixed set containing \(Z\), \(m(X)\), and
\(\widehat m(X)\) almost surely,
\begin{equation}
|\widehat f(z)-\widehat f(z')|
\le L_b\|z-z'\|^\alpha,
\qquad
0<\alpha\le1,
\label{eq:imp-holder-blackbox}
\end{equation}
where \(L_b<\infty\) is independent of the sample sizes.

\begin{lemma}
\label{lem:imp-error-propagation}
Under \eqref{eq:imp-holder-blackbox},
\begin{align}
A_I
&\le L_b^2\mathbb E\|\widehat m(X)-m(X)\|^{2\alpha}
\le L_b^2\{\mathbb E\|\widehat m(X)-m(X)\|^2\}^{\alpha},
\label{eq:imp-error-propagation}\\
\|a_I-h\|_2^2
&\le L_b^2\mathbb E\|Z-m(X)\|^{2\alpha},
\label{eq:imp-jensen-gap}\\
\|\gamma_I\|_2^2
&\le2\|\delta\|_2^2
+2L_b^2\mathbb E\|Z-m(X)\|^{2\alpha}.
\label{eq:imp-residual-magnitude}
\end{align}
\end{lemma}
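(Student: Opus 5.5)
The three bounds follow directly from the Hölder condition \eqref{eq:imp-holder-blackbox}, Jensen's inequality, and the decomposition \eqref{eq:imp-residual-decomposition}. I will prove them in order.

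\emph{Propagation bound \eqref{eq:imp-error-propagation}.} By definition \eqref{eq:imp-first-stage-error}, $A_I=\mathbb E[\{\widehat f(\widehat m(X))-\widehat f(m(X))\}^2]$. Applying \eqref{eq:imp-holder-blackbox} pointwise, which is legitimate because $m(X)$ and $\widehat m(X)$ lie in the fixed set almost surely, gives $\{\widehat a_I(X)-a_I(X)\}^2\le L_b^2\|\widehat m(X)-m(X)\|^{2\alpha}$. Taking expectations over $\mathcal D_A$ and the independent test covariate yields the first inequality. For the second, since $\alpha\le1$ the map $u\mapsto u^\alpha$ is concave on $[0,\infty)$. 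Jensen's inequality applied to $u=\|\widehat m(X)-m(X)\|^2$ then gives $\mathbb E u^\alpha\le(\mathbb E u)^\alpha$. If $\mathbb E u=\infty$ the bound is trivial.

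\emph{Jensen gap \eqref{eq:imp-jensen-gap}.} Write $h(X)-a_I(X)=\mathbb E\{\widehat f(Z)-\widehat f(m(X))\mid X\}$. This uses $h(X)=\mathbb E\{\widehat f(Z)\mid X\}$ and the fact that $\widehat f(m(X))$ is $X$-measurable; integrability follows from boundedness of $\widehat f$ on the relevant set, or alternatively from the Hölder bound together with $\mathbb E\|Z\|<\infty$. Conditional Jensen's inequality for the square gives $\{h(X)-a_I(X)\}^2\le\mathbb E[\{\widehat f(Z)-\widehat f(m(X))\}^2\mid X]\le L_b^2\,\mathbb E[\|Z-m(X)\|^{2\alpha}\mid X]$. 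Integrating over $P_X$ gives the claim.

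\emph{Residual magnitude \eqref{eq:imp-residual-magnitude}.} From \eqref{eq:imp-residual-decomposition}, $\gamma_I=\delta+(h-a_I)$. The elementary inequality $(a+b)^2\le2a^2+2b^2$ then gives $\|\gamma_I\|_2^2\le2\|\delta\|_2^2+2\|h-a_I\|_2^2$. Substituting \eqref{eq:imp-jensen-gap} completes the proof.

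The only point needing care is in the Jensen-gap step: one must check that the conditional expectation is well defined. That is, we need $\widehat f(Z)$ and $\widehat f(m(X))$ to be integrable, and the Hölder condition must apply at both $Z$ and $m(X)$, which holds because both lie in the fixed set. Beyond this, the argument is routine.
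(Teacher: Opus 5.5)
Your proposal is correct and follows the paper's proof essentially step for step. You use the pointwise H\"older bound followed by Jensen for the concave map $u\mapsto u^\alpha$, then conditional Jensen for the gap $h-a_I$, and finally $(a+b)^2\le 2a^2+2b^2$ applied to $\gamma_I=\delta+(h-a_I)$. Your extra integrability remarks are a harmless addition that the paper leaves implicit.
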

\noindent The proof is given in Section~\ref{sec:proof_auxiliary_imputation}.

For an explicit first-stage rate, suppose that
\begin{equation}
\mathbb E\|\widehat m(X)-m(X)\|^2
\le
C_m n_A^{-\rho_m}\log^c n_A,
\label{eq:imp-m-rate}
\end{equation}
where \(C_m<\infty\), \(\rho_m>0\), and \(c\in\mathbb R\) are fixed.
Then
\begin{equation}
A_I
\le
C n_A^{-\alpha\rho_m}\log^{\alpha c}n_A.
\label{eq:imp-propagated-first-stage}
\end{equation}
The condition \eqref{eq:imp-m-rate} controls estimation of \(m\); it does not
by itself impose structural regularity on the population functions \(a_I\) or
\(a_I-h\).  For the remainder of this section, suppose
Assumption~\mainref{assm:structure_c} holds.  All compositional parameters and
network envelopes below are fixed.
Let
\[
\Sigma(x)=\mathbb E[\{Z-m(x)\}\{Z-m(x)\}^\top\mid X=x].
\]
Assume that \(p_Z\) is fixed and that there is a fixed compact convex set
\(\mathcal Z_0\subset\mathbb R^{p_Z}\) such that
\(Z\in\mathcal Z_0\) and \(\widehat m(X)\in\mathcal Z_0\) almost surely.
Then \(m(X)\in\mathcal Z_0\) almost surely.  Suppose that \(\widehat f\) is
twice continuously differentiable on an open neighborhood of
\(\mathcal Z_0\) and, for some \(0<\nu<1\),
\[
\|\nabla^2\widehat f(z)-\nabla^2\widehat f(z')\|_{\mathrm{op}}
\le
L_2\|z-z'\|^\nu,
\qquad z,z'\in\mathcal Z_0.
\]
Define
\begin{equation}
\begin{aligned}
r_I(x)
=
\mathbb E\Bigl[
&\widehat f(Z)-\widehat f\{m(x)\}
-\nabla\widehat f\{m(x)\}^\top\{Z-m(x)\}\\
&-\tfrac12\{Z-m(x)\}^\top\nabla^2\widehat f\{m(x)\}
\{Z-m(x)\}
\mid X=x
\Bigr].
\end{aligned}
\label{eq:imp-second-order-remainder}
\end{equation}

\begin{lemma}
\label{lem:imp-second-order-gap}
Under the preceding conditions,
for \(P_X\)-almost every \(x\),
\begin{equation}
\begin{aligned}
h(x)-a_I(x)
&=
\tfrac12\operatorname{tr}
\{\nabla^2\widehat f(m(x))\Sigma(x)\}+r_I(x),\\
|r_I(x)|
&\le
\frac{L_2}{(1+\nu)(2+\nu)}
\mathbb E\{\|Z-m(x)\|^{2+\nu}\mid X=x\}.
\end{aligned}
\label{eq:imp-second-order-gap}
\end{equation}
\end{lemma}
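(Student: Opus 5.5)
Fix \(x\) with \(m(x)\in\mathcal Z_0\), write \(z_0=m(x)\), and apply Taylor's theorem with integral remainder to \(\widehat f\) along the segment from \(z_0\) to \(Z\). By convexity of \(\mathcal Z_0\), this segment lies in \(\mathcal Z_0\), where \(\widehat f\) is \(C^2\). For \(z\in\mathcal Z_0\) and \(u=z-z_0\), the second-order expansion with integral remainder gives
\[
\widehat f(z)=\widehat f(z_0)+\nabla\widehat f(z_0)^\top u+\tfrac12u^\top\nabla^2\widehat f(z_0)u+\rho(z),
\]
where
\[
\rho(z)=\int_0^1(1-s)\,u^\top\{\nabla^2\widehat f(z_0+su)-\nabla^2\widehat f(z_0)\}u\,ds .
\]
The Hölder condition on the Hessian gives \(|\rho(z)|\le L_2\|u\|^{2+\nu}\int_0^1(1-s)s^\nu ds\). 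Since \(\int_0^1(1-s)s^\nu ds=1/\{(1+\nu)(2+\nu)\}\), this reproduces exactly the stated constant.

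Next take conditional expectations given \(X=x\). By definition, \(h(x)=\mathbb E\{\widehat f(Z)\mid X=x\}\) and \(\widehat f\{m(x)\}=a_I(x)\). The linear term vanishes because \(\mathbb E\{Z-m(x)\mid X=x\}=0\). The quadratic term becomes \(\tfrac12\operatorname{tr}\{\nabla^2\widehat f(m(x))\Sigma(x)\}\) by the identity \(u^\top Au=\operatorname{tr}(Auu^\top)\) and linearity of conditional expectation. The remainder \(\rho\) is, by construction, exactly the integrand of \(r_I(x)\) in \eqref{eq:imp-second-order-remainder}. Rearranging gives the identity, and taking conditional expectation of the pointwise remainder bound gives the inequality, with \(\mathbb E\{\|Z-m(x)\|^{2+\nu}\mid X=x\}\) on the right.

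The only delicate point is measure-theoretic: all conditional expectations must be finite and the identity must hold for \(P_X\)-almost every \(x\). I would handle this as follows. Compactness of \(\mathcal Z_0\) and continuity of \(\widehat f\), \(\nabla\widehat f\), and \(\nabla^2\widehat f\) on a neighborhood of it make every term bounded, so every conditional expectation exists. I would use a regular conditional distribution of \(Z\) given \(X\), and restrict to the \(P_X\)-full set of \(x\) on which this conditional law is supported in \(\mathcal Z_0\) and its mean equals the fixed version \(m(x)\). On that set, the argument above is applied pointwise in \(x\).
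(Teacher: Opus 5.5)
Your proposal is correct and matches the paper's proof essentially step for step. Both arguments use a second-order Taylor expansion with integral remainder along the segment in the convex set \(\mathcal Z_0\), bound the remainder with the H\"older condition on the Hessian and \(\int_0^1(1-t)t^\nu\,dt=1/\{(1+\nu)(2+\nu)\}\), and then condition on \(X=x\), so the linear term vanishes and the quadratic term becomes \(\tfrac12\operatorname{tr}\{\nabla^2\widehat f(m(x))\Sigma(x)\}\); your use of a regular conditional distribution and of boundedness from compactness of \(\mathcal Z_0\) spells out the null-set step that the paper states in one line.
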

\noindent The proof is given in Section~\ref{sec:proof_auxiliary_imputation}. If \(\widehat f\) is affine, then \(h-a_I=0\).  If \(\widehat f\) is a
quadratic polynomial, then \(r_I=0\).

Suppose, in addition, that there are fixed parameters
\(q_J,d^J,t^J,\beta^J,K_J\) such that
\[
a_I-h\in\mathcal G(q_J,d^J,t^J,\beta^J,K_J),
\qquad d_0^J=p_X,
\quad d_{q_J+1}^J=1.
\]
Set
\[
\beta_i^{J,*}=\beta_i^J
\prod_{\ell=i+1}^{q_J}(\beta_\ell^J\wedge1),
\qquad
\rho_{a_I-h}
=
\min_{0\le i\le q_J}
\frac{2\beta_i^{J,*}}{2\beta_i^{J,*}+t_i^J}.
\]
Let \(\mathcal F_{a_I-h,\mathrm{tr}}\) be the corresponding sparse-ReLU
class with the common fixed parameter bound \(M\), an envelope at least
\(K_J\vee1\), and the depth, width, and sparsity conditions of
Assumption~\mainref{assm:networks_c}, with
\((q_J,t^J,\beta^J,K_J,n_{\mathrm{tr}})\) in place of the corresponding
component parameters and sample size.  The
conditions of Lemma~\ref{lem:imp-second-order-gap} do not by themselves imply
this class membership because they impose no regularity in \(x\) on
\(\Sigma(x)\) or \(r_I(x)\).  Since \(\nabla\widehat f\) is bounded on
\(\mathcal Z_0\), these conditions imply \eqref{eq:imp-holder-blackbox} with
\(\alpha=1\).

Let \(\mathcal F_{\delta,\mathrm{tr}}\) be the sparse-ReLU class for
\(\delta\) with the common fixed parameter bound \(M\), an envelope at
least \(K_\delta\vee1\), and the depth, width, and sparsity conditions of
Assumption~\mainref{assm:networks_c}, with sample size \(n_{\mathrm{tr}}\).  Set
\begin{equation}
\mathcal F_I
=
\{v_\delta-v_J:
v_\delta\in\mathcal F_{\delta,\mathrm{tr}},\
v_J\in\mathcal F_{a_I-h,\mathrm{tr}}\}.
\label{eq:imp-network-architecture}
\end{equation}
Define
\begin{equation}
\rho_I
=
\min\{\rho_\delta,\rho_{a_I-h}\},
\qquad
\phi_n^I=n^{-\rho_I}.
\label{eq:imp-smoothness-exponent}
\end{equation}

\begin{proposition}
\label{prop:imp-function-class}
Under the preceding conditions, for all sufficiently large \(n_{\mathrm{tr}}\),
\begin{equation}
\inf_{v\in\mathcal F_I}
\|v-\gamma_I\|_\infty^2
+
\mathfrak e_{n_{\mathrm{tr}}}(\mathcal F_I)
\le
C_I\phi_{n_{\mathrm{tr}}}^I\log^3n_{\mathrm{tr}}.
\label{eq:imp-approx-rate}
\end{equation}
The same bound holds with the \(L_2(P_X)\)-norm.
The constant \(C_I\) is independent of \(n_{\rm tr}\).
\end{proposition}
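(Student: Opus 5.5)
The plan is to decompose \(\gamma_I\) exactly as in \eqref{eq:imp-residual-decomposition}, \(\gamma_I=\delta-(a_I-h)\), which matches the difference structure of \(\mathcal F_I\) in \eqref{eq:imp-network-architecture}.

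\emph{Approximation term.} Take any \(v_\delta\in\mathcal F_{\delta,\mathrm{tr}}\) and \(v_J\in\mathcal F_{a_I-h,\mathrm{tr}}\). Then
\[
\|(v_\delta-v_J)-\gamma_I\|_\infty^2\le2\|v_\delta-\delta\|_\infty^2+2\|v_J-(a_I-h)\|_\infty^2 .
\]
Apply Lemma~\ref{lem:compositional_approximation_M} to each class. For \(\delta\) use sample size \(n_{\mathrm{tr}}\) in place of \(n_T\). For \(a_I-h\) use the parameters \((q_J,d^J,t^J,\beta^J,K_J)\); this is legitimate because the lemma only uses the compositional class membership and the architecture conditions of Assumption~\ref{assm:networks_c}, both of which are imposed by hypothesis. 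The two approximation errors are then bounded by \(C\phi^\delta_{n_{\mathrm{tr}}}\) and \(C n_{\mathrm{tr}}^{-\rho_{a_I-h}}\). Each is at most \(C\phi^I_{n_{\mathrm{tr}}}\), since \(\rho_I\) is the minimum of the two exponents.

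\emph{Entropy term.} The class \(-\mathcal F_{a_I-h,\mathrm{tr}}\) has the same sup-norm covering numbers as \(\mathcal F_{a_I-h,\mathrm{tr}}\), because negation is an isometry. Lemma~\ref{lem:minkowski_sum_cover} then gives
\[
\log N_\infty(n_{\mathrm{tr}}^{-1},\mathcal F_I)\le\log N_\infty\bigl((2n_{\mathrm{tr}})^{-1},\mathcal F_{\delta,\mathrm{tr}}\bigr)+\log N_\infty\bigl((2n_{\mathrm{tr}})^{-1},\mathcal F_{a_I-h,\mathrm{tr}}\bigr).
\]
Now apply Lemma~\ref{lem:relu_entropy_M} to each piece, with \(s\asymp n_{\mathrm{tr}}\phi\log n_{\mathrm{tr}}\) and \(L\asymp\log n_{\mathrm{tr}}\). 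Each log-covering number is \(O(s\,L\log n_{\mathrm{tr}})\), because \(\log[(M+1)^{L+2}(s+2)^{2L+2}]\lesssim L\log n_{\mathrm{tr}}\) and the factor \(\eta^{-1}=2n_{\mathrm{tr}}\) contributes only \(\log n_{\mathrm{tr}}\). Dividing by \(n_{\mathrm{tr}}\), the two contributions are \(\phi^\delta_{n_{\mathrm{tr}}}\log^3 n_{\mathrm{tr}}\) and \(n_{\mathrm{tr}}^{-\rho_{a_I-h}}\log^3 n_{\mathrm{tr}}\), up to constants. This is the same computation that yields Lemma~\ref{lem:relu_rate_summary}, so I would cite that lemma for each class and use additivity. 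The \(1/n\) term in \(\mathfrak e\) is negligible.

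\emph{Combining and the norm claim.} Adding the two parts and bounding each rate by \(\phi^I_{n_{\mathrm{tr}}}\) gives \eqref{eq:imp-approx-rate}. The \(L_2(P_X)\) version follows immediately, since \(\|\cdot\|_2\le\|\cdot\|_\infty\) for a probability measure.

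\emph{Main obstacle.} The delicate point is the uniform envelope \eqref{eq:imp-envelope}. Elements of \(\mathcal F_I\) are bounded by \(F_\delta+F_J\), so the \(B\) in that condition must be taken at least this large. This only affects constants, but it must be checked that \(B\) stays fixed in \(n_{\mathrm{tr}}\).

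A second check concerns the constants of Lemmas~\ref{lem:compositional_approximation_M} and \ref{lem:relu_rate_summary}. They must depend only on the fixed class parameters and \(M\), so that they transfer verbatim when \(n_{\mathrm{tr}}\) replaces \(n_T\) and the \(J\)-parameters replace the \(h\)-parameters.
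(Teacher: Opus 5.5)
Your proposal is correct and follows the paper's proof essentially step for step. The shared steps are the split $\|(v_\delta-v_J)-\gamma_I\|_\infty^2\le 2\|v_\delta-\delta\|_\infty^2+2\|v_J-(a_I-h)\|_\infty^2$, the product covering bound of Lemma~\ref{lem:minkowski_sum_cover}, the approximation and entropy bounds of Lemmas~\ref{lem:relu_entropy_M} and~\ref{lem:relu_rate_summary} rerun at sample size $n_{\mathrm{tr}}$, domination of both rates by $\phi^I_{n_{\mathrm{tr}}}$, and $\|\cdot\|_2\le\|\cdot\|_\infty$. Your envelope check is harmless but not needed for this proposition, since the fixed bound $B$ is a standing hypothesis that is only used in Theorem~\ref{thm:imp-risk}.
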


\begin{proof}
For component approximants \(v_\delta\) and \(v_J\),
\eqref{eq:imp-residual-decomposition} gives
\[
\|(v_\delta-v_J)-\gamma_I\|_\infty^2
\le
2\|v_\delta-\delta\|_\infty^2
+2\|v_J-(a_I-h)\|_\infty^2.
\]
Moreover,
\[
N_\infty(\eta,\mathcal F_I)
\le
N_\infty(\eta/2,\mathcal F_{\delta,\mathrm{tr}})
N_\infty(\eta/2,\mathcal F_{a_I-h,\mathrm{tr}}).
\]
The argument of Lemma~\ref{lem:relu_rate_summary}, applied to the two
component classes at sample size \(n_{\mathrm{tr}}\), together with
Lemma~\ref{lem:relu_entropy_M}, gives
\[
\inf_{v\in\mathcal F_I}\|v-\gamma_I\|_\infty^2
+\mathfrak e_{n_{\mathrm{tr}}}(\mathcal F_I)
\lesssim
\{n_{\mathrm{tr}}^{-\rho_\delta}
+n_{\mathrm{tr}}^{-\rho_{a_I-h}}\}
\log^3n_{\mathrm{tr}}.
\]
The displayed rate sum is bounded by a fixed multiple of
\(n_{\mathrm{tr}}^{-\rho_I}\), proving
\eqref{eq:imp-approx-rate}.  The \(L_2(P_X)\) conclusion follows from
\(\|u\|_2\le\|u\|_\infty\).
\end{proof}

Changing the sign of a network's output layer preserves its parameter modulus,
depth, width, sparsity, and envelope.  Lemma~\ref{lem:exact_parallel_relu_sum}
therefore gives a fixed-modulus ReLU
representation of every element of \(\mathcal F_I\).

\begin{corollary}
\label{cor:imp-rate}
Under Theorem~\ref{thm:imp-risk} and
Proposition~\ref{prop:imp-function-class},
\begin{equation}
\mathbb E R(\widehat g_I,g)
\le
C\left[
A_I+n_{\mathrm{val}}^{-1/2}
+\min\{\|\gamma_I\|_2^2,
\phi_{n_{\mathrm{tr}}}^I\log^3n_{\mathrm{tr}}\}
\right].
\label{eq:imp-rate-general}
\end{equation}
\end{corollary}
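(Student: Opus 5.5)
The corollary follows by substituting Proposition~\ref{prop:imp-function-class} into Theorem~\ref{thm:imp-risk}, so the proof is short.

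I would start from the bound \eqref{eq:imp-validation-risk} of Theorem~\ref{thm:imp-risk}. Its hypotheses are Assumption~\mainref{assm:data_c}, the envelope condition \eqref{eq:imp-envelope} and the independence structure of the samples, all of which are in force. Its right-hand side contains the quantity
\[
\inf_{v\in\mathcal F_I}\|v-\gamma_I\|_2^2+\mathfrak e_{n_{\mathrm{tr}}}(\mathcal F_I),
\]
where $\mathcal F_I$ is the class in \eqref{eq:imp-network-architecture}. The $L_2(P_X)$ version of Proposition~\ref{prop:imp-function-class}, which follows from $\|u\|_2\le\|u\|_\infty$, bounds this quantity by $C_I\phi^I_{n_{\mathrm{tr}}}\log^3 n_{\mathrm{tr}}$ for all sufficiently large $n_{\mathrm{tr}}$.

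Next I would use monotonicity of the minimum. Since $\min\{a,b\}\le\min\{a,b'\}$ whenever $b\le b'$, the minimum in \eqref{eq:imp-validation-risk} is at most $\min\{\|\gamma_I\|_2^2,\,C_I\phi^I_{n_{\mathrm{tr}}}\log^3 n_{\mathrm{tr}}\}$. This in turn is at most $\max(1,C_I)\min\{\|\gamma_I\|_2^2,\,\phi^I_{n_{\mathrm{tr}}}\log^3n_{\mathrm{tr}}\}$. Absorbing $\max(1,C_I)$ into the leading constant $C$ gives \eqref{eq:imp-rate-general}, with the terms $A_I$ and $n_{\mathrm{val}}^{-1/2}$ carried over unchanged.

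The only point needing care is that $\mathcal F_I$ must satisfy the envelope requirement \eqref{eq:imp-envelope} used in Theorem~\ref{thm:imp-risk}. I would handle this by taking $B$ at least the sum of the envelopes of $\mathcal F_{\delta,\mathrm{tr}}$ and $\mathcal F_{a_I-h,\mathrm{tr}}$, which are fixed constants. The resulting constant $C$ then depends only on fixed quantities ($B$, $\sigma_T$, $C_I$) and not on the sample sizes, and the bound holds for all sufficiently large $n_{\mathrm{tr}}$.
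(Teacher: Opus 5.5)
Your proposal is correct and matches the paper. The paper states the corollary without a separate proof, as the immediate result of substituting the $L_2(P_X)$ version of Proposition~\ref{prop:imp-function-class} into the minimum in \eqref{eq:imp-validation-risk} and absorbing $C_I$ into the constant; as you note, the bound inherits the ``sufficiently large $n_{\mathrm{tr}}$'' qualifier from the proposition.
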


Under the preceding conditions and \eqref{eq:imp-m-rate},
\begin{equation}
\begin{aligned}
\mathbb E R(\widehat g_I,g)
\le C\Bigl[
&n_A^{-\rho_m}\log^c n_A
+n_{\mathrm{val}}^{-1/2}+\min\{\|\gamma_I\|_2^2,
\phi_{n_{\mathrm{tr}}}^I\log^3n_{\mathrm{tr}}\}
\Bigr].
\end{aligned}
\label{eq:imp-rate-with-m}
\end{equation}

\subsection{Comparison with non-transfer and two-step learning}
\label{sec:imp_comparison}

Suppose \(n_A,n_T\to\infty\),
\(n_{\rm tr}\asymp n_{\rm val}\asymp n_T\), and use
logarithmic-depth networks.  Since
\(\overline\rho=\min\{\rho_h,\rho_\delta\}\),
Theorems~\mainref{thm:naive_c} and~\mainref{thm:rates_c} and
\eqref{eq:imp-rate-with-m} give
\begin{equation}
\begin{aligned}
\mathbb E R(\widehat g_{\rm nt},g)
&\lesssim n_T^{-\overline\rho}\log^3n_T,\\
\mathbb E R(\widehat g_{\widehat\lambda},g)
&\lesssim n_A^{-\rho_h}\log^3n_A
+\min\{\|\delta\|_2^2,n_T^{-\rho_\delta}\log^3n_T\}
+n_T^{-1/2},\\
\mathbb E R(\widehat g_I,g)
&\lesssim n_A^{-\rho_m}\log^c n_A+n_T^{-1/2}+\min\{\|\gamma_I\|_2^2,
n_T^{-\rho_I}\log^3n_T\}.
\end{aligned}
\label{eq:imp-three-way-comparison}
\end{equation}

The upper-bound expression in the third line of
\eqref{eq:imp-three-way-comparison} is
\(o(n_T^{-\overline\rho}\log^3n_T)\) if and only if
\begin{equation}
\begin{aligned}
n_A^{-\rho_m}\log^c n_A
&=o(n_T^{-\overline\rho}\log^3n_T),\\
\min\{\|\gamma_I\|_2^2,n_T^{-\rho_I}\log^3n_T\}
&=o(n_T^{-\overline\rho}\log^3n_T),\\
n_T^{-1/2}&=o(n_T^{-\overline\rho}\log^3n_T).
\end{aligned}
\label{eq:imp-vs-target-condition}
\end{equation}
The first and third conditions are equivalent to
\[
n_A^{-\rho_m}n_T^{\overline\rho}
\frac{\log^c n_A}{\log^3n_T}
\longrightarrow0,
\qquad
\overline\rho\le\frac12,
\]
respectively.  Moreover,
\[
\frac{n_T^{-\rho_I}\log^3n_T}
{n_T^{-\overline\rho}\log^3n_T}
=
n_T^{\overline\rho-\rho_I}.
\]
Thus the second condition in \eqref{eq:imp-vs-target-condition} holds
regardless of \(\|\gamma_I\|_2^2\) if \(\rho_I>\overline\rho\).  If
\(\rho_I\le\overline\rho\), it holds if and only if
\[
\|\gamma_I\|_2^2=o(n_T^{-\overline\rho}\log^3n_T).
\]
For a polynomial-order comparison, suppose, for some \(\kappa>0\), that
\[
n_A=n_T^\kappa,
\qquad
n_{\mathrm{val}}\asymp n_T,
\]
and that \(\gamma_I\) is fixed and nonzero.  The non-transfer and imputation
exponents are, respectively,
\begin{equation}
\overline\rho,
\qquad
\min\{\kappa\rho_m,\rho_I,1/2\}.
\label{eq:imp-polynomial-exponents}
\end{equation}
Thus the imputation upper-bound expression has a strictly larger decay
exponent than the non-transfer upper-bound expression if and only if
\begin{equation}
\kappa\rho_m>\overline\rho,
\qquad
\rho_I>\overline\rho,
\qquad
\overline\rho<1/2.
\label{eq:imp-polynomial-vs-target}
\end{equation}
By \eqref{eq:imp-smoothness-exponent},
\(\rho_I>\overline\rho\) holds if and only if
\[
\rho_h
<
\min\{\rho_\delta,\rho_{a_I-h}\}.
\]
Under the fixed-residual scaling above, the imputation upper bound is
\(o(n_T^{-\overline\rho}\log^3n_T)\) if and only if this relation holds,
\(\rho_h\le1/2\), and either
\(\kappa\rho_m>\rho_h\) or
\(\kappa\rho_m=\rho_h\) with \(c<3\).
For fixed nonzero \(\gamma_I\), the imputation upper-bound expression has a
strictly larger decay exponent than the non-transfer upper-bound expression
if and only if
\[
\kappa\rho_m>\rho_h,
\qquad
\rho_h<\min\{\rho_\delta,\rho_{a_I-h},1/2\}.
\]

Because both the second and third lines of
\eqref{eq:imp-three-way-comparison} contain \(n_T^{-1/2}\), the two-step
upper-bound expression is of smaller order than the imputation upper-bound
expression if and only if
\begin{equation}
\begin{aligned}
&n_A^{-\rho_h}\log^3n_A
+\min\{\|\delta\|_2^2,n_T^{-\rho_\delta}\log^3n_T\}
+n_T^{-1/2}\\
&\qquad=o\Bigl[
n_A^{-\rho_m}\log^c n_A
+\min\{\|\gamma_I\|_2^2,n_T^{-\rho_I}\log^3n_T\}
\Bigr].
\end{aligned}
\label{eq:imp-vs-plugin-condition}
\end{equation}
Indeed, if the two-step expression is of smaller order than the full
imputation expression, then its \(n_T^{-1/2}\) term is also of smaller order,
so the full imputation expression is asymptotic to the bracketed expression
in \eqref{eq:imp-vs-plugin-condition}.  The converse follows because
\(n_T^{-1/2}\) is bounded above by the left-hand side of
\eqref{eq:imp-vs-plugin-condition}.
If \(\delta\) and \(\gamma_I\) are fixed and nonzero, the corresponding exact
polynomial-order condition is
\begin{equation}
\min\{\kappa\rho_h,\rho_\delta,1/2\}
>
\min\{\kappa\rho_m,\rho_I,1/2\}.
\label{eq:imp-polynomial-vs-plugin}
\end{equation}
If \(\kappa\rho_m\le\rho_I\), then
\eqref{eq:imp-polynomial-vs-plugin} holds if and only if
\[
\kappa\rho_m<
\min\{\kappa\rho_h,\rho_\delta,1/2\}.
\]
If
\(\rho_I\le\kappa\rho_m\), the condition holds if and only if
\[
\rho_I<\min\{\kappa\rho_h,\rho_\delta,1/2\}.
\]
This case requires \(\rho_I<\rho_\delta\), equivalently
\(\rho_{a_I-h}<\rho_\delta\).  The two characterizations coincide when
\(\kappa\rho_m=\rho_I\), and strict polynomial improvement is
impossible when
\(\min\{\kappa\rho_m,\rho_I\}\ge1/2\).

Condition~\eqref{eq:imp-polynomial-vs-plugin} concerns strict polynomial
order.  If the two exponents are equal, the comparison is determined by
\eqref{eq:imp-vs-plugin-condition}, including its logarithmic factors.

\section{Proofs for Section~\mainref{sec:multiple}}
\label{sec:appendix_multiple}

This section proves the ensemble risk bound,
Theorem~\mainref{thm:ensemble_c}.

\begin{proof}[Proof of Theorem~\mainref{thm:ensemble_c}]
Put
\[
B=\max_{S\in\{Z,W\}}(F_{h_S}+F_{\delta_S}),
\qquad
G=\max_{S\in\{Z,W\}}(K_{h_S}+K_{\delta_S}).
\]
Then
\[
\max_{\substack{S\in\{Z,W\}\\\lambda\in\{0,1\}}}
\|\widehat g_{S,\lambda}\|_\infty
\le B,
\qquad
\|g\|_\infty\le G
\]
almost surely.  Let \(\mathcal T\) be the sigma-field generated by all
observations used to construct the four functions
\(\widehat g_{S,\lambda}\).  For
\(\lambda_Z,\lambda_W\in\{0,1\}\) and \(q\in[0,1]\), write
\[
\widehat g_{\lambda_Z,\lambda_W,q}
=
q\widehat g_{Z,\lambda_Z}
+
(1-q)\widehat g_{W,\lambda_W}
\]
and
\[
\ell_{\lambda_Z,\lambda_W,q}(X,Y)
=
\{Y-\widehat g_{\lambda_Z,\lambda_W,q}(X)\}^2.
\]
Conditional on \(\mathcal T\), the candidate functions are fixed and
the validation observations are i.i.d.\ with the target law.  Let \(P\)
denote expectation under this law and let \(P_{\rm val}\) denote the
validation empirical measure.  Since
\(\mathbb E(\epsilon_X\mid X)=0\), one has
\[
P\ell_{\lambda_Z,\lambda_W,q}
=
P(\epsilon_X^2)
+
R(\widehat g_{\lambda_Z,\lambda_W,q},g).
\]

For fixed \((\lambda_Z,\lambda_W)\), let
\[
b=\widehat g_{W,\lambda_W},
\qquad
d=\widehat g_{Z,\lambda_Z}-\widehat g_{W,\lambda_W}.
\]
Then we get
\[
\ell_{\lambda_Z,\lambda_W,q}
=
A_0+qA_1+q^2A_2,
\]
where
\[
A_0=\{Y-b(X)\}^2,
\qquad
A_1=-2\{Y-b(X)\}d(X),
\qquad
A_2=d^2(X).
\]
The conditional sub-Gaussian assumption implies, for \(t>0\),
\[
\mathbb P(|\epsilon_X|>t\mid X)
\le
2\exp\{-t^2/(2\sigma_T^2)\},
\]
and
\[
\mathbb E(\epsilon_X^2\mid X)\le\sigma_T^2,
\qquad
\mathbb E(\epsilon_X^4\mid X)\le16\sigma_T^4
\]
almost surely.  Consequently, conditionally on \(\mathcal T\), we get
\[
PA_0^2
\le
8(G+B)^4+128\sigma_T^4,
\]
\[
PA_1^2
\le
16B^2\{(G+B)^2+\sigma_T^2\},
\qquad
PA_2^2\le16B^4.
\]

For every square-integrable \(U\) that is fixed conditional on
\(\mathcal T\),
\[
\mathbb E\{|(P_{\rm val}-P)U|\mid\mathcal T\}
\le
n_{\rm val}^{-1/2}(PU^2)^{1/2}.
\]
Since \(|q|\le1\), \(q^2\le1\), and there are four pairs
\((\lambda_Z,\lambda_W)\),
\[
\begin{split}
&\mathbb E\left[
\sup_{\substack{\lambda_Z,\lambda_W\in\{0,1\}\\q\in[0,1]}}
\left|
(P_{\rm val}-P)
\ell_{\lambda_Z,\lambda_W,q}
\right|
\,\middle|\,\mathcal T
\right]\\
&\qquad\le
4n_{\rm val}^{-1/2}
\left[
\{8(G+B)^4+128\sigma_T^4\}^{1/2}
+
4B\{(G+B)^2+\sigma_T^2\}^{1/2}
+
4B^2
\right].
\end{split}
\]

The empirical optimality of
\((\widehat\lambda_Z,\widehat\lambda_W,\widehat q)\) gives
\[
\begin{split}
P\ell_{\widehat\lambda_Z,\widehat\lambda_W,\widehat q}
&\le
\inf_{\substack{\lambda_Z,\lambda_W\in\{0,1\}\\q\in[0,1]}}
P\ell_{\lambda_Z,\lambda_W,q}+
2\sup_{\substack{\lambda_Z,\lambda_W\in\{0,1\}\\q\in[0,1]}}
\left|
(P_{\rm val}-P)
\ell_{\lambda_Z,\lambda_W,q}
\right|.
\end{split}
\]
Taking expectations and cancelling the common term
\(P(\epsilon_X^2)\) yields
\begin{equation}
\begin{split}
\mathbb E R(\widehat g_{\rm ens},g)
&\le
\mathbb E
\inf_{\substack{\lambda_Z,\lambda_W\in\{0,1\}\\q\in[0,1]}}
R(\widehat g_{\lambda_Z,\lambda_W,q},g)
+
\widetilde C_{\rm val}n_{\rm val}^{-1/2},
\end{split}
\label{eq:ensemble-validation-c}
\end{equation}
where one may take
\[
\widetilde C_{\rm val}
=
8\left[
\{8(G+B)^4+128\sigma_T^4\}^{1/2}
+
4B\{(G+B)^2+\sigma_T^2\}^{1/2}
+
4B^2
\right].
\]

Let $a=\widehat g_{Z,\lambda_Z^{\rm ora}}$, and $b=\widehat g_{W,\lambda_W^{\rm ora}}$.
For every \(q\in[0,1]\), one has
\[
\begin{split}
R(qa+(1-q)b,g)
&=
\|q(a-g)+(1-q)(b-g)\|_2^2\\
&=
qR_Z+(1-q)R_W-q(1-q)D.
\end{split}
\]
If \(D=0\), then \(a=b\) \(P_X\)-almost surely,
\(R_Z=R_W\), and the displayed risk equals \(R_Z\) for every
\(q\in[0,1]\).
If \(D>0\), the unconstrained minimizer of the displayed quadratic
function is
\[
\frac12+\frac{R_W-R_Z}{2D}.
\]
It belongs to \((0,1)\) if and only if
$D>|R_Z-R_W|$.
In that case, substitution gives
\[
\inf_{q\in[0,1]}R(qa+(1-q)b,g)
=
\min\{R_Z,R_W\}
-
\frac{\{D-|R_Z-R_W|\}^2}{4D}.
\]
If \(D\le |R_Z-R_W|\), an endpoint of \([0,1]\) minimizes the
quadratic function.  Therefore, in all cases,
\[
\inf_{q\in[0,1]}R(qa+(1-q)b,g)
=
\min\{R_Z,R_W\}-I.
\]

This convex aggregate belongs to the candidate class in
\eqref{eq:ensemble-validation-c}.  Hence,
\begin{equation}
\mathbb E R(\widehat g_{\rm ens},g)
\le
\mathbb E\min\{R_Z,R_W\}
-\mathbb E I
+\widetilde C_{\rm val}n_{\rm val}^{-1/2}.
\label{eq:ensemble-improvement-c}
\end{equation}
Since
\[
\mathbb E\min\{R_Z,R_W\}
\le
\min_{S\in\{Z,W\}}\mathbb E R_S
\]
and \(I\ge0\), \eqref{eq:ensemble-improvement-c} proves
part~(ii).

Equivalently, \eqref{eq:ensemble-improvement-c} can be written as
\[
\begin{split}
\mathbb E R(\widehat g_{\rm ens},g)
\le{}&
\min_{S\in\{Z,W\}}\mathbb E R_S-
\left[
\mathbb E I
+
\min_{S\in\{Z,W\}}\mathbb E R_S
-
\mathbb E\min\{R_Z,R_W\}
\right]
+
\widetilde C_{\rm val}n_{\rm val}^{-1/2}.
\end{split}
\]
The quantity in brackets is nonnegative.  Under the condition in
part~(i),
\[
\widetilde C_{\rm val}n_{\rm val}^{-1/2}
=
o\!\left(
\mathbb E I
+
\min_{S\in\{Z,W\}}\mathbb E R_S
-
\mathbb E\min\{R_Z,R_W\}
\right).
\]
The preceding display therefore gives, for all sufficiently large
sample sizes,
\[
\mathbb E R(\widehat g_{\rm ens},g)
<
\min_{S\in\{Z,W\}}\mathbb E R_S,
\]
which proves part~(i).
\end{proof}

\section{Proofs of auxiliary results}
\label{sec:proof_auxiliary_results}

This section collects the proofs of the auxiliary results stated in the
preceding sections, organized by the section in which each result is
stated.

\subsection{Auxiliary results for Section~\ref{sec:appendix_theory}}
\label{sec:proof_auxiliary_regression}

\begin{proof}[Proof of Lemma~\ref{lem:relative_bernstein}]
Since \(Z_i^2\le BZ_i\),
\[
\frac1n\sum_{i=1}^n\operatorname{Var}(Z_i)
\le
B^2\mu.
\]
Bernstein's inequality gives, with probability at least \(1-e^{-u}\),
\[
\frac1n\sum_{i=1}^n(Z_i-\mathbb EZ_i)
\le
\sqrt{\frac{2B\mu u}{n}}+\frac{Bu}{3n}.
\]
The inequality \(2ab\le a^2/4+4b^2\), with
\(a=\sqrt\mu\) and \(b=\sqrt{2Bu/n}\), implies
\[
\sqrt{\frac{2B\mu u}{n}}
\le
\frac18\mu+\frac{4Bu}{n}.
\]
The same bound applied to
\(n^{-1}\sum_i(\mathbb EZ_i-Z_i)\) yields the lower tail.  
This proves \eqref{eq:relative_bernstein}.  The two inequalities in
\eqref{eq:empirical_population_norm_comparison} follow by elementary rearrangement.
\end{proof}

\begin{proof}[Proof of Lemma~\ref{lem:self_normalized_multiplier}]
Condition on \(X_1,\ldots,X_n\).  For the upper tail, set
\(\lambda=a/(2\sigma^2)\). 
By an elementary sub-Gaussian inequality \citep{wainwright2019high}, we get
\[
\begin{split}
&\mathbb E\left[
\exp\left\{
\lambda\left(
2\sum_{i=1}^n\zeta_i v(X_i)
-a\sum_{i=1}^n v^2(X_i)
\right)
\right\}
\,\middle|\,X_1,\ldots,X_n
\right]\\
&\qquad\le
\exp\left\{
2\lambda^2\sigma^2\sum_{i=1}^n v^2(X_i)
-
\lambda a\sum_{i=1}^n v^2(X_i)
\right\}
=1.
\end{split}
\]
Markov's inequality therefore implies
\[
\mathbb P\left(
2\sum_{i=1}^n\zeta_i v(X_i)
-a\sum_{i=1}^n v^2(X_i)
>
\frac{2\sigma^2u}{a}
\,\middle|\,X_1,\ldots,X_n
\right)
\le e^{-u}.
\]
Replacing \(\zeta_i\) by \(-\zeta_i\) gives the lower tail, and division by \(n\)
completes the proof.
\end{proof}

\begin{proof}[Proof of Proposition~\ref{prop:generic_subgaussian_oracle}]
Put \(B=F+F_0\).  Fix an \(\eta\)-net
\(\{f_1,\ldots,f_N\}\) for \(\mathcal H\) under \(\|\cdot\|_\infty\), where
\(N=N_\infty(\eta,\mathcal H)\vee3\).  For each \(f\in\mathcal H\), choose a
net point \(\pi(f)\) with \(\|f-\pi(f)\|_\infty\le\eta\).  Since
\[
|(Y-f)^2-(Y-\pi(f))^2|
\le
\eta\{2|Y|+2F\}
\le
2\eta\{F+F_0+|\varepsilon|\},
\]
we have
\begin{equation}
P_n\ell_{\pi(\widehat f)}
\le
\min_{1\le m\le N}P_n\ell_{f_m}
+
\eta_{\rm opt}
+
D_n,
\label{eq:net_approximate_erm}
\end{equation}
where \(\ell_f(x,y)=\{y-f(x)\}^2\) and
\begin{equation}
D_n=2\eta\left(F+F_0+\frac1n\sum_{i=1}^n|\varepsilon_i|\right).
\label{eq:net_loss_oscillation}
\end{equation}
Let \(\widetilde f=\pi(\widehat f)\).  For a net element \(f_m\), put
\(g_m=f_m-f_0\).  Then \(\|g_m\|_\infty\le B\), and
\begin{equation}
P_n\ell_{f_m}-P_n\ell_{f_0}
=
P_ng_m^2-2P_n(\varepsilon g_m).
\label{eq:squared_loss_excess_decomposition}
\end{equation}

Let \(u=\log(8N)+x\), where \(x>0\).  Applying
Lemma~\ref{lem:relative_bernstein} to \(g_m^2(X_i)\), and then taking a union bound
over \(m\), gives an event of probability at least \(1-2e^{-x}/4\) on which
\begin{equation}
P g_m^2\le2P_ng_m^2+\frac{CB^2u}{n},
\qquad
P_ng_m^2\le2P g_m^2+\frac{CB^2u}{n}
\label{eq:uniform_norm_comparison}
\end{equation}
for every \(m\).  Lemma~\ref{lem:self_normalized_multiplier}, with \(a=1/4\), and
another union bound show that, with probability at least \(1-2e^{-x}/4\),
\begin{equation}
2|P_n(\varepsilon g_m)|
\le
\frac14P_ng_m^2+\frac{8\sigma^2u}{n}
\label{eq:uniform_multiplier_bound}
\end{equation}
for every \(m\).  Adjust constants so that the intersection of these events has
probability at least \(1-e^{-x}\).

Fix any net element \(f_m\).  Combining
\eqref{eq:net_approximate_erm}--\eqref{eq:uniform_multiplier_bound} gives
\[
\frac34P_n(\widetilde f-f_0)^2
\le
\frac54P_ng_m^2
+
\frac{16\sigma^2u}{n}
+
\eta_{\rm opt}+D_n.
\]
Using \eqref{eq:uniform_norm_comparison} first for \(\widetilde f-f_0\) and then for
\(g_m\), we obtain
\begin{equation}
P(\widetilde f-f_0)^2
\le
C\left[
Pg_m^2
+
(B^2+\sigma^2)\frac{u}{n}
+
\eta_{\rm opt}+D_n
\right]
\label{eq:finite_net_oracle_event}
\end{equation}
for every \(m\).  Let \(f^\star\in\mathcal H\) be arbitrary and take
\(f_m=\pi(f^\star)\).  Since
\begin{equation}
|P(f_m-f_0)^2-P(f^\star-f_0)^2|
\le
2B\eta,
\label{eq:net_population_oscillation}
\end{equation}
\eqref{eq:finite_net_oracle_event} implies
\begin{equation}
P(\widetilde f-f_0)^2
\le
C\left[
P(f^\star-f_0)^2
+
(B^2+\sigma^2)\frac{\log(8N)+x}{n}
+
B\eta
+
\eta_{\rm opt}+D_n
\right]
\label{eq:oracle_high_probability}
\end{equation}
with probability at least \(1-e^{-x}\).

The left side is bounded by \(B^2\).  Integrating the tail in \(x\) therefore replaces
\(x\) by a universal constant in expectation.  By
Lemma~\ref{lem:conditional_subgaussian_moments}, we get
\[
\mathbb E D_n
\le
2\eta(F+F_0+\sigma).
\]
Finally, we obtain
\[
|P(\widehat f-f_0)^2-P(\widetilde f-f_0)^2|
\le2B\eta.
\]
Taking the infimum over \(f^\star\in\mathcal H\) proves
\eqref{eq:generic_subgaussian_oracle}.
\end{proof}

\begin{proof}[Proof of Lemma~\ref{lem:relu_entropy_M}]
For a parameter vector
\[
  \theta=(B_0,\ldots,B_L,v_1,\ldots,v_L),
\]
write the network in the recursive form
\[
  z_0^\theta(x)=x,\qquad
  z_\ell^\theta(x)
  =\sigma\!\left(B_{\ell-1}z_{\ell-1}^\theta(x)-v_\ell\right),
  \quad \ell=1,\ldots,L,
  \qquad
  f_\theta(x)=B_Lz_L^\theta(x),
\]
where \(\sigma\) is applied coordinatewise.  We will use the elementary
facts
\[
  |\sigma(a)-\sigma(b)|\leq |a-b|
  \quad\text{and}\quad
  |\sigma(a)|\leq |a|,
  \qquad a,b\in\mathbb R.
\]

We first reduce the nominal hidden widths to widths controlled by the
sparsity.  Put \(m=\lfloor s\rfloor\).  Proceed through the hidden
layers in the order \(\ell=1,\ldots,L\).  At layer \(\ell\), let
\[
  I_\ell
  =\left\{j:
    \lVert(B_{\ell-1})_{j,\cdot}\rVert_0
    +\mathbf 1\{(v_\ell)_j\neq0\}>0\right\},
\]
where the matrices already modified at earlier steps are used.  If
\(j\notin I_\ell\), then coordinate \(j\) of
\(z_\ell^\theta(x)\) is identically zero.  The corresponding column of
\(B_\ell\) can therefore be set equal to zero without changing the
represented function.  This may make a coordinate in layer \(\ell+1\)
identically zero, which is then removed at the next step.  Moreover,
\[
  |I_\ell|
  \leq \lVert B_{\ell-1}\rVert_0+\lVert v_\ell\rVert_0
  \leq m.
\]
Permute the rows of \(B_{\ell-1}\), the coordinates of \(v_\ell\),
and the corresponding columns of \(B_\ell\) together so that the
indices in \(I_\ell\) come first.  This simultaneous permutation does
not change the represented function because ReLU acts coordinatewise.
Finally, adjoin zero coordinates until the hidden width is \(r_\ell\)
as defined below.  Consequently, every
function in \(\mathcal F(L,\boldsymbol p,s,F)\) has a canonical
representation whose effective width vector is
\begin{equation}
  \bigl(p_0,r_1,\ldots,r_L,p_{L+1}\bigr),
  \qquad
  r_\ell=\min\{p_\ell,m+1\},\quad \ell=1,\ldots,L.
  \label{eq:S4-effective-widths}
\end{equation}
Since \(r_\ell\leq p_\ell\), this smaller representation embeds into
the original architecture by further zero padding.  The construction
does not increase the parameter modulus or sparsity, and it leaves the
output function, hence its envelope, unchanged.

Suppose first that \(L\geq1\).  The number \(T\) of scalar parameter
locations in the canonical architecture is bounded by
\[
\begin{aligned}
  T
  =p_0r_1+\sum_{\ell=1}^{L-1}r_\ell r_{\ell+1}
    +p_{L+1}r_L+\sum_{\ell=1}^L r_\ell \leq C(L+1)(p_0+1)(p_{L+1}+1)(s+2)^2,
\end{aligned}
\]
where \(C\) is universal.  Fix a set \(S\) of at most \(m\) among
these \(T\) locations.  More precisely, let
\(\Theta^\circ\subset[-M,M]^T\) be the set of canonical parameter
vectors that represent members of
\(\mathcal F(L,\boldsymbol p,s,F)\), and define
\[
  \Theta^\circ_S
  =\{\theta\in\Theta^\circ:
       \operatorname{supp}(\theta)\subseteq S\}.
\]
Consider \(\theta,\theta'\in\Theta^\circ_S\) such that
\(\lVert\theta-\theta'\rVert_\infty\leq\rho\).  Define
$A:=(M+1)(s+2)$,
and
\[
  U_\ell
  =\sup_{x\in[0,1]^{p_0}}
    \left\{1\vee\lVert z_\ell^\theta(x)\rVert_\infty
    \vee\lVert z_\ell^{\theta'}(x)\rVert_\infty\right\}.
\]
Because \(x\in[0,1]^{p_0}\), we have \(U_0=1\).  In each row, at most
\(m\) weight locations and at most one shift location can be nonzero.
It follows that
\[
  U_\ell
  \leq \max\{1,M(s+2)U_{\ell-1}\}
  \leq A U_{\ell-1}.
\]
Induction therefore gives
\begin{equation}
  U_\ell\leq A^\ell,
  \qquad \ell=0,\ldots,L.
  \label{eq:S4-hidden-size}
\end{equation}

Next, set
\[
  D_\ell
  =\sup_{x\in[0,1]^{p_0}}
    \lVert z_\ell^\theta(x)-z_\ell^{\theta'}(x)\rVert_\infty.
\]
Then \(D_0=0\).  
The one-Lipschitz property above and the identity
\[
\begin{aligned}
  B_{\ell-1}z_{\ell-1}^\theta-v_\ell
    -\bigl(B'_{\ell-1}z_{\ell-1}^{\theta'}-v'_\ell\bigr)
   =(B_{\ell-1}-B'_{\ell-1})z_{\ell-1}^\theta
     +B'_{\ell-1}(z_{\ell-1}^\theta-z_{\ell-1}^{\theta'})
     -(v_\ell-v'_\ell)
\end{aligned}
\]
imply
\[
  D_\ell
  \leq (s+2)\rho U_{\ell-1}+M(s+2)D_{\ell-1}.
\]
Using \eqref{eq:S4-hidden-size} and \(M(s+2)\leq A\), another induction argument
gives
\begin{equation}
  D_\ell
  \leq \ell(s+2)\rho A^{\ell-1},
  \qquad \ell=1,\ldots,L.
  \label{eq:S4-hidden-perturbation}
\end{equation}
Indeed, the claim at layer \(\ell-1\) yields
\[
\begin{aligned}
  D_\ell
  \leq (s+2)\rho A^{\ell-1}
   +M(s+2)(\ell-1)(s+2)\rho A^{\ell-2}\leq \ell(s+2)\rho A^{\ell-1}.
\end{aligned}
\]
Note that the output layer is linear and has no shift.  Hence
\[
\begin{aligned}
  \lVert f_\theta-f_{\theta'}\rVert_\infty
  \leq (s+2)\rho U_L+M(s+2)D_L\leq (L+1)(s+2)\rho A^L.
\end{aligned}
\]
Thus, with
\begin{equation}
  \Lambda=(L+1)(M+1)^L(s+2)^{L+1},
  \qquad
  \lVert f_\theta-f_{\theta'}\rVert_\infty\leq\Lambda\rho.
  \label{eq:S4-output-perturbation}
\end{equation}

Set \(\rho=\eta/\Lambda\), and partition \([-M,M]\) into intervals of
diameter at most \(\rho\).  The number \(J\) of intervals can be chosen
to satisfy
\[
\begin{aligned}
  J
  \leq 1+\left\lceil\frac{2M}{\rho}\right\rceil
   \leq 2+\frac{2M\Lambda}{\eta}\leq C\eta^{-1}(L+1)(M+1)^{L+1}(s+2)^{L+1},
\end{aligned}
\]
where the last inequality uses \(\eta\leq1\).  For each allowed set
\(S\) and each choice of grid intervals \((I_j)_{j\in S}\), define the
zero-padded, \(T\)-dimensional cell
\[
  \mathcal Q(S,\{I_j\}_{j\in S})
  =\{\theta\in[-M,M]^T:
      \theta_j\in I_j\text{ for }j\in S,\ 
      \theta_j=0\text{ for }j\notin S\}.
\]
For every such cell \(\mathcal Q\) satisfying
\(\Theta^\circ_S\cap\mathcal Q\neq\varnothing\), choose
\(\theta_{S,\mathcal Q}\in\Theta^\circ_S\cap\mathcal Q\) and use
\(f_{\theta_{S,\mathcal Q}}\) as a center.  The chosen center therefore
still belongs to
\(\mathcal F(L,\boldsymbol p,s,F)\), including the envelope restriction
\(\lVert f\rVert_\infty\leq F\).  By
\eqref{eq:S4-output-perturbation}, every function represented by another
parameter vector in the same cell is within \(\eta\) of that center.

Let \(m_T=m\wedge T\).  Since \(TJ\geq1\), \(m_T\leq s\), and
\(m_T+1\leq2^{s+1}\), the number of selected centers is at most
\[
\begin{aligned}
  \sum_{k=0}^{m_T}\binom{T}{k}J^k
  &\leq \sum_{k=0}^{m_T}(TJ)^k\\
  &\leq (m_T+1)(TJ)^{m_T}\\
  &\leq (2TJ)^{s+1}.
\end{aligned}
\]
Moreover,
\[
\begin{aligned}
  TJ
  &\leq C\eta^{-1}(L+1)^2(M+1)^{L+1}
      (p_0+1)(p_{L+1}+1)(s+2)^{L+3}\\
  &\leq C\eta^{-1}(L+1)(M+1)^{L+2}
      (p_0+1)^2(p_{L+1}+1)^2(s+2)^{2L+2}.
\end{aligned}
\]
For the second inequality, use
\(L+1\leq2^L\leq2(s+2)^{L-1}\) for \(L\geq1\), and enlarge the
universal constant.  We conclude that
\[
\begin{aligned}
  N_\infty\!\left(\eta,\mathcal F(L,\boldsymbol p,s,F)\right)
  \leq \Big[C\eta^{-1}(L+1)(M+1)^{L+2}
      (p_0+1)^2(p_{L+1}+1)^2(s+2)^{2L+2}\Big]^{s+1}.
\end{aligned}
\]

If \(L=0\) is allowed, then \(f_\theta(x)=B_0x\) and
\(T=p_0p_1\).  For two matrices supported in the same allowed set and
differing by at most \(\rho\) in each coordinate,
\[
  \lVert f_\theta-f_{\theta'}\rVert_\infty\leq(s+2)\rho.
\]
Taking \(\rho=\eta/(s+2)\) gives
\[
  J\leq2+\frac{2M(s+2)}{\eta}
  \leq C\eta^{-1}(M+1)(s+2),
  \qquad
  TJ\leq C\eta^{-1}(M+1)p_0p_1(s+2),
\]
which is bounded by the preceding expression with \(L=0\).  The same
support-and-cell count therefore applies.  Taking logarithms proves the claim.
\end{proof}

\begin{proof}[Proof of Lemma~\ref{lem:small_modulus_representation}]
Write the given representation as
\[
  z^{(0)}(x)=x,\qquad
  z^{(\ell)}(x)
  =\sigma\!\left(B_{\ell-1}z^{(\ell-1)}(x)-v_\ell\right),
  \quad \ell=1,\ldots,L,
  \qquad
  f(x)=B_Lz^{(L)}(x).
\]
Every coordinate of \(z^{(\ell)}\), \(\ell\geq1\), is nonnegative,
although the affine vector inside ReLU and the final output may have
either sign.

If \(M\geq1\), all parameters in the given representation already have
magnitude at most \(M\).  Suppose henceforth that \(0<M<1\), and let $R=\lceil M^{-2}\rceil$.
Then
\[
  R^{-1}\leq M^2\leq M,
  \qquad
  (RM)^{-1}\leq M.
\]
We now construct a network with \(L+2\) hidden layers.

For \(1\leq k\leq p_0\) and \(1\leq r\leq R\), the first new hidden
layer has coordinates
\[
  a_{k,r}(x)=\sigma(Mx_k)=Mx_k,
\]
and, for \(1\leq r\leq R\), it has coordinates
\[
  c_r(x)=\sigma\{0-(-M)\}=M.
\]
The equality \(\sigma(Mx_k)=Mx_k\) holds because \(x_k\geq0\).  Thus
this layer uses only weights \(M\), shifts \(-M\), and zeros.  The
second new hidden layer has, for \(1\leq r\leq R\), coordinates
\[
  \widetilde z^{(0)}_{k,r}(x)
  =\sigma\!\left\{\frac{1}{RM}\sum_{q=1}^R a_{k,q}(x)\right\}=x_k,
  \qquad
  \widetilde c^{(0)}_r(x)
  =\sigma\!\left\{\frac{1}{RM}\sum_{q=1}^R c_q(x)\right\}=1.
\]
All shifts in the second layer are zero, and its nonzero weights are
\((RM)^{-1}\), whose magnitude is at most \(M\).  At this point the
network contains \(R\) coordinates equal to each input coordinate and
\(R\) coordinates equal to one.

For \(1\leq\ell\leq L\), \(1\leq j\leq p_\ell\), and
\(1\leq r\leq R\), define a coordinate in hidden layer \(\ell+2\) by
\[
\begin{aligned}
  \widetilde z^{(\ell)}_{j,r}(x)
  =\sigma\!\Bigg\{
   \sum_{k=1}^{p_{\ell-1}}\sum_{q=1}^R
      \frac{(B_{\ell-1})_{jk}}{R}
      \widetilde z^{(\ell-1)}_{k,q}(x)-
      \sum_{q=1}^R\frac{(v_\ell)_j}{R}
      \widetilde c^{(\ell-1)}_q(x)
  \Bigg\},
\end{aligned}
\]
with zero shift.  The same layer also has, for \(1\leq r\leq R\),
coordinates
\[
  \widetilde c^{(\ell)}_r(x)
  =\sigma\!\left\{\frac1R\sum_{q=1}^R
       \widetilde c^{(\ell-1)}_q(x)\right\}.
\]
We claim that, for every \(\ell=0,\ldots,L\),
\[
  \widetilde z^{(\ell)}_{j,r}(x)=z^{(\ell)}_j(x),
  \qquad
  \widetilde c^{(\ell)}_r(x)=1.
\]
The claim holds at \(\ell=0\) by the construction of the second new
layer.  If it holds at \(\ell-1\), then the expression inside the first
ReLU above is exactly
\[
  \sum_{k=1}^{p_{\ell-1}}(B_{\ell-1})_{jk}
       z^{(\ell-1)}_k(x)-(v_\ell)_j.
\]
Applying ReLU gives \(z^{(\ell)}_j(x)\), while the second display gives
\(\widetilde c^{(\ell)}_r(x)=\sigma(1)=1\).  This proves the claim by
induction.  In particular, the entire affine expression is reproduced
before ReLU is applied, so no sign condition on that expression is
needed.  Every coefficient introduced in these \(L\) layers has
magnitude at most \(R^{-1}\leq M\), because all original weights and
shifts have magnitude at most one.

For each output coordinate \(1\leq a\leq p_{L+1}\), define the new
linear output by
\[
\begin{aligned}
  \widetilde f_a(x)
  &=\sum_{j=1}^{p_L}\sum_{r=1}^R
      \frac{(B_L)_{aj}}{R}\widetilde z^{(L)}_{j,r}(x)\\
  &=\sum_{j=1}^{p_L}(B_L)_{aj}z^{(L)}_j(x)
   =f_a(x).
\end{aligned}
\]
When \(L=0\), the same display is read with
\(\widetilde z^{(0)}_{j,r}=x_j\) from the second new layer.  Thus the
construction also covers a network with no original hidden layer.  The
output coefficients have magnitude at most \(R^{-1}\leq M\); signed
outputs cause no difficulty because the output layer is linear.

The first two new hidden layers both have width \(R(p_0+1)\), and the
layer corresponding to original hidden layer \(\ell\) has width
\(R(p_\ell+1)\).  Hence every new hidden width is at most
\[
  R\max_{0\leq\ell\leq L}(p_\ell+1).
\]
For completeness, we count the nonzero parameters explicitly.  The
first and second new hidden layers contribute, respectively,
\(R(p_0+1)\) and \(R^2(p_0+1)\) nonzero parameters.  The layer
corresponding to original hidden layer \(\ell\) contributes at most
\[
  R^2\{\lVert B_{\ell-1}\rVert_0+\lVert v_\ell\rVert_0+1\};
\]
the final term counts the weights that keep the \(R\) constant
coordinates equal to one.  The new output layer contributes
\(R\lVert B_L\rVert_0\).  Since
\[
  \sum_{\ell=0}^L\lVert B_\ell\rVert_0
  +\sum_{\ell=1}^L\lVert v_\ell\rVert_0\leq s,
\]
the new sparsity is at most
\[
\begin{aligned}
  &R(p_0+1)+R^2(p_0+1)
  +R^2\sum_{\ell=1}^L
     \{\lVert B_{\ell-1}\rVert_0+\lVert v_\ell\rVert_0+1\}
  +R\lVert B_L\rVert_0\\
  &\qquad\leq R^2\{s+L+2p_0+2\}\\
  &\qquad\leq 3R^2\{s+L+p_0+p_{L+1}+1\}.
\end{aligned}
\]
Thus the asserted bounds hold, for example, with
\(C_M=3\lceil M^{-2}\rceil^2\) when \(0<M<1\), and with \(C_M=1\)
when \(M\geq1\).  Finally,
\(\widetilde f=f\) pointwise on \([0,1]^{p_0}\), so the output envelope
remains \(F\).
\end{proof}

\begin{proof}[Proof of Lemma~\ref{lem:compositional_approximation_M}]
The deterministic construction in
\citet[Theorem~5 and equations~(23)--(26) in the proof of
Theorem~1]{Schmidt2020RELU} has the following consequence.  For
\(f\in\mathcal G(q,d,t,\beta,K)\), sample size \(n\), and every fixed
sufficiently small \(c>0\), it gives, for all sufficiently large \(n\),
a unit-modulus ReLU network \(\widetilde f\) with
\[
\|\widetilde f-f\|_\infty^2\le C_c\phi_n,
\qquad
\|\widetilde f\|_\infty\le F,
\]
depth at most
\[
\sum_{i=0}^{q}\log_2(4t_i\vee4\beta_i)\log_2 n,
\]
hidden widths at most \(C_0c\,n\phi_n\), and sparsity at most
\(C_1c\,n\phi_n\log n\).  The constants \(C_0,C_1\) do not depend on
\(n\), and \(C_c\) may depend on \(c\).

If \(M\ge1\), embed this network into the prescribed architecture and
pad it to depth \(L\).  If \(0<M<1\), first embed and pad it to depth
\(L-2\), and then apply the construction in
Lemma~\ref{lem:small_modulus_representation}.  The latter construction
adds two layers, leaves the represented function unchanged, and
multiplies its widths and sparsity by constants depending only on
\(M\).  Choose \(c\) small enough that these widths and sparsity satisfy
the corresponding bounds in Assumption~\mainref{assm:networks_c}.  Thus the
resulting network belongs to \(\mathcal F(L,\mathbf p,s,F)\) and satisfies $\|\widetilde f-f\|_\infty^2\le C\phi_n$.
Applying this conclusion with $(f,n,L,\mathbf p,s,F)
=(h,n_A,L_h,\mathbf p_h,s_h,F_h)$, and $(\delta,n_T,L_\delta,\mathbf p_\delta,s_\delta,F_\delta)$
proves \eqref{eq:compositional_approximation_M}.
\end{proof}

\begin{proof}[Proof of Lemma~\ref{lem:relu_rate_summary}]
The approximation terms are controlled by
Lemma~\ref{lem:compositional_approximation_M}.  For the first-stage entropy, apply
Lemma~\ref{lem:relu_entropy_M} with \(\eta=n_A^{-1}\),
\(p_{h,0}=p_X\), and \(p_{h,L_h+1}=1\).  It gives
\[
\resizebox{\linewidth}{!}{$
\begin{aligned}
&\log N_\infty(n_A^{-1},\mathcal F_h)\\
&\quad\le
C(s_h+1)\Big[
\log n_A
+\log(L_h+1)
+(L_h+2)\log(M+1)
+(2L_h+2)\log(s_h+2)
+C
\Big].
\end{aligned}
$}
\]
Because \(M\) and \(p_X\) are fixed,
\(s_h\asymp n_A\phi_{n_A}^h\log n_A\), and
\(L_h\lesssim n_A\phi_{n_A}^h\), the right side divided by \(n_A\) is bounded by $C\phi_{n_A}^hL_h\log^2n_A$.
Also \(n_A^{-1}\le C\phi_{n_A}^hL_h\log^2n_A\) for all sufficiently large
\(n_A\).  This proves \eqref{eq:h_approx_entropy}.  The proof of
\eqref{eq:delta_approx_entropy} is identical after replacing
\((n_A,h,L_h,s_h)\) by \((n_T,\delta,L_\delta,s_\delta)\).
\end{proof}

\subsection{Auxiliary results for Section~\ref{sec:aux_nontransfer}}
\label{sec:proof_auxiliary_target_only}

\begin{proof}[Proof of Lemma~\ref{lem:exact_parallel_relu_sum}]
The construction has three parts.  We first replace
each possibly signed scalar output by two nonnegative coordinates,
then extend the shallower network without changing those coordinates,
and finally run the two equal-depth networks side by side.  For
\(a\in\{1,2\}\), write the original network
recursively as
\[
  z_0^a(x)=x,\qquad
  z_\ell^a(x)
  =\sigma_0\!\left(B_{\ell-1}^a z_{\ell-1}^a(x)-v_\ell^a\right),
  \quad \ell=1,\ldots,L_a,
  \qquad
  f_a(x)=B_{L_a}^a z_{L_a}^a(x).
\]
The final output is linear and may therefore have either sign.  To pass
this signed scalar through additional ReLU layers without changing its
value, define
\[
  C_a=
  \begin{pmatrix}
    B_{L_a}^a\\
    -B_{L_a}^a
  \end{pmatrix},
  \qquad
  r_a(x)
  =\sigma_0\!\left(C_a z_{L_a}^a(x)\right)
  =
  \begin{pmatrix}
    \sigma\{f_a(x)\}\\
    \sigma\{-f_a(x)\}
  \end{pmatrix}.
\]
For every real number \(t\),
\(t=\sigma(t)-\sigma(-t)\).  Consequently,
\[
  f_a(x)=(1,-1)r_a(x).
\]
Thus the two coordinates of \(r_a\) retain all information in the
possibly signed output \(f_a\), while both coordinates are
nonnegative.

Put \(L_\star=\max\{L_1,L_2\}\) and \(D=L_\star+1\).  The preceding
transformation gives branch \(a\) a new hidden layer at depth
\(L_a+1\).  If \(L_a<L_\star\), append \(L_\star-L_a\) further hidden
layers, each with weight matrix \(I_2\) and zero shift.  Since
\(r_a(x)\geq0\) coordinatewise,
\[
  \sigma_0\{I_2r_a(x)\}=r_a(x),
\]
so every appended layer leaves the pair unchanged.  It is useful to
record the resulting branch outputs as
\[
  u_0^a(x)=x,\qquad
  u_\ell^a(x)=
  \begin{cases}
    z_\ell^a(x), & 1\leq\ell\leq L_a,\\
    r_a(x), & L_a+1\leq\ell\leq D.
  \end{cases}
\]
For each \(\ell=1,\ldots,D\), let \(A_{\ell-1}^a\) and
\(c_\ell^a\) denote the matrix and shift that produce
\(u_\ell^a\) from \(u_{\ell-1}^a\).  Explicitly, they are
\[
  (A_{\ell-1}^a,c_\ell^a)=
  \begin{cases}
    (B_{\ell-1}^a,v_\ell^a), & 1\leq\ell\leq L_a,\\
    (C_a,0_2), & \ell=L_a+1,\\
    (I_2,0_2), & L_a+2\leq\ell\leq D.
  \end{cases}
\]
Hence, at every depth,
\[
  u_\ell^a(x)
  =\sigma_0\!\left(A_{\ell-1}^a u_{\ell-1}^a(x)-c_\ell^a\right).
\]
Both branches now have exactly \(D=L_\star+1\) hidden layers.

We next combine the two equal-depth branches into one network.  At the
first hidden layer, use
\[
  \widetilde A_0=
  \begin{pmatrix}
    A_0^1\\
    A_0^2
  \end{pmatrix},
  \qquad
  \widetilde c_1=
  \begin{pmatrix}
    c_1^1\\
    c_1^2
  \end{pmatrix}.
\]
Because both branches receive the same input \(x\), the first hidden
vector of the combined network is
\[
  \widetilde u_1(x)
  =\sigma_0(\widetilde A_0x-\widetilde c_1)
  =
  \begin{pmatrix}
    u_1^1(x)\\
    u_1^2(x)
  \end{pmatrix}.
\]
For \(\ell=2,\ldots,D\), set
\[
  \widetilde A_{\ell-1}
  =
  \begin{pmatrix}
    A_{\ell-1}^1 & 0\\
    0 & A_{\ell-1}^2
  \end{pmatrix},
  \qquad
  \widetilde c_\ell=
  \begin{pmatrix}
    c_\ell^1\\
    c_\ell^2
  \end{pmatrix}.
\]
If
\(\widetilde u_{\ell-1}=(u_{\ell-1}^{1\top},
u_{\ell-1}^{2\top})^\top\), then direct multiplication gives
\[
\begin{aligned}
  \widetilde u_\ell(x)
  &=\sigma_0\!\left(
      \widetilde A_{\ell-1}\widetilde u_{\ell-1}(x)
      -\widetilde c_\ell\right)\\
  &=
  \begin{pmatrix}
    \sigma_0\{A_{\ell-1}^1u_{\ell-1}^1(x)-c_\ell^1\}\\
    \sigma_0\{A_{\ell-1}^2u_{\ell-1}^2(x)-c_\ell^2\}
  \end{pmatrix}
  =
  \begin{pmatrix}
    u_\ell^1(x)\\
    u_\ell^2(x)
  \end{pmatrix}.
\end{aligned}
\]
Induction therefore shows that the two branch values are retained as
separate coordinate blocks at every hidden layer.  Since
\(u_D^a=r_a\), choose the final linear output row
\[
  \widetilde B_D=(1,-1,1,-1).
\]
The represented function is then exactly
\[
\begin{aligned}
  \widetilde f(x)
  &=\widetilde B_D\widetilde u_D(x)\\
  &=\{\sigma(f_1(x))-\sigma(-f_1(x))\}
    +\{\sigma(f_2(x))-\sigma(-f_2(x))\}\\
  &=f_1(x)+f_2(x).
\end{aligned}
\]
This also proves
\(\lVert\widetilde f\rVert_\infty\leq F_1+F_2\).

We now verify the architecture bounds before applying the fixed-modulus
conversion.  Define
\[
  P_a=\max_{1\leq\ell\leq L_a}p_{a,\ell},
  \qquad a\in\{1,2\}.
\]
The width of branch \(a\) is at most \(P_a\vee2\) at every hidden
layer.  The combined hidden width is therefore at most
\[
  (P_1\vee2)+(P_2\vee2)\leq P_1+P_2+4.
\]
All coefficients inherited from the original networks, including the
rows \(B_{L_a}^a\) and \(-B_{L_a}^a\), have magnitude at most \(M\).
The identity layers and the final output row use only \(1\) and
\(-1\).  Hence the parameter modulus of this representation is at most
\(M\vee1\), and its depth is \(D=L_\star+1\).

For the sparsity count, let \(q_a\leq s_a\) be the actual number of
nonzero parameters in the original representation of \(f_a\), and let
\(b_a=\lVert B_{L_a}^a\rVert_0\).  Replacing its output row by the two
rows in \(C_a\) changes the branch count from \(q_a\) to
\[
  q_a-b_a+2b_a=q_a+b_a\leq2s_a.
\]
Each appended identity layer contributes exactly two nonzero weights.
Stacking and forming block-diagonal matrices do not create any
additional nonzero parameters beyond those already counted in the two
branches, and the final output row contributes four.  Consequently,
the sparsity before fixed-modulus conversion is at most
\[
\begin{aligned}
  2(s_1+s_2)
  +2\{(L_\star-L_1)+(L_\star-L_2)\}+4
  \leq2(s_1+s_2)+4L_\star+4.
\end{aligned}
\]
This proves the bound in \eqref{eq:preconversion_parallel_sparsity}.

If \(M\geq1\), this representation already has parameter modulus at
most \(M\).  Its preceding depth, width, and sparsity bounds therefore
already imply \eqref{eq:parallel_sum_sparsity}, after enlarging a constant depending only
on \(M\).

Suppose instead that \(0<M<1\).  The representation has parameter
modulus at most one, so Lemma~\ref{lem:small_modulus_representation} converts it exactly to parameter
modulus \(M\).  That conversion adds at most two hidden layers.  Since
the pre-conversion network has input dimension \(p_0\), scalar output,
depth \(L_\star+1\), maximal hidden width at most \(P_1+P_2+4\), and
the preceding sparsity bound, another enlargement of a constant
depending only on \(M\) gives final depth at most \(L_\star+3\),
hidden widths at most
\[
  C_M\{p_0+1+P_1+P_2+1\},
\]
and sparsity at most
\[
  C_M\{s_1+s_2+L_\star+p_0+1\}.
\]
The conversion does not change \(\widetilde f\), so exactness and the
output envelope \(F_1+F_2\) are preserved.  This proves
\eqref{eq:parallel_sum_sparsity} and the lemma.
\end{proof}
\begin{proof}[Proof of Lemma~\ref{lem:minkowski_sum_cover}]
Let \(\{u_j\}_{j=1}^{N_1}\) and \(\{v_k\}_{k=1}^{N_2}\) be
\(\eta/2\)-nets for the two component classes.  For any
\(u\in\mathcal H_1\) and \(v\in\mathcal H_2\), choose \(j,k\) such that
\(\|u-u_j\|_\infty\le\eta/2\) and
\(\|v-v_k\|_\infty\le\eta/2\).  Then
\[
\|(u+v)-(u_j+v_k)\|_\infty
\le
\|u-u_j\|_\infty+\|v-v_k\|_\infty
\le\eta.
\]
Thus the \(N_1N_2\) pairwise sums form an \(\eta\)-net.
\end{proof}

\subsection{Auxiliary results for Section~\ref{sec:appendix_remarks}}
\label{sec:proof_auxiliary_orthogonal}

\begin{proof}[Proof of Lemma~\ref{lem:orthogonal_population_geometry}]
Put \(w=v-\delta\) and \(b=h^\circ-h\).  Since
\(Y=h+\delta+\epsilon_X\),
\[
Y-h^\circ-v=\epsilon_X-b-w,
\qquad
Y-h^\circ-\delta=\epsilon_X-b.
\]
Conditional centering of \(\epsilon_X\) gives
\[
\begin{split}
&\mathbb E_T\left[
\{Y-h^\circ-v\}^2-\{Y-h^\circ-\delta\}^2
\right]=
\mathbb E_{P_X}(w^2)+2\mathbb E_{P_X}(bw).
\end{split}
\]
Also \(U-h^\circ=\xi-b\), so conditional centering of \(\xi\) gives
\[
2\mathbb E_A[(v-\delta)(U-h^\circ)]
=-2\mathbb E_{P_X}(bw).
\]
Adding the two displays yields
\(\mathbb E_{P_X}(w^2)=\|v-\delta\|_2^2\).
\end{proof}

\begin{proof}[Proof of Lemma~\ref{lem:orth_localized_process}]
Fix \(c_0>0\).  We prove both inequalities from the same finite-net argument.  Fix an
\(\eta\)-net \(\{v_1,\ldots,v_N\}\) under \(\|\cdot\|_\infty\), and put
\(u=\log(12N)+x\), where \(x\ge0\).

For each fixed \(v_m\),
\(\operatorname{Var}\{v_m^2(X)\}\le B^2Pv_m^2\).  Bernstein's inequality,
Young's inequality, and a union bound give, simultaneously over the net with
probability at least \(1-2e^{-x}/12\),
\[
|(P_n-P)v_m^2|
\le
\frac{c_0}{16}Pv_m^2+C_{c_0}B^2\frac{u}{n}.
\]
For the bounded multiplier \(b(X_i)v_m(X_i)\), Bernstein's inequality uses
\[
\operatorname{Var}\{b(X)v_m(X)\}
\le B_b^2Pv_m^2,
\qquad
|b(X)v_m(X)|\le B_bB.
\]
Bernstein's inequality, Young's inequality, and a union bound yield
\[
|(P_n-P)(bv_m)|
\le
\frac{c_0}{16}Pv_m^2+C_{c_0}(B_b^2+B^2)\frac{u}{n}
\]
simultaneously over \(m\), outside an event of probability at most \(2e^{-x}/12\).

For the noise multiplier, conditional on \(X_1,\ldots,X_n\),
\[
\mathbb E\left[
\exp\left\{t\sum_{i=1}^n\zeta_i v_m(X_i)\right\}
\,\middle|\, X_1,\ldots,X_n
\right]
\le
\exp\left\{
\frac{\sigma^2t^2}{2}\sum_{i=1}^nv_m^2(X_i)
\right\}.
\]
Lemma~\ref{lem:self_normalized_multiplier}, with \(a=c_0/16\), and
\eqref{eq:empirical_population_norm_comparison} imply, simultaneously over
the net,
\[
|P_n(\zeta v_m)|
\le
\frac{c_0}{16}Pv_m^2+C_{c_0}(\sigma^2+B^2)\frac{u}{n}
\]
outside an event of probability at most \(4e^{-x}/12\).  Since
\(P(\zeta v_m)=0\), this is the desired centered multiplier bound.

Combining the three displays and enlarging \(C_{c_0}\), we obtain, for
\(r\in\{{\rm q},{\rm l}\}\),
\[
\max_{1\le m\le N}
\left\{|\mathbb Z_n^r(v_m)|-\frac{c_0}{2}Pv_m^2
\right\}_+
\le
C_{c_0}\frac{\log(12N)+x}{n}
\]
with probability at least \(1-e^{-x}\).

It remains to pass from the net to the entire class.  For every \(v\in\mathcal V\),
choose \(\pi v\) in the net with \(\|v-\pi v\|_\infty\le\eta\).  The deterministic
parts satisfy
\[
|P(v^2-(\pi v)^2)|+|P_n(v^2-(\pi v)^2)|
\le4B\eta,
\]
and
\[
|P\{b(v-\pi v)\}|+|P_n\{b(v-\pi v)\}|
\le2B_b\eta.
\]
For the noise part,
\[
|(P_n-P)\{\zeta(v-\pi v)\}|
\le
\eta\{P_n|\zeta|+P|\zeta|\}.
\]
Lemma~\ref{lem:conditional_subgaussian_moments} implies that the expectation of the
right-hand side is at most \(2\sigma\eta\).  Also
\(|Pv^2-P(\pi v)^2|\le2B\eta\).  Therefore the expected oscillation needed to replace
\(v\) by \(\pi v\), including the localization term, is at most
\(C_{c_0}\eta\).
Integrating the preceding exponential tail in \(x\) and adding the oscillation proves both
assertions.  Conditioning on an independent sigma-field leaves every step unchanged and
proves the conditional version.
\end{proof}

\subsection{Auxiliary results for Section~\ref{sec:appendix_target_only}}\label{sec:proof_auxiliary_targetonly}

\begin{proof}[Proof of Lemma~\ref{lem:bernoulli_subgaussian_c}]
Set
\[
L=\log\{(1-p)/p\},
\qquad
c=\frac{1-2p}{4L},
\]
and, for \(t\in\mathbb R\), set
\[
F(t)
=
\log\{(1-p)e^{-pt}+pe^{(1-p)t}\}-ct^2
=
-pt+\log(1-p+pe^t)-ct^2.
\]
Writing
\[
r(t)=\frac{pe^t}{1-p+pe^t},
\]
direct differentiation gives
\[
F'(t)=r(t)-p-2ct,
\qquad
F''(t)=r(t)\{1-r(t)\}-2c,
\]
and
\[
F'''(t)=r(t)\{1-r(t)\}\{1-2r(t)\}.
\]
The function \(r\) is strictly increasing and \(r(L)=1/2\).
Consequently, \(F''\) is strictly increasing on \((-\infty,L)\) and
strictly decreasing on \((L,\infty)\).  Thus \(F''\) has at most two
zeros, and Rolle's theorem implies that \(F'\) has at most three zeros.
Moreover,
\[
F'(0)=F'(L)=F'(2L)=0.
\]
Indeed,
\[
r(0)=p,\qquad r(L)=\frac12,\qquad r(2L)=1-p,
\qquad
2cL=\frac{1-2p}{2}.
\]
Let \(u=1-2p\in(0,1)\).  Since
\[
L=\log\left(\frac{1+u}{1-u}\right)
=2\int_0^u\frac{ds}{1-s^2}>2u,
\]
we have
\[
F''(L)=\frac14-\frac{1-2p}{2L}>0.
\]
Also,
\[
\lim_{t\to-\infty}F'(t)=\infty,
\qquad
\lim_{t\to\infty}F'(t)=-\infty.
\]
It follows that \(F'\) is positive on \((-\infty,0)\), negative on
\((0,L)\), positive on \((L,2L)\), and negative on
\((2L,\infty)\).  Finally,
\[
F(0)=0
\]
and
\[
\begin{aligned}
F(2L)
&=-2pL+\log(1-p+pe^{2L})-4cL^2\\
&=-2pL+L-(1-2p)L=0.
\end{aligned}
\]
Therefore \(F(t)\le0\) for every \(t\in\mathbb R\), which is the stated
inequality.
\end{proof}

\begin{proof}[Proof of Lemma~\ref{lem:black_box_realizability_c}]
Set
\[
\Delta=\widehat f(z_+)-\widehat f(z_-)>0.
\]
Since
\[
\lim_{p\downarrow0}
\frac{1-2p}{4\log\{(1-p)/p\}}=0,
\]
there exists \(p_0\in(0,1/2)\) such that
\[
\Delta^2
\frac{1-2p}{4\log\{(1-p)/p\}}
\le\frac{\sigma_A^2}{2},
\qquad 0<p\le p_0.
\]
Set \(\eta=\Delta p_0\).
On a measurable \(P_X\)-full set on which the displayed inequalities in
the lemma hold, set
\[
\pi_h(x)
=
\frac{h(x)-\widehat f(z_-)}
{\widehat f(z_+)-\widehat f(z_-)}.
\]
Then \(0\le\pi_h(x)\le p_0\) on this set.  On its complement, set
\(\pi_h(x)=0\).  For every measurable
subset \(A\) of \(\mathcal Z\), define
\[
Q_h(A\mid x)
=
\pi_h(x)\delta_{z_+}(A)
+\{1-\pi_h(x)\}\delta_{z_-}(A),
\]
where \(\delta_z\) is the Dirac measure at \(z\).
For each \(x\), \(Q_h(\cdot\mid x)\) is a probability measure, and, for
each measurable \(A\), \(Q_h(A\mid\cdot)\) is measurable.  Thus \(Q_h\)
is a conditional law of \(Z\) given \(X\).  Moreover, \(P_X\)-almost
everywhere,
\[
\begin{aligned}
\mathbb E\{\widehat f(Z)\mid X=x\}
=
\widehat f(z_+)\pi_h(x)
+\widehat f(z_-)\{1-\pi_h(x)\}
=h(x).
\end{aligned}
\]
If \(\pi_h(x)=0\), then
\(\widehat f(Z)-h(X)=0\) conditional on \(X=x\), outside a
\(P_X\)-null set.  If \(0<\pi_h(x)\le p_0\), then, conditional on
\(X=x\),
\[
\widehat f(Z)-h(X)
=
\Delta\{B-\pi_h(x)\},
\]
where \(B\) is a Bernoulli random variable with success probability \(\pi_h(x)\).
Lemma~\ref{lem:bernoulli_subgaussian_c} and the choice of \(p_0\) give,
for every \(t\in\mathbb R\),
\[
\mathbb E\!\left[
\exp\left\{t\bigl(\widehat f(Z)-h(X)\bigr)\right\}
\,\middle|\,X
\right]
\le
\exp(\sigma_A^2t^2/2)
\quad P_X\text{-almost surely}.
\]
This proves both displayed conclusions.
\end{proof}

\begin{proof}[Proof of Lemma~\ref{lem:localized_compositional_minimax_c}]
Write \(\lVert\cdot\rVert_2=\lVert\cdot\rVert_{L_2(P_X)}\).  By the
assumption on the design distribution, there are constants
\(0<c_X\leq C_X<\infty\) such that
\[
  c_X
  \leq \frac{dP_X}{dx}(x)
  \leq C_X
\]
for Lebesgue-almost every \(x\in[0,1]^{p_X}\).  Consequently, for every
measurable \(a:[0,1]^{p_X}\to\mathbb R\),
\[
  c_X\int_{[0,1]^{p_X}}a^2(x)\,dx
  \leq \lVert a\rVert_2^2
  \leq C_X\int_{[0,1]^{p_X}}a^2(x)\,dx.
\]
We proceed with proving the lower bound for \(h\).

\paragraph{Step 1: select the layer that determines the rate.}
Choose
\[
  i_h^\star
  \in
  \argmin_{0\leq i\leq q_h}
  \frac{\beta_i^{h,\star}}
       {2\beta_i^{h,\star}+t_i^h}.
\]
Denote
\[
  t_h^\star=t_{i_h^\star}^h,
  \qquad
  \alpha_h=\beta_{i_h^\star}^h,
  \qquad
  \gamma_h
  =\prod_{\ell=i_h^\star+1}^{q_h}(\beta_\ell^h\wedge1),
  \qquad
  \overline\beta_h=\alpha_h\gamma_h.
\]
By the definition of effective smoothness,
\(\overline\beta_h=\beta_{i_h^\star}^{h,\star}\), and the choice of
\(i_h^\star\) gives
$\phi_n^h
  =n^{-2\overline\beta_h/(2\overline\beta_h+t_h^\star)}$.
Thus \(i_h^\star\) is a layer attaining the slowest rate in the
definition of \(\phi_n^h\).

\paragraph{Step 2: construct many localized alternatives at that layer.}
Choose a nonnegative, nonzero function
\(\kappa_h\in C_c^\infty((1/4,3/4))\), and put
\[
  \mathcal K_h(v_1,\ldots,v_{t_h^\star})
  =\prod_{j=1}^{t_h^\star}\kappa_h(v_j).
\]
After multiplying \(\kappa_h\) by a sufficiently small fixed positive
constant, we may assume that
\[
  0\leq\mathcal K_h\leq1,
  \qquad
  \mathcal K_h
  \in\mathcal C_{t_h^\star}^{\alpha_h}
       ([0,1]^{t_h^\star},1).
\]
This rescaling does not make \(\mathcal K_h\) identically zero.

Let \(R_h>1\) be a fixed constant, to be chosen below, and define
\[
  m_{h,n}
  =\left\lfloor
      R_h n^{1/(2\overline\beta_h+t_h^\star)}
    \right\rfloor,
  \qquad
  r_{h,n}=m_{h,n}^{-1}.
\]
For all sufficiently large \(n\),
\[
  \frac{R_h}{2}n^{1/(2\overline\beta_h+t_h^\star)}
  \leq m_{h,n}
  \leq R_hn^{1/(2\overline\beta_h+t_h^\star)}.
\]
Consider the grid
\[
  \mathcal U_{h,n}
  =\{0,r_{h,n},\ldots,(m_{h,n}-1)r_{h,n}\}^{t_h^\star},
  \qquad
  N_{h,n}=|\mathcal U_{h,n}|=m_{h,n}^{t_h^\star}.
\]
For \(u\in\mathcal U_{h,n}\), define
\[
  \psi_{h,u}(v)
  =r_{h,n}^{\alpha_h}
   \mathcal K_h\!\left(\frac{v-u}{r_{h,n}}\right),
  \qquad v\in[0,1]^{t_h^\star},
\]
where division is coordinatewise.  Its support is contained in
\[
  \prod_{j=1}^{t_h^\star}
  \left[u_j+\frac{r_{h,n}}4,
        u_j+\frac{3r_{h,n}}4\right].
\]
Thus bumps indexed by different grid points have disjoint supports,
and the \(\ell_\infty\)-distance between any two such supports is at
least \(r_{h,n}/2\).

For
\(w=(w_u)_{u\in\mathcal U_{h,n}}\in\{0,1\}^{N_{h,n}}\), let
\[
  \Phi_{h,w}(v)
  =\sum_{u\in\mathcal U_{h,n}}w_u\psi_{h,u}(v).
\]
At each \(v\), at most one summand is nonzero.  We next verify that the
H\"older norms of these functions are bounded uniformly in \(n\) and
\(w\).  For every multi-index \(a\) with \(|a|<\alpha_h\), rescaling
gives
\[
  \lVert\partial^a\psi_{h,u}\rVert_\infty
  \leq C r_{h,n}^{\alpha_h-|a|}.
\]
Put \(s_h=\lceil\alpha_h\rceil-1\) and
\(\zeta_h=\alpha_h-s_h\in(0,1]\).  Within a single bump, rescaling
also gives
\[
  \sup_{v\neq v'}
  \frac{|\partial^a\psi_{h,u}(v)
          -\partial^a\psi_{h,u}(v')|}
       {\lVert v-v'\rVert_\infty^{\zeta_h}}
  \leq C,
  \qquad |a|=s_h.
\]
This includes the integer case \(\alpha_h\in\mathbb N\), for which
\(\zeta_h=1\) and the required condition is Lipschitz continuity.  If
\(v\) and \(v'\) lie in two different bump supports, their distance is
at least \(r_{h,n}/2\), whereas the derivative difference is at most
\(2Cr_{h,n}^{\zeta_h}\); hence the same quotient is bounded by a
constant.  If one point lies outside a bump support, the within-bump
bound applies to its smooth zero extension because every derivative of
\(\mathcal K_h\) vanishes at the support boundary.  It follows that
\[
  \Phi_{h,w}
  \in\mathcal C_{t_h^\star}^{\alpha_h}
       ([0,1]^{t_h^\star},C),
  \qquad
  0\leq\Phi_{h,w}\leq C r_{h,n}^{\alpha_h},
\]
where \(C\) is independent of \(n\) and \(w\).

\paragraph{Step 3: embed the alternatives into the compositional class.}
For every \(i<i_h^\star\), define a map that truncates or zero-pads its
argument:
\[
  g_i^h(x)=
  \begin{cases}
    (x_1,\ldots,x_{d_{i+1}^h})^\top,
      & d_i^h\geq d_{i+1}^h,\\[2mm]
    (x_1,\ldots,x_{d_i^h},0,\ldots,0)^\top,
      & d_i^h<d_{i+1}^h.
  \end{cases}
\]
The assumption
\[
  t_h^\star
  \leq\min_{0\leq\ell\leq i_h^\star}d_\ell^h
\]
ensures that these maps carry the first \(t_h^\star\) input
coordinates unchanged to layer \(i_h^\star\).

If \(i_h^\star<q_h\), define
\[
  g_{i_h^\star,w}^h(x)
  =\bigl\{
      \Phi_{h,w}(x_1,\ldots,x_{t_h^\star}),
      0,\ldots,0
    \bigr\}^\top,
\]
and, for \(i_h^\star<i<q_h\), define
\[
  g_i^h(x)
  =\bigl\{x_1^{\beta_i^h\wedge1},0,\ldots,0\bigr\}^\top.
\]
The final scalar component is
\[
  g_{q_h}^h(x)
  =\widehat f(z_-)+x_1^{\beta_{q_h}^h\wedge1}.
\]
If \(i_h^\star=q_h\), there are no outer power maps; instead, define
the final component directly by
\[
  g_{q_h,w}^h(x)
  =\widehat f(z_-)
   +\Phi_{h,w}(x_1,\ldots,x_{t_h^\star}).
\]
These definitions also cover \(i_h^\star=0\): in that case there are
simply no preceding coordinate-propagation maps.
When \(q_h=0\), necessarily \(i_h^\star=0=q_h\), and the displayed
definition of \(g_{q_h,w}^h\) applies directly.

We verify membership in \(\mathcal G\).  A component that uses one
coordinate can be declared to depend on any set of \(t_i^h\)
coordinates containing that coordinate, while ignoring the other
selected coordinates; a zero component can use any such set.
Coordinate projections and zero maps belong to every required
H\"older class for a sufficiently large fixed radius.  Moreover, for
every \(\beta>0\),
\[
  x\longmapsto x^{\beta\wedge1},
  \qquad x\in[0,1],
\]
belongs to \(\mathcal C_1^\beta([0,1],C_\beta)\).  Indeed, for
\(0<\beta\leq1\),
\(|x^\beta-y^\beta|\leq|x-y|^\beta\); for \(\beta>1\), the selected
map is the identity.  Take \([a_i,b_i]=[0,1]\) at every intermediate
layer and
\[
  [a_{q_h+1},b_{q_h+1}]
  =[\widehat f(z_-),\widehat f(z_-)+1]
\]
at the output.  Hence, for sufficiently large fixed \(K_h\) and all
sufficiently large \(n\), these maps define
\[
  h_w\in
  \mathcal G(q_h,\boldsymbol d_h,\boldsymbol t_h,
             \boldsymbol\beta_h,K_h).
\]
Successively applying the power maps gives the explicit formula
\[
  h_w(x)
  =\widehat f(z_-)
   +\left\{
      \Phi_{h,w}(x_1,\ldots,x_{t_h^\star})
    \right\}^{\gamma_h}.
\]
Therefore
\[
  0\leq h_w(x)-\widehat f(z_-)
  \leq C r_{h,n}^{\alpha_h\gamma_h}
  =C r_{h,n}^{\overline\beta_h}.
\]
Because \(r_{h,n}\to0\), the last upper bound is at most \(\eta\) for
all sufficiently large \(n\).  Thus every \(h_w\) belongs to the
localized class in the first conclusion of the lemma.

\paragraph{Step 4: compute the distance between two alternatives.}
For binary vectors \(w,w'\), define the Hamming distance
\[
  \operatorname{Ham}(w,w')
  =\sum_{u\in\mathcal U_{h,n}}\mathbf 1\{w_u\neq w'_u\}.
\]
Since the bump supports are disjoint and \(w_u\in\{0,1\}\),
\[
  \{\Phi_{h,w}(v)\}^{\gamma_h}
  =\sum_{u\in\mathcal U_{h,n}}
     w_u\{\psi_{h,u}(v)\}^{\gamma_h}.
\]
A change of variables within each grid cell, followed by the density
bounds at the beginning of the proof, gives
\begin{equation}
\begin{aligned}
  &c_XA_h\operatorname{Ham}(w,w')
   r_{h,n}^{2\overline\beta_h+t_h^\star}
  \leq \lVert h_w-h_{w'}\rVert_2^2\leq
   C_XA_h\operatorname{Ham}(w,w')
   r_{h,n}^{2\overline\beta_h+t_h^\star},\\
  &\qquad
  A_h=
  \int_{[0,1]^{t_h^\star}}
       \{\mathcal K_h(v)\}^{2\gamma_h}\,dv>0.
\end{aligned}
\label{eq:localized-h-packing-distance-c}
\end{equation}

\paragraph{Step 5: extract a large, well-separated finite family.}
Because \(t_h^\star\geq1\) and \(m_{h,n}\to\infty\), we have
\(N_{h,n}\geq8\) for all sufficiently large \(n\).  The
Varshamov--Gilbert packing lemma
\citep[Lemma~2.9]{tsybakov2009introduction} provides a set
\[
  \mathcal W_{h,n}\subset\{0,1\}^{N_{h,n}},
  \qquad 0\in\mathcal W_{h,n},
\]
such that
\[
  \log|\mathcal W_{h,n}|
  \geq\frac{N_{h,n}\log2}{8},
  \qquad
  \operatorname{Ham}(w,w')
  \geq\frac{N_{h,n}}8
  \quad(w\neq w').
\]
Thus the result being invoked gives both the number of alternatives
and their pairwise separation with respect to the Hamming distance.  
Since
\(N_{h,n}=r_{h,n}^{-t_h^\star}\),
\eqref{eq:localized-h-packing-distance-c} implies
\[
  \min_{\substack{w,w'\in\mathcal W_{h,n}\\w\neq w'}}
  \lVert h_w-h_{w'}\rVert_2^2
  \geq c r_{h,n}^{2\overline\beta_h}
  \geq c_h'\phi_n^h.
\]
The last inequality follows from
\(m_{h,n}\leq
R_hn^{1/(2\overline\beta_h+t_h^\star)}\).

\paragraph{Step 6: calculate the Gaussian Kullback--Leibler divergence.}
Let \(P_{h,w}^{(n)}\) be the joint law of \(n\) independent pairs
under
\[
  X\sim P_X,
  \qquad
  Y\mid X=x\sim N\{h_w(x),\sigma_T^2\}.
\]
For probability measures \(P\ll Q\), write
\[
  D_{\mathrm{KL}}(P\,\|\,Q)
  =\int\log\!\left(\frac{dP}{dQ}\right)dP.
\]
For one observation, the conditional log-likelihood ratio between
\(h_w\) and \(h_0\) equals
\[
  \log\frac{p_w(Y\mid X=x)}{p_0(Y\mid X=x)}
  =\frac{\{Y-h_0(x)\}^2-\{Y-h_w(x)\}^2}
         {2\sigma_T^2}.
\]
Under \(P_{h,w}^{(n)}\), write
\(Y=h_w(X)+\epsilon\), where
\(\mathbb E(\epsilon\mid X)=0\).  Taking conditional expectation in
the preceding display therefore gives
\[
  \mathbb E_w\!\left[
    \left.
    \log\frac{p_w(Y\mid X)}{p_0(Y\mid X)}
    \right|X=x
  \right]
  =\frac{\{h_w(x)-h_0(x)\}^2}{2\sigma_T^2}.
\]
Independence of the observations and integration over their common
covariate law give the exact identity
\[
  D_{\mathrm{KL}}
  \bigl(P_{h,w}^{(n)}\,\|\,P_{h,0}^{(n)}\bigr)
  =\frac{n}{2\sigma_T^2}\lVert h_w-h_0\rVert_2^2.
\]
Because
\(\operatorname{Ham}(w,0)\leq N_{h,n}\),
\eqref{eq:localized-h-packing-distance-c} yields
\[
  \max_{w\in\mathcal W_{h,n}}
  D_{\mathrm{KL}}
  \bigl(P_{h,w}^{(n)}\,\|\,P_{h,0}^{(n)}\bigr)
  \leq Cn r_{h,n}^{2\overline\beta_h}.
\]
Combining this with the lower bound on
\(\log|\mathcal W_{h,n}|\) gives
\[
\begin{aligned}
  \frac{
    \max_{w\in\mathcal W_{h,n}}
    D_{\mathrm{KL}}
    (P_{h,w}^{(n)}\,\|\,P_{h,0}^{(n)})
  }{\log|\mathcal W_{h,n}|}
  &\leq Cn r_{h,n}^{2\overline\beta_h+t_h^\star}\\
  &\leq C R_h^{-(2\overline\beta_h+t_h^\star)}
\end{aligned}
\]
for all sufficiently large \(n\).  Choose the fixed \(R_h\)
sufficiently large that
\[
  \max_{w\in\mathcal W_{h,n}}
  D_{\mathrm{KL}}
  \bigl(P_{h,w}^{(n)}\,\|\,P_{h,0}^{(n)}\bigr)
  \leq\frac1{16}\log|\mathcal W_{h,n}|.
\]

\paragraph{Step 7: apply Fano's inequality and return to function
estimation.}
We state the version of Fano's inequality used here.  Let \(W\) be
uniform on a finite set \(\mathcal W\), and, conditionally on \(W=w\),
let the data \(\mathcal D\) have law \(P_w\).  Write
\[
  I(W;\mathcal D)
  =D_{\mathrm{KL}}
    \bigl(P_{W,\mathcal D}\,\|\,P_W\otimes P_{\mathcal D}\bigr)
\]
for the mutual information between the random index and the data.
For every measurable decoder \(\widetilde W\), Fano's inequality
\citep[Lemma~2.10]{tsybakov2009introduction} states that
\[
  \frac1{|\mathcal W|}
  \sum_{w\in\mathcal W}P_w(\widetilde W\neq w)
  \geq
  1-\frac{I(W;\mathcal D)+\log2}{\log|\mathcal W|}.
\]
If
\(\overline P=|\mathcal W|^{-1}\sum_{w\in\mathcal W}P_w\), then, for
any probability law \(Q\) such that \(P_w\ll Q\) for every \(w\), a
direct expansion of the three logarithms gives
\[
  \frac1{|\mathcal W|}
  \sum_{w\in\mathcal W}D_{\mathrm{KL}}(P_w\,\|\,Q)
  =I(W;\mathcal D)+D_{\mathrm{KL}}(\overline P\,\|\,Q).
\]
Since KL divergence is nonnegative,
\[
  I(W;\mathcal D)
  \leq
  \max_{w\in\mathcal W}D_{\mathrm{KL}}(P_w\,\|\,Q).
\]
Applying these facts with
\(\mathcal W=\mathcal W_{h,n}\) and
\(Q=P_{h,0}^{(n)}\), and using the fact that the maximum error
probability is no smaller than the average error probability, shows
that
\[
\begin{aligned}
  \inf_{\widetilde w}
  \max_{w\in\mathcal W_{h,n}}
  P_{h,w}^{(n)}(\widetilde w\neq w)
  &\geq
  1-\frac{
      \max_wD_{\mathrm{KL}}
      (P_{h,w}^{(n)}\,\|\,P_{h,0}^{(n)})+\log2
    }{\log|\mathcal W_{h,n}|}\\
  &\geq\frac12
\end{aligned}
\]
for all sufficiently large \(n\); the last step also uses
\(|\mathcal W_{h,n}|\to\infty\).

Let \(\widehat g\) be any Borel-measurable
\(L_2(P_X)\)-valued estimator.  Fix an ordering of the finite set
\(\mathcal W_{h,n}\), and let \(\widehat w\) be the first minimizer of
\[
  w\longmapsto\lVert\widehat g-h_w\rVert_2.
\]
This decoder is measurable because every displayed distance is a
continuous function of \(\widehat g\) and only finitely many distances
are compared.  If the true index is \(w\) but
\(\widehat w\neq w\), then
\[
\begin{aligned}
  \lVert h_{\widehat w}-h_w\rVert_2
  &\leq
  \lVert h_{\widehat w}-\widehat g\rVert_2
  +\lVert\widehat g-h_w\rVert_2\\
  &\leq2\lVert\widehat g-h_w\rVert_2,
\end{aligned}
\]
where the second inequality follows from the definition of
\(\widehat w\).  Hence, on the decoding-error event,
\[
  \lVert\widehat g-h_w\rVert_2^2
  \geq\frac14
  \min_{\substack{v,v'\in\mathcal W_{h,n}\\v\neq v'}}
  \lVert h_v-h_{v'}\rVert_2^2.
\]
The separation bound and the Fano error probability now imply
\[
  \inf_{\widehat g}
  \max_{w\in\mathcal W_{h,n}}
  \mathbb E_w\lVert\widehat g-h_w\rVert_2^2
  \geq c_h\phi_n^h.
\]
Every \(h_w\) belongs to the localized class in the first conclusion
of the lemma, so this proves that conclusion.

We now prove the lower bound for \(\delta\), spelling out the second
finite family rather than referring to the preceding argument only by
analogy.

\paragraph{Step 8: construct the alternatives for \(\delta\).}
Choose
\[
  i_\delta^\star
  \in
  \argmin_{0\leq i\leq q_\delta}
  \frac{\beta_i^{\delta,\star}}
       {2\beta_i^{\delta,\star}+t_i^\delta},
\]
and define
\[
  t_\delta^\star=t_{i_\delta^\star}^\delta,
  \qquad
  \alpha_\delta=\beta_{i_\delta^\star}^\delta,
  \qquad
  \gamma_\delta
  =\prod_{\ell=i_\delta^\star+1}^{q_\delta}
      (\beta_\ell^\delta\wedge1),
  \qquad
  \overline\beta_\delta
  =\alpha_\delta\gamma_\delta.
\]
Then
\[
  \phi_n^\delta
  =n^{-2\overline\beta_\delta/
       (2\overline\beta_\delta+t_\delta^\star)}.
\]
Choose a nonnegative, nonzero
\(\kappa_\delta\in C_c^\infty((1/4,3/4))\), scaled so that
\[
  \mathcal K_\delta(v)
  =\prod_{j=1}^{t_\delta^\star}\kappa_\delta(v_j)
\]
satisfies
\[
  0\leq\mathcal K_\delta\leq1,
  \qquad
  \mathcal K_\delta
  \in\mathcal C_{t_\delta^\star}^{\alpha_\delta}
       ([0,1]^{t_\delta^\star},1).
\]
For a fixed \(R_\delta>1\), define
\[
  m_{\delta,n}
  =\left\lfloor
      R_\delta
      n^{1/(2\overline\beta_\delta+t_\delta^\star)}
    \right\rfloor,
  \qquad
  r_{\delta,n}=m_{\delta,n}^{-1},
\]
and note that, for all sufficiently large \(n\),
\[
  \frac{R_\delta}{2}
  n^{1/(2\overline\beta_\delta+t_\delta^\star)}
  \leq m_{\delta,n}
  \leq
  R_\delta
  n^{1/(2\overline\beta_\delta+t_\delta^\star)}.
\]
\[
  \mathcal U_{\delta,n}
  =\{0,r_{\delta,n},\ldots,
       (m_{\delta,n}-1)r_{\delta,n}\}^{t_\delta^\star},
  \qquad
  N_{\delta,n}=m_{\delta,n}^{t_\delta^\star}.
\]
In particular, \(N_{\delta,n}\to\infty\), so
\(N_{\delta,n}\geq8\) for all sufficiently large \(n\).
For \(u\in\mathcal U_{\delta,n}\), put
\[
  \psi_{\delta,u}(v)
  =r_{\delta,n}^{\alpha_\delta}
   \mathcal K_\delta\!\left(\frac{v-u}{r_{\delta,n}}\right),
\]
and, for \(w\in\{0,1\}^{N_{\delta,n}}\), put
\[
  \Phi_{\delta,w}
  =\sum_{u\in\mathcal U_{\delta,n}}w_u\psi_{\delta,u}.
\]
The supports are disjoint and separated in \(\ell_\infty\)-distance by
at least \(r_{\delta,n}/2\).  The derivative scaling and boundary
argument from Step~2, now with \(\alpha_\delta\) and
\(r_{\delta,n}\), give
\[
  \Phi_{\delta,w}
  \in\mathcal C_{t_\delta^\star}^{\alpha_\delta}
       ([0,1]^{t_\delta^\star},C),
  \qquad
  0\leq\Phi_{\delta,w}
  \leq C r_{\delta,n}^{\alpha_\delta}.
\]

For \(i<i_\delta^\star\), define the truncation or zero-padding map
\[
  g_i^\delta(x)=
  \begin{cases}
    (x_1,\ldots,x_{d_{i+1}^\delta})^\top,
      &d_i^\delta\geq d_{i+1}^\delta,\\[2mm]
    (x_1,\ldots,x_{d_i^\delta},0,\ldots,0)^\top,
      &d_i^\delta<d_{i+1}^\delta.
  \end{cases}
\]
The dimension condition for \(\delta\) ensures that these maps carry
the first \(t_\delta^\star\) coordinates to layer
\(i_\delta^\star\).  If \(i_\delta^\star<q_\delta\), take
\[
  g_{i_\delta^\star,w}^\delta(x)
  =\bigl\{
      \Phi_{\delta,w}(x_1,\ldots,x_{t_\delta^\star}),
      0,\ldots,0
    \bigr\}^\top,
\]
\[
  g_i^\delta(x)
  =\bigl\{x_1^{\beta_i^\delta\wedge1},0,\ldots,0\bigr\}^\top,
  \qquad i_\delta^\star<i<q_\delta,
\]
and use the final scalar component
\[
  g_{q_\delta}^\delta(x)
  =x_1^{\beta_{q_\delta}^\delta\wedge1}.
\]
If \(i_\delta^\star=q_\delta\), define instead
\[
  g_{q_\delta,w}^\delta(x)
  =\Phi_{\delta,w}(x_1,\ldots,x_{t_\delta^\star}).
\]
When \(q_\delta=0\), necessarily
\(i_\delta^\star=0=q_\delta\), and this last definition applies
directly.  Take \([a_i,b_i]=[0,1]\) at every intermediate layer and
also at the output.  Since
\(\lVert\Phi_{\delta,w}\rVert_\infty
\leq C r_{\delta,n}^{\alpha_\delta}\leq1\) for all sufficiently large
\(n\), every displayed map has the required domain and range.  The argument used in Step~3,
with this choice of output interval, therefore applies to these maps.
Thus,
for sufficiently large fixed \(K_\delta\), their composition satisfies
\[
  \delta_w
  \in\mathcal G(q_\delta,\boldsymbol d_\delta,
                \boldsymbol t_\delta,\boldsymbol\beta_\delta,K_\delta),
  \qquad
  \delta_w(x)
  =\left\{
      \Phi_{\delta,w}(x_1,\ldots,x_{t_\delta^\star})
    \right\}^{\gamma_\delta}.
\]
In particular,
\[
  \lVert\delta_w\rVert_\infty
  \leq C r_{\delta,n}^{\overline\beta_\delta}
  \leq1
\]
for all sufficiently large \(n\), so every \(\delta_w\) belongs to the
class in the second conclusion of the lemma.

\paragraph{Step 9: verify separation and indistinguishability for
\(\delta\).}
Apply the Varshamov--Gilbert packing lemma
\citep[Lemma~2.9]{tsybakov2009introduction} with
\(N=N_{\delta,n}\).  It gives
\[
  \mathcal W_{\delta,n}\subset\{0,1\}^{N_{\delta,n}},
  \qquad 0\in\mathcal W_{\delta,n},
\]
such that
\[
  \log|\mathcal W_{\delta,n}|
  \geq\frac{N_{\delta,n}\log2}{8},
  \qquad
  \operatorname{Ham}(w,w')
  \geq\frac{N_{\delta,n}}8
  \quad(w\neq w').
\]
Define
\[
  A_\delta
  =\int_{[0,1]^{t_\delta^\star}}
      \{\mathcal K_\delta(v)\}^{2\gamma_\delta}\,dv>0.
\]
The disjointness of the bump supports and a change of variables give, for
all \(w,w'\),
\[
\begin{aligned}
  &c_XA_\delta\operatorname{Ham}(w,w')
   r_{\delta,n}^{2\overline\beta_\delta+t_\delta^\star}
  \leq\lVert\delta_w-\delta_{w'}\rVert_2^2\leq
   C_XA_\delta\operatorname{Ham}(w,w')
   r_{\delta,n}^{2\overline\beta_\delta+t_\delta^\star}.
\end{aligned}
\]
Therefore
\[
  \min_{\substack{w,w'\in\mathcal W_{\delta,n}\\w\neq w'}}
  \lVert\delta_w-\delta_{w'}\rVert_2^2
  \geq c r_{\delta,n}^{2\overline\beta_\delta}
  \geq c_\delta'\phi_n^\delta.
\]

Let \(P_{\delta,w}^{(n)}\) be the law of \(n\) independent
observations under
\[
  X\sim P_X,
  \qquad
  Y\mid X=x\sim N\{\delta_w(x),\sigma_T^2\}.
\]
The Gaussian likelihood calculation in Step~6 gives the exact identity
\[
  D_{\mathrm{KL}}
  \bigl(P_{\delta,w}^{(n)}\,\|\,P_{\delta,0}^{(n)}\bigr)
  =\frac{n}{2\sigma_T^2}\lVert\delta_w-\delta_0\rVert_2^2
  \leq Cn r_{\delta,n}^{2\overline\beta_\delta}.
\]
Consequently,
\[
  \frac{
    \max_{w\in\mathcal W_{\delta,n}}
    D_{\mathrm{KL}}
    (P_{\delta,w}^{(n)}\,\|\,P_{\delta,0}^{(n)})
  }{\log|\mathcal W_{\delta,n}|}
  \leq
  Cn r_{\delta,n}^{2\overline\beta_\delta+t_\delta^\star}
  \leq
  C R_\delta^{-(2\overline\beta_\delta+t_\delta^\star)}.
\]
Choose the fixed \(R_\delta\) sufficiently large that the last ratio is
at most \(1/16\).  Fano's inequality
\citep[Lemma~2.10]{tsybakov2009introduction}, in the explicit form
stated in Step~7, then gives
\[
  \inf_{\widetilde w}
  \max_{w\in\mathcal W_{\delta,n}}
  P_{\delta,w}^{(n)}(\widetilde w\neq w)
  \geq
  1-\frac{
      \max_wD_{\mathrm{KL}}
      (P_{\delta,w}^{(n)}\,\|\,P_{\delta,0}^{(n)})+\log2
    }{\log|\mathcal W_{\delta,n}|}
  \geq\frac12
\]
for all sufficiently large \(n\).

Finally, decode any Borel-measurable \(L_2(P_X)\)-valued estimator
\(\widehat g\) by choosing the first
\(w\in\mathcal W_{\delta,n}\) that minimizes
\(\lVert\widehat g-\delta_w\rVert_2\).  On a decoding error, the same
two-line triangle-inequality calculation as in Step~7 gives
\[
  \lVert\widehat g-\delta_w\rVert_2^2
  \geq\frac14
  \min_{\substack{v,v'\in\mathcal W_{\delta,n}\\v\neq v'}}
  \lVert\delta_v-\delta_{v'}\rVert_2^2.
\]
Combining this inequality with the displayed separation and the
explicit Fano error bound yields
\[
  \inf_{\widehat g}
  \max_{w\in\mathcal W_{\delta,n}}
  \mathbb E_w\lVert\widehat g-\delta_w\rVert_2^2
  \geq c_\delta\phi_n^\delta.
\]
This proves the second conclusion and completes the proof.
\end{proof}

\subsection{Auxiliary results for Section~\ref{sec:imputation}}
\label{sec:proof_auxiliary_imputation}

\begin{proof}[Proof of Lemma~\ref{lem:imp-error-propagation}]
For an independent test covariate \(X\), condition
\eqref{eq:imp-holder-blackbox} gives
\[
\{\widehat a_I(X)-a_I(X)\}^2
\le
L_b^2\|\widehat m(X)-m(X)\|^{2\alpha}.
\]
Taking expectation over \(\mathcal D_A\) and \(X\) gives the first inequality in
\eqref{eq:imp-error-propagation}.  The second follows from concavity of
\(t\mapsto t^\alpha\) and Jensen's inequality.

For the population discrepancy, Jensen's inequality gives
\[
\begin{aligned}
|a_I(x)-h(x)|^2
&\le
\mathbb E\left[
|\widehat f\{m(x)\}-\widehat f(Z)|^2
\mid X=x
\right]\\
&\le
L_b^2
\mathbb E\{\|Z-m(x)\|^{2\alpha}\mid X=x\}.
\end{aligned}
\]
Integration proves \eqref{eq:imp-jensen-gap}.
Equation~\eqref{eq:imp-residual-decomposition} and
\(\|u+v\|_2^2\le2\|u\|_2^2+2\|v\|_2^2\) give
\eqref{eq:imp-residual-magnitude}.
\end{proof}

\begin{proof}[Proof of Lemma~\ref{lem:imp-second-order-gap}]
Fix \(x\) outside a \(P_X\)-null set on which the stated conditional
quantities are defined.  For \(z\in\mathcal Z_0\), put \(w=z-m(x)\).
The line segment from \(m(x)\) to \(z\) is contained in \(\mathcal Z_0\).
Taylor's formula with integral remainder gives
\[
\begin{aligned}
\widehat f\{m(x)+w\}
={}&
\widehat f\{m(x)\}
+\nabla\widehat f\{m(x)\}^\top w
+\tfrac12w^\top\nabla^2\widehat f\{m(x)\}w
+R_x(w),
\end{aligned}
\]
where
\[
R_x(w)
=
\int_0^1(1-t)w^\top
\left[\nabla^2\widehat f\{m(x)+tw\}
-\nabla^2\widehat f\{m(x)\}\right]w\,dt.
\]
Consequently,
\[
|R_x(w)|
\le
L_2\|w\|^{2+\nu}\int_0^1(1-t)t^\nu\,dt
=
\frac{L_2}{(1+\nu)(2+\nu)}\|w\|^{2+\nu}.
\]
Set \(w=Z-m(x)\) and take conditional expectation given \(X=x\).
The linear term is zero because
\(\mathbb E\{Z-m(x)\mid X=x\}=0\).  Moreover,
\[
\mathbb E\{
\{Z-m(x)\}^\top\nabla^2\widehat f\{m(x)\}\{Z-m(x)\}
\mid X=x
\}
=
\operatorname{tr}\{\nabla^2\widehat f(m(x))\Sigma(x)\}
\]
and \(r_I(x)=\mathbb E\{R_x(Z-m(x))\mid X=x\}\).  The identity and the
remainder bound in \eqref{eq:imp-second-order-gap} follow.
\end{proof}

\end{document}